\documentclass[twoside,11pt]{article}

\usepackage[preprint]{jmlr2e}

\usepackage{amsmath}
\usepackage{multirow}
\usepackage{arydshln}
\usepackage[]{algorithm2e}
\usepackage{subcaption}
\usepackage{booktabs}
\usepackage{xcolor}
\usepackage{float}
\usepackage{adjustbox}
\usepackage{enumitem}
\usepackage[normalem]{ulem}

\jmlrheading{}{2026}{}{}{}{}{Ghosh, Changala, Kanakamalla, and Harikumar}
\makeatletter
\def\ps@jmlrtps{\let\@mkboth\@gobbletwo
  \def\@oddhead{}\def\@evenhead{}\def\@evenfoot{}
  \def\@oddfoot{\parbox[t]{\textwidth}{\raggedright \scriptsize \copyright 2026 Ghosh, Changala, Kanakamalla, and Harikumar\hfill}}}
\makeatother

\ShortHeadings{Stable Dimensionality Reduction via Isometric Centroid-Encoder}{Ghosh, Changala, Kanakamalla and Harikumar}
\firstpageno{1}

\begin{document}

\title{Stable Dimensionality Reduction via Isometric Centroid-Encoder}

\author{\name Tomojit Ghosh \email tomojit.ghosh@wright.edu \\
3640 Colonel Glenn Hwy, Dayton, OH 45435-0001, USA\\
       \addr Department of Computer Science and Engineering\\
       Wright State University\\
       \name Siva Rama Krishna Prasad Changala \email changala.2@wright.edu \\
       3640 Colonel Glenn Hwy, Dayton, OH 45435-0001, USA\\
       \addr Department of Computer Science and Engineering\\
       Wright State University\\
       \name Prathyusha Kanakamalla \email kanakamalla.2@wright.edu \\
       3640 Colonel Glenn Hwy, Dayton, OH 45435-0001, USA\\
        \addr Department of Computer Science and Engineering\\
       Wright State University\\
       \name Harigovind Harikumar \email harikumar.2@wright.edu \\
       3640 Colonel Glenn Hwy, Dayton, OH 45435-0001, USA\\
       \addr Department of Computer Science and Engineering\\
       Wright State University
}

\maketitle

\begin{abstract}%
Many supervised dimensionality reduction methods promote intra-class compactness and inter-class separation but do not explicitly constrain class geometry, leading to variability in the relative arrangement of classes across independent runs. Such variability limits the reproducibility and interpretability of low-dimensional embeddings. We propose the Isometric Centroid-Encoder (ICE), a supervised dimensionality reduction framework in which class centroids are learned directly in the latent space while preserving a prescribed pairwise class-centroid geometry. Unlike existing centroid- or prototype-based methods that primarily encourage intra-class compactness, ICE explicitly regularizes the relative arrangement of latent class centroids. Unlike multidimensional scaling and many metric-learning approaches that preserve sample-level relationships, ICE operates on class centroids, preserving inter-class geometry without requiring large numbers of sample-pair or triplet constraints. We further show that the intrinsic dimensionality required to represent the centroid-distance configuration of $M$ classes is at most $M-1$, explaining the observed saturation of distance-preservation measures as the latent dimensionality increases. To quantify embedding reproducibility, we introduce the Centroid Neighborhood Stability (CNS) metric, which measures the consistency of class-level neighborhood topology across independent runs and complements existing embedding quality measures. Experiments on image and biological datasets demonstrate that ICE produces stable embeddings with improved preservation of class-centroid geometry while remaining computationally efficient. Additional analyses of optimization dynamics, sensitivity, and preprocessing further provide practical insights into latent dimensionality and explicit geometry modeling for supervised dimensionality reduction.
\end{abstract}

\begin{keywords}
Supervised representation learning, Isometric Centroid-Encoder, Isometric embedding, Latent space geometry, Dimensionality reduction, Intrinsic dimensionality, Neural embedding
\end{keywords}

\section{Introduction}
\label{intro}
Dimensionality reduction is a fundamental component of pattern recognition, particularly in the analysis of high-dimensional data where visualization, classification, and interpretation are of primary interest \citep{zahir2026novel,artemiou2021real}. In the supervised setting, numerous methods have been developed that exploit class labels to improve low-dimensional representations \citep{ghosh2024linear}. Existing approaches typically emphasize one or more desirable properties, including class separation \citep{he2021representative}, intra-class compactness \citep{ghosh2024linear,zhou2023latent}, local neighborhood preservation \citep{wei2022neighborhood}, or discriminative metric learning through pairwise and triplet relationships \citep{duan2025deep}. More recently, prototype- and proxy-based methods \citep{ghosh2024linear,chen2025fourier} have been proposed to improve scalability by replacing sample-level comparisons with class representatives.

Despite their success, most supervised dimensionality reduction methods are evaluated primarily through classification accuracy, neighborhood preservation, or visual separation of classes. Comparatively little attention has been paid to the \emph{stability} of the learned embedding. In practice, repeated executions of the same model often produce substantially different low-dimensional layouts due to random initialization, stochastic optimization, and model architecture choices \citep{kobak2021initialization}. While these embeddings may exhibit similar classification performance, the relative arrangement of classes can vary significantly across runs \citep{gursoy2023alignment}. Such variability limits interpretability and reproducibility, particularly in scientific applications where the geometric relationships among classes are often used to draw conclusions about the underlying data.

Several approaches have attempted to preserve geometric structure during embedding. Classical multidimensional scaling (MDS) \citep{torgerson1952} and its deep extensions \citep{kamyab2024deep} preserve pairwise distances among samples, while neighborhood-based methods maintain local relationships through graph or probability-based objectives \citep{wei2022neighborhood,meng2023class}. Similarly, metric-learning methods based on contrastive \citep{duan2025deep}, triplet \citep{boutros2022self}, or pairwise losses \citep{szubert2019structure} encourage discriminative representations by operating on sample-level relationships. However, these methods generally require large numbers of pairwise or triplet comparisons and are not explicitly designed to produce reproducible class-level geometry across independent runs. Furthermore, existing evaluation protocols rarely assess whether the learned embedding remains stable under repeated training.

In this paper, we approach supervised dimensionality reduction from a different perspective. Rather than focusing solely on class separation or neighborhood preservation, we explicitly target the reproducibility of class-level geometry. \textit{Our central hypothesis is that explicitly constraining the relative arrangement of class centroids can substantially improve the stability and reproducibility of supervised embeddings.} To address this issue, we propose the \emph{Isometric Centroid-Encoder (ICE)}, a stability-aware supervised embedding framework that combines centroid-based representation learning with preservation of pairwise class-centroid geometry. By operating on class centroids rather than individual samples, ICE provides a compact geometric representation of inter-class relationships while avoiding the large computational burden associated with sample-level distance preservation. A conceptual comparison highlighting the distinctions between ICE and existing families of supervised dimensionality reduction methods is provided in Table~\ref{supp:tab:related_comparison} in Section~\ref{supp:sec:supp_related}.

To quantify embedding reproducibility, we introduce \emph{Centroid Neighborhood Stability (CNS)}, a metric that measures the consistency of class-level neighborhood structure across multiple independent runs. Unlike commonly used measures such as classification accuracy, stress, distance correlation, trustworthiness, and continuity, which assess embedding quality within a single training run, CNS explicitly evaluates the reproducibility of class-level neighborhood structure across independent runs. In this sense, CNS complements existing geometry-preservation and neighborhood-preservation measures by quantifying run-to-run stability. In addition, we show that the centroid geometry of an $M$-class problem can be represented within a latent space of dimension at most $M-1$, providing a practical guideline for selecting the latent dimensionality and explaining the observed saturation of distance-preservation measures beyond a certain embedding dimension.
The core contributions of this work are summarized as follows:
\begin{enumerate}[topsep=0pt,itemsep=0pt,parsep=0pt]
    \item We identify embedding reproducibility as a fundamental but largely overlooked objective of supervised dimensionality reduction and formulate stability as an explicit optimization objective.
    \item We propose ICE, a stability-aware supervised embedding framework that preserves class-level geometry through centroid-distance regularization while avoiding sample-level pairwise or triplet optimization.
    \item We introduce Centroid Neighborhood Stability (CNS), a metric for quantifying the run-to-run consistency of class neighborhood structure, complementing existing accuracy-based and distance-based evaluation measures.
    \item We derive practical guidance for selecting the latent dimension based on the intrinsic geometry of class centroids, providing a principled interpretation of why geometry-preservation measures saturate as the embedding dimension increases.
    \item We demonstrate experimentally on image and high-dimensional biological datasets that ICE produces more stable and geometrically consistent embeddings while maintaining competitive classification performance.
\end{enumerate}

The remainder of the paper is organized as follows. Section~\ref{lit_review} reviews related work. Section~\ref{ICE} presents the proposed ICE framework. Section~\ref{sec:cns} introduces the newly proposed robustness metric CNS. Section~\ref{exp_setup} describes the experimental setup and evaluation metrics followed by analysis of proposed ICE model in Section~\ref{sec:ICE_analysis}. Section~\ref{sec:benchmark} presents the benchmarking results and discussion. Finally, Section~\ref{conc_future_work} concludes the paper.

\section{Related Work}
\label{lit_review}
Dimensionality reduction (DR) has a long history in pattern recognition and remains an active area of research \citep{zahir2026novel,cai2025multi,ghosh2022supervised}. In supervised DR, the objective is not only to reduce dimensionality but also to preserve class-relevant structure in the embedding space. Existing methods achieve this through class separation, neighborhood preservation, metric learning, or prototype-based supervision. Throughout this work, \emph{class-level geometry} refers to the pairwise distance configuration among class centroids. Since ICE is motivated by the robustness and reproducibility of this geometry, we review prior work from this perspective.

\noindent{\textbf{Supervised linear dimensionality reduction:}}
Fisher's linear discriminant analysis (LDA) \citep{fisher36lda} maximizes between-class separation while minimizing within-class scatter. Numerous variants address singularity, robustness, and high-dimensional settings \citep{he2021representative,witten2011penalized}. Related approaches incorporate supervision into PCA through dependence analysis, feature selection, or multi-objective optimization \citep{bair2006,Barshan:2011:SPC:1950989.1951174,ritchie2019supervised}. The recently proposed Supervised Linear Centroid-Encoder (SLCE) \citep{ghosh2024linear} reconstructs class centroids via orthogonal projection and can be viewed as a supervised analogue of PCA. Although these methods improve class discrimination, they do not explicitly preserve a prescribed pairwise geometry among class centroids.

\noindent{\textbf{Supervised manifold and neighborhood-preserving methods:}}
Neighborhood-based approaches preserve local sample relationships using supervised variants of Isomap \citep{cheng2012}, Laplacian Eigenmaps \citep{raducanu2012}, locality preserving projection \citep{wong2012supervised}, class-constrained t-SNE (csTSNE) \citep{meng2023class}, and supervised UMAP \citep{mcinnes2018umap}. While these methods often produce visually meaningful embeddings and strong local class separation, their objectives are defined at the sample-neighborhood level rather than on class-centroid relationships. Consequently, the relative arrangement of class centroids may vary across independent runs due to stochastic optimization and random initialization \citep{bohm2022attraction,kobak2021initialization}.

\noindent{\textbf{Deep supervised embedding and metric learning:}}
Deep embedding methods extend classical objectives such as Neighborhood Component Analysis (NCA) \citep{Goldberger2004NCA}, MCML \citep{globerson2006metric}, and t-SNE \citep{tSNE} to nonlinear settings \citep{DBLP:journals/jmlr/Maaten09,salakhutdinov2007learning,min2010deep}. Modern metric learning further learns discriminative representations using contrastive, pairwise, triplet, or proxy-based objectives. Related approaches include supervised autoencoders \citep{le2018supervised}, adversarial autoencoders \citep{makhzani2015}, and the Super-Encoder (SE) \citep{kamal2022super}. Although these methods optimize sample-level similarities, reconstruction objectives, or proxy-based classification, they do not explicitly preserve a prescribed class-centroid geometry.

\noindent{\textbf{Centroid-based supervised embeddings:}}
Centroid-based methods represent each class by a prototype. In the Centroid-Encoder (CE) \citep{ghosh2022supervised}, samples are mapped to their corresponding ambient-space class centroids, encouraging intra-class compactness through centroid reconstruction instead of input reconstruction. However, centroid reconstruction alone does not constrain pairwise relationships among class centroids, allowing the global centroid configuration to vary across runs.

\noindent{\textbf{Geometry preservation and motivation for ICE:}}
Classical multidimensional scaling (MDS) \citep{torgerson1952,kruskal1964} preserves pairwise distances among samples to recover global geometry. ICE adopts the same principle at the class-centroid level by regularizing the centroid-distance matrix rather than sample distances. Consequently, ICE combines centroid-based supervision with explicit preservation of class-level geometry while avoiding the large number of sample pairs or triplets required by many metric-learning methods. Unlike proxy-based approaches, where class representatives primarily serve as discriminative anchors, ICE treats latent centroids as geometric entities and explicitly preserves their prescribed pairwise distance configuration. To the best of our knowledge, no existing supervised dimensionality reduction method jointly learns latent class centroids while explicitly preserving a prescribed class-centroid geometry.

\subsection{Relationship of ICE to Existing Representation Learning Methods}
\label{supp:sec:supp_related}
Although ICE shares certain high-level similarities with centroid-based learning, metric learning, and multidimensional scaling, its objective differs fundamentally from these approaches. Below table~\ref{supp:tab:related_comparison} provides a conceptual comparison that clarifies the design philosophy of ICE and its relationship to existing supervised representation learning methods.
\begin{table}[ht!]
\small
\adjustbox{max width=\linewidth}{\begin{tabular}{p{3cm} p{2.5cm} p{3.2cm} p{3cm} p{4cm}}
\hline
\textbf{Method Family} &
\textbf{Unit of Comparison} &
\textbf{Class Relationship Modeled} &
\textbf{Primary Objective} &
\textbf{Difference from ICE} \\
\hline

Contrastive / Triplet Learning &
Sample pairs or triplets &
Binary attraction-repulsion between samples &
Discriminative representation learning and retrieval &
Does not explicitly preserve a reference geometry among classes; all negative classes are generally treated as dissimilar. \\

Proxy-Based Metric Learning &
Samples and learned class proxies &
Sample-to-proxy similarity &
Efficient metric learning and classification &
Proxies serve as discriminative anchors, but their mutual geometry is not constrained to match a prescribed class-level structure. \\

Prototypical Networks &
Class prototypes &
Distance to class prototypes &
Few-shot classification &
Uses class representatives for classification but does not preserve pairwise prototype geometry across classes. \\

MDS / Distance-Preserving Embedding &
Sample distances &
Global sample geometry &
Distance preservation and visualization &
Preserves pairwise relationships among samples rather than supervised class-level geometry. \\

Centroid-Encoder (CE) &
Class centroids &
Within-class compactness &
Centroid reconstruction &
Does not explicitly preserve relationships among class centroids. \\

\textbf{ICE (Proposed)} &
\textbf{Class centroids} &
\textbf{Pairwise class-centroid geometry} &
\textbf{Stable and interpretable supervised embedding} &
\textbf{Simultaneously enforces intra-class compactness and preservation of class-level geometry through centroid-distance regularization.} \\

\hline
\end{tabular}}
\caption{Comparison of ICE with representative supervised representation learning paradigms, highlighting the optimization objective and the geometric relationships explicitly modeled by each method.}
\label{supp:tab:related_comparison}
\end{table}

The comparison highlights that existing methods primarily optimize sample-level discrimination, neighborhood preservation, or sample-level geometry. In contrast, ICE explicitly preserves a prescribed pairwise distance configuration among dynamically learned class centroids while simultaneously enforcing intra-class compactness. Consequently, the objective of ICE is not only discriminative representation learning but also the reproducible preservation of class-level geometry across independent training runs.

\section{Isometric Centroid-Encoder (ICE)}
\label{ICE}
\begin{figure}[!ht]
    \centering
    \includegraphics[width=0.95\linewidth]{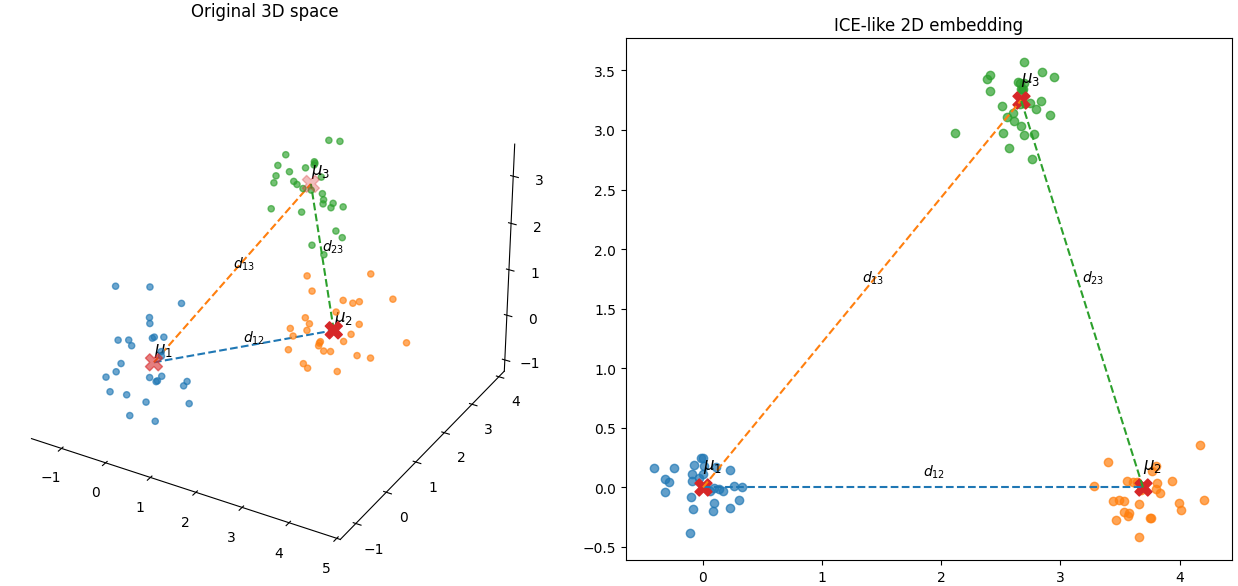}
    \caption{Illustration of Isometric Centroid-Encoder (ICE). Samples are dispersed around class centroids in 3D, with pairwise distances encoding class geometry. ICE maps 
 data to a 2D space by collapsing samples to centroids while preserving these distances, producing compact classes with retained geometric structure.}
 \vspace{0.25 cm}
    \label{fig:ICE_illustration}
\end{figure}
\subsection{Problem Setup:} 
Let $X \in \mathbb{R}^{d \times n}$ denote a data matrix consisting of $n$ samples, where each sample $x_i \in \mathbb{R}^d$. Assume the dataset contains $M$ classes $\{C_j\}_{j=1}^M$, where $I_j$ denotes the index set of samples belonging to class $C_j$. We seek a nonlinear mapping $g : \mathbb{R}^d \rightarrow \mathbb{R}^p$, with $p \ll d$, parameterized by $\theta$, which maps samples into a low-dimensional latent space. 
Let, $z_i = g(x_i;\theta) \in \mathbb{R}^p $
denote the latent representation of sample $x_i$.

\noindent{\textbf{Latent class centroids:}}
For each class $C_j$, we define its \emph{latent-space centroid} as
\begin{equation}
\mu_j = \frac{1}{|C_j|} \sum_{i \in I_j} z_i
      = \frac{1}{|C_j|} \sum_{i \in I_j} g(x_i;\theta),
\label{eq:latent_centroid}
\end{equation}
where $|C_j|$ denotes the cardinality of class $C_j$. 
In practice, the latent centroids $\{\mu_j\}_{j=1}^M$ are recomputed at each training epoch.

\noindent{\textbf{Centroid distance matrix:}}
Let $D \in \mathbb{R}^{M \times M}$ denote the pairwise centroid distance matrix computed in the ambient space, with entries $D_{jk} = \mathrm{dist}(c_j, c_k)$, where $c_j, c_k$ are the ambient-space centroids of class $C_j$ and $C_k$ correspondingly, and $\mathrm{dist}(\cdot,\cdot)$ denotes the Euclidean distance. 
The matrix $D$ serves as a reference geometry that we seek to preserve in the latent space.

\subsection{Objective Function:}
The ICE is designed to satisfy the following two objectives:\\
\noindent \textbf{1. Centroid consistency:} samples belonging to the same class should be close to their latent class centroid, i.e., 
\[g(x_i;\theta) \approx \mu_j \quad \text{for all } i \in I_j;\]

\vspace{0.15 cm}
\noindent \textbf{2. Centroid isometry:} pairwise distances between latent class centroids should preserve the ambient-space centroid geometry, i.e.,
 \[D_{jk} \approx \mathrm{dist}(\mu_j, \mu_k).\]

The first objective enforces intra-class compactness in the latent space, while the second objective encourages an approximately isometric embedding of class centroids. It is important to note that, unlike many supervised dimensionality reduction methods that draw inspiration from LDA-type objectives and explicitly enforce class separation in the embedding space, our approach takes a different route. Rather than pushing class centroids apart, we seek to \emph{transport the ambient-space class geometry} into the latent space by preserving pairwise centroid relationships. As a result, ICE does not impose an artificial separation between classes, but instead maintains the intrinsic similarity structure among them.

\noindent{\textbf{ICE cost function:}}
Combining these objectives yields the proposed \emph{Isometric Centroid-Encoder (ICE)} loss:
\vspace{0.1 cm}
\begin{equation}
\begin{aligned}
\mathcal{L}_{\mathrm{ICE}}(X,\theta)
=
\underbrace{
\frac{1}{2} \sum_{j=1}^M \sum_{i \in I_j}
\| g(x_i;\theta) - \mu_j \|_2^2
}_{\text{Centroid-consistency }(\mathcal{L}_{\mathrm{cons}})}
+
\underbrace{
\frac{\lambda}{2} \sum_{\substack{j,k \\ j \neq k}}
\big\| D_{jk} - \mathrm{dist}(\mu_j, \mu_k) \big\|_2^2
}_{\text{Centroid-isometry }(\mathcal{L}_{\mathrm{iso}})}.
\end{aligned}
\label{equation:MCELoss}
\end{equation}

In practice, the mapping $g(\cdot;\theta)$ is implemented as the encoder of a neural network, and the loss in Eq.~\eqref{equation:MCELoss} is minimized using stochastic gradient-based optimization. During each optimization step, the latent class centroids are recomputed from the current encoder outputs and used to evaluate both the centroid-consistency and centroid-isometry objectives before updating the encoder parameters. This formulation transfers the geometric class structure from the ambient space into a robust and stable low-dimensional latent space while enforcing intra-class compactness.
Although the present work uses Euclidean distances between ambient-space class centroids, the ICE formulation is not restricted to this choice. Any user-defined symmetric class-distance matrix may be used to define the reference geometry, including distances computed after feature preprocessing or from domain-specific similarity measures.

The proposed ICE model is trained using mini-batch stochastic optimization. The key steps are summarized in Algorithm~\ref{supp:alg:ice} which highlights that centroids are computed dynamically in the latent space and influence the optimization through both consistency and geometry-preserving terms.

\begin{algorithm}[!ht]
\begin{center}
\fbox{
\begin{minipage}{0.85\linewidth}

\KwIn{Training data $\{(x_i, y_i)\}_{i=1}^n$, encoder $g(\cdot;\theta)$, regularization parameter $\lambda$}
\KwOut{Learned encoder parameters $\theta$}

Compute class centroids in the ambient space:
$c_j = \frac{1}{|I_j|} \sum_{i \in I_j} x_i$\;

Compute centroid distance matrix:
$D_{jk} = \mathrm{dist}(c_j, c_k)$\;

Initialize encoder parameters $\theta$\;

\For{each training epoch}{
    \For{each mini-batch $\mathcal{B}$}{
        Compute latent representations:
        $z_i = g(x_i; \theta)$ for all $x_i \in \mathcal{B}$\;

        Compute class centroids in latent space:
        $\mu_j = \frac{1}{|\mathcal{B}_j|} \sum_{i \in \mathcal{B}_j} z_i$\;

        Compute centroid consistency loss, 
        $\mathcal{L}_{\text{cons}}$ from Equation~\eqref{equation:MCELoss}

        Compute centroid distance loss, 
        $\mathcal{L}_{\text{iso}}$ from Equation~\eqref{equation:MCELoss}

        Compute total loss:
        $\mathcal{L} = \mathcal{L}_{\text{cons}} + \lambda \mathcal{L}_{\text{iso}}$\;

        Update $\theta$ using Adam optimizer\;
    }
}
\end{minipage}
}
\end{center}
\caption{Training Procedure for ICE}
\label{supp:alg:ice}
\end{algorithm}

\subsection{Comparison with vanilla Centroid-Encoder:}

The proposed ICE framework differs from the classical Centroid-Encoder (CE)~\citep{ghosh2022supervised} in three fundamental aspects (Table~\ref{tab:ce_vs_ice}). First, CE maps each sample to a \emph{fixed} class centroid computed in the ambient space prior to training, whereas ICE learns \emph{latent-space centroids} that are dynamically recomputed from the current encoder outputs during optimization. Second, CE encourages only intra-class compactness and does not explicitly constrain the relative arrangement of class centroids in the embedding space. In contrast, ICE introduces a centroid-isometry regularization term that preserves the pairwise relationships among class centroids, thereby transferring the ambient-space class geometry into the latent space while simultaneously enforcing intra-class compactness. Finally, CE relies on an encoder-decoder architecture to reconstruct ambient-space centroids, whereas ICE operates entirely in the latent space using an encoder-only architecture. Consequently, ICE requires approximately half as many trainable parameters, making it attractive for high-dimensional, low-sample-size settings.

\vspace{0.1 cm}
\begin{table}[!ht]
\centering
\small
\setlength{\tabcolsep}{4pt}
\begin{tabular}{lcc}
\toprule
\textbf{Property} & \textbf{Centroid-Encoder (CE)} & \textbf{ICE} \\
\midrule
Centroid location 
& Fixed in ambient space 
& Learned in latent space \\

Target mapping 
& $x_i \rightarrow c_j \in \mathbb{R}^d$ 
& $x_i \rightarrow \mu_j \in \mathbb{R}^p$ \\

Centroid update 
& Pre-computed 
& Updated during training \\

Embedding constraint 
& Not enforced 
& Intra-class compactness + centroid isometry \\

Global class geometry 
& Not enforced 
& Explicitly preserved \\

Objective perspective 
& Centroid reconstruction 
& Centroid encoding + geometry preservation \\

Architecture 
& Encoder + decoder 
& Encoder only \\

Model complexity 
& Higher (due to decoder) 
& Lower \\

\bottomrule
\end{tabular}
\vspace{0.1 cm}
\caption{Comparison between Centroid-Encoder (CE) and the proposed ICE framework.}
\vspace{0.75 cm}
\label{tab:ce_vs_ice}
\end{table}

\vspace{0.25 cm}
\subsection{Theoretical Analysis}
\vspace{0.1 cm}
A key empirical observation across multiple datasets is that increasing the latent dimension $p$ beyond a certain threshold does not improve centroid stress (Kruskal stress) or Pearson correlation. This behavior can be explained by a fundamental property of Euclidean embeddings.
\vspace{0.1 cm}
\begin{lemma}[Intrinsic dimensionality of centroid geometry]
\label{lemma:centroid_dim}
Let $\{\mu_j\}_{j=1}^M \subset \mathbb{R}^p$ denote the latent class centroids produced by the encoder, where 
$
\mu_j = \frac{1}{|C_j|} \sum_{i \in I_j} g(x_i;\theta).
$
Then the affine span of $\{\mu_j\}_{j=1}^M$ has dimension at most 
$M-1$. Consequently, no embedding of dimension greater than $M-1$ can further improve the preservation of pairwise centroid distances.
\end{lemma}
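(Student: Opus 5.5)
The plan has two parts. First, a purely linear-algebraic statement about affine spans. Second, an isometry-invariance argument that turns it into the "no further improvement" claim.

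For the first part, fix $\mu_1$ as a base point and write every point of the affine span as
\[
\mu_1 + \sum_{j=2}^M \alpha_j (\mu_j - \mu_1).
\]
The affine span is therefore $\mu_1 + V$, where $V = \mathrm{span}\{\mu_2-\mu_1,\dots,\mu_M-\mu_1\}$. Since $V$ is spanned by $M-1$ vectors, $\dim V \le M-1$, and $\dim V\le p$ trivially. Nothing about the encoder $g$ or the averaging in Eq.~\eqref{eq:latent_centroid} is used beyond the fact that there are $M$ points.

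For the second part, the key observation is that the centroid-isometry term $\mathcal{L}_{\mathrm{iso}}$ in Eq.~\eqref{equation:MCELoss} depends on $\{\mu_j\}$ only through the pairwise Euclidean distances $\|\mu_j-\mu_k\|_2$. These distances are invariant under rigid motions of $\mathbb{R}^p$. Let $r=\dim V\le M-1$. I would choose an orthonormal basis $Q\in\mathbb{R}^{p\times r}$ of $V$ and map $\mu_j \mapsto Q^\top(\mu_j-\mu_1)\in\mathbb{R}^r$. On $\mu_1+V$ this map is an isometry, because $Q^\top$ restricted to $V$ preserves norms. Consequently, any configuration of $M$ centroids achievable in $\mathbb{R}^p$ with $p>M-1$ is realized with identical pairwise distances in $\mathbb{R}^{M-1}$, embedding $\mathbb{R}^r\subseteq\mathbb{R}^{M-1}$ by zero-padding. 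Conversely, any configuration in $\mathbb{R}^{M-1}$ embeds in $\mathbb{R}^p$ by zero-padding. The set of achievable distance matrices, and hence the infimum of $\mathcal{L}_{\mathrm{iso}}$ and of any distance-based measure such as Kruskal stress or Pearson correlation, is therefore the same for every $p\ge M-1$.

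I would also note the same fact for the reference geometry. The ambient centroids $c_1,\dots,c_M$ lie in an affine subspace of dimension at most $M-1$. So $D$ itself is exactly realizable in $\mathbb{R}^{M-1}$, with zero stress attainable at $p=M-1$. This could be checked via classical MDS: the double-centered Gram matrix $-\tfrac12 JD^{(2)}J$ is positive semidefinite of rank at most $M-1$.

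The main obstacle is making the word "consequently" precise. The lemma is about the geometry of the centroid configuration, not about what the encoder can reach at a given $p$. I would therefore state the conclusion at the level of centroid configurations: the optimal value of the centroid-distance objective over configurations in $\mathbb{R}^p$ is non-increasing in $p$ and constant for $p\ge M-1$. I would add a remark that, for a neural encoder, the realizability of a given configuration is a separate capacity question which the lemma does not address.
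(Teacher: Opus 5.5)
Your proposal is correct and follows the paper's argument. The paper centers at the mean $\bar\mu$ and uses $\sum_j \tilde\mu_j = 0$, whereas you use $\mu_1$ as base point and the $M-1$ differences; both then identify the affine span isometrically with $\mathbb{R}^r$, $r \le M-1$, and your zero-padding converse and configuration-level monotonicity statement (with the encoder-capacity caveat) make precise the ``consequently'' clause that the paper's proof leaves implicit.
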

\vspace{0.25 cm}
\begin{proof}
Let $\bar{\mu} = \frac{1}{M}\sum_{j=1}^M \mu_j$
denote the mean of the centroids, and define the centered vectors $\tilde{\mu}_j = \mu_j - \bar{\mu}, \qquad j=1,\dots,M.$
By construction, $\sum_{j=1}^M \tilde{\mu}_j = 0,$
so the centered vectors are linearly dependent. Hence the linear span of $\{\tilde{\mu}_j\}_{j=1}^M$ has dimension at most $M-1$. Since the affine span of $\{\mu_j\}_{j=1}^M$ is obtained by translating the linear span of the centered vectors, it also has dimension at most $M-1$. Therefore, the centroid configuration lies in an affine subspace of dimension at most $M-1$. Identifying this affine subspace with $\mathbb{R}^{r}$, where $r \le M-1$, yields an isometric representation of the centroids that preserves all pairwise Euclidean distances.
\end{proof}
\vspace{0.1 cm}
\noindent
Lemma~\ref{lemma:centroid_dim} implies that the centroid-distance regularizer in Eq.~\eqref{equation:MCELoss} cannot benefit from latent dimensions beyond $M-1$. 
In particular, once $p \ge M-1$, further increasing the latent dimension does not reduce the optimal value of the stress term. 
This provides a theoretical explanation for the empirical saturation of Kruskal stress and Pearson correlation observed in our experiments.
\vspace{0.1 cm}
\subsection{Evolution and Stabilization of Latent Class Centroids}
\label{subsec:centroid_convergence}
\vspace{0.1 cm}
Unlike conventional prototype-based methods, the latent class centroids in ICE are not fixed during optimization. Instead, they are dynamically recomputed from the encoder outputs after each training epoch and therefore evolve jointly with the learned representation. An important question is whether these dynamically updated centroids progressively stabilize or continue to drift throughout training.

\begin{figure}[!ht]
    \centering
    \includegraphics[width=1.00\linewidth]{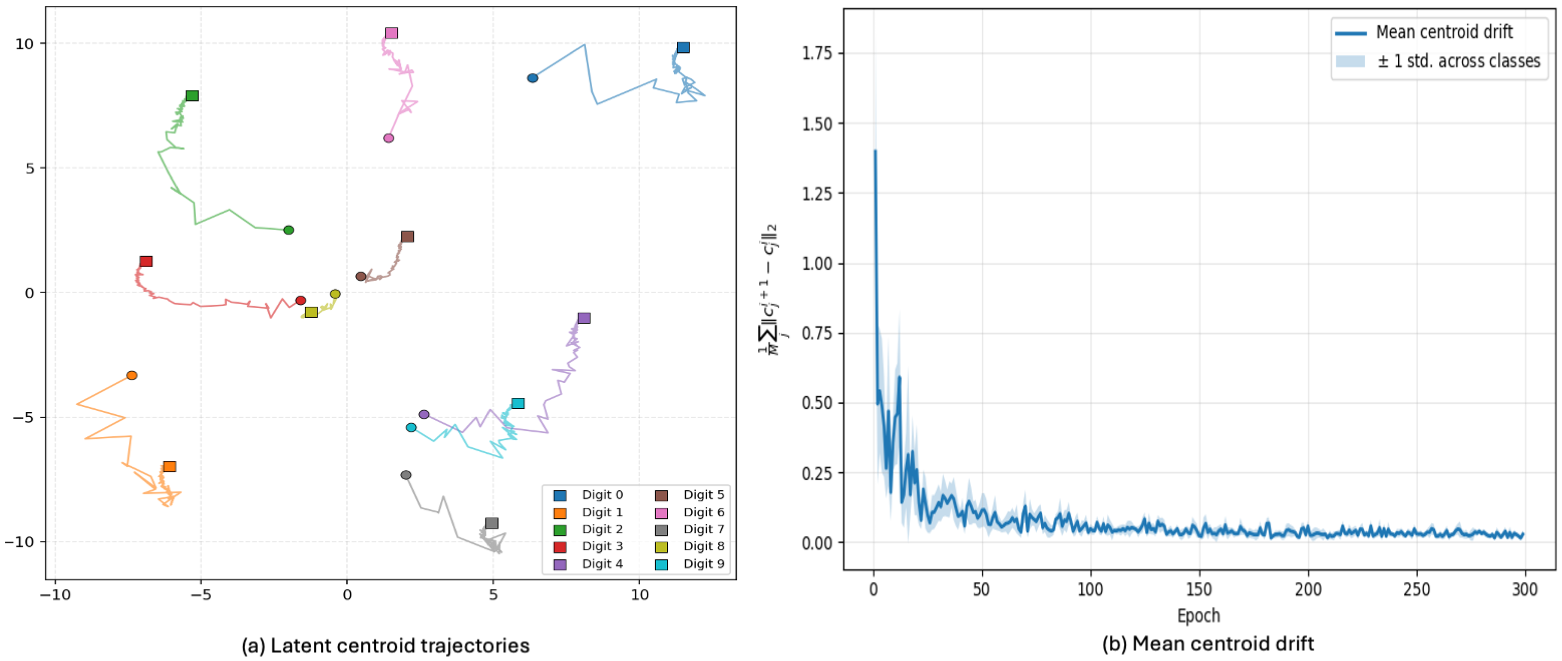}
    \vspace{1.0 cm}
    \caption{Optimization dynamics of ICE on the MNIST dataset. (a) Latent centroid trajectories in the two-dimensional embedding space. Circles and squares denote the initial and final centroid locations, respectively. Intermediate points illustrate the centroid evolution during training. (b) Mean epoch-to-epoch centroid drift averaged across all classes. The shaded region indicates one standard deviation across classes.}
    \vspace{0.25 cm}
    \label{fig:trajectory_and_drift}
\end{figure}

Figure~\ref{fig:trajectory_and_drift} illustrates the evolution of the latent class centroids during ICE optimization on MNIST. Panel (a) shows the centroid trajectories in the two-dimensional embedding space, while panel (b) quantifies the corresponding average epoch-to-epoch centroid drift,
$
\frac{1}{M}\sum_{j=1}^{M}\left\|c_j^{(i+1)}-c_j^{(i)}\right\|_2,
$
where $c_j^{(i)}$ denotes the latent centroid of class $j$ after epoch $i$. The shaded region in panel (b) represents one standard deviation across classes and therefore summarizes the variability of centroid movement among the different classes.

At the beginning of training, several class centroids are located relatively close because of the random initialization of the encoder. As optimization proceeds, they progressively reorganize into distinct locations while approaching the prescribed class-level geometry. For example, digit classes whose centroids are initially close, such as 4 and 9 or 3, 5, and 8, become increasingly separated during training. The centroid-drift curve provides a quantitative view of this behavior. The average drift is largest during the early stages of optimization, reflecting substantial changes in the latent representation as the class geometry is established. The drift then decreases substantially and reaches a small residual level in later epochs, indicating progressive stabilization of the dynamically recomputed latent centroids. Small nonzero fluctuations remain because ICE is trained using stochastic mini-batch optimization.

Together, the trajectory and drift analyses show that the latent centroids evolve toward a stable class configuration rather than exhibiting persistent oscillation or collapse. This observation is also consistent with the ablation analysis in Section~\ref{subsec:ablation}, which identifies the centroid-isometry term as the primary component responsible for preserving and stabilizing class-centroid geometry. 

A similar analysis on FMNIST, yielded the same behavior of latent centroid trajectories and drift as observed on MNIST. As optimization proceeds, the centroids progressively reorganize into distinct regions while preserving the prescribed class-level geometry. We can observe the classes Dress, T-shirt, Shirt, Coat, Pullover or Sandal, Boot, Ankle boot are becoming more distant as the training progresses. Figure~\ref{supp:fig:fmnist_drift} reports the corresponding results for FMNIST.
\begin{figure}[!ht]
    \centering
    \includegraphics[width=1.0\linewidth]{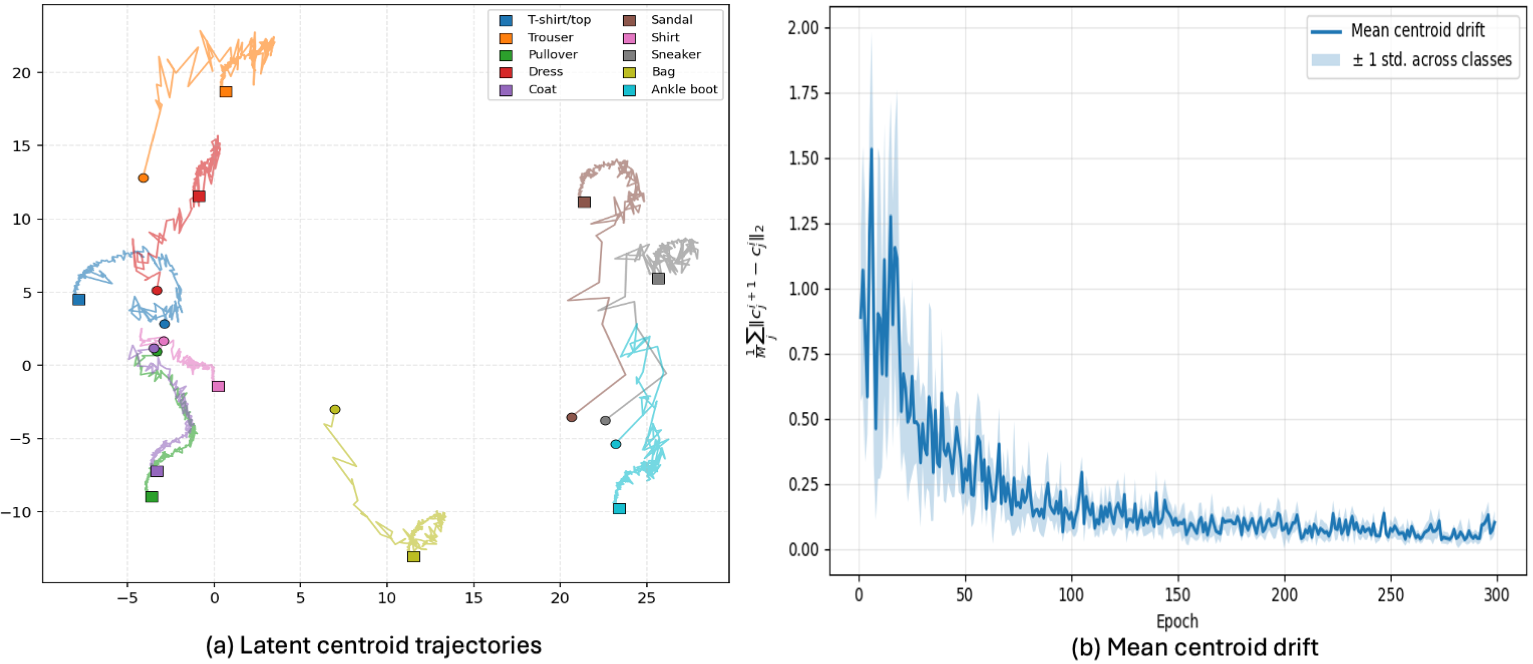}
    \caption{
    Optimization dynamics of ICE on the FMNIST dataset. (a) Evolution of the latent class centroids in the two-dimensional embedding space throughout training. (b) Mean epoch-to-epoch centroid drift averaged across all classes. The shaded region denotes one standard deviation across classes.
    }
    \label{supp:fig:fmnist_drift}
\end{figure}

Basing the results on MNIST and FMNIST, these experiments demonstrate that the dynamically recomputed latent centroids progressively stabilize during optimization while maintaining the prescribed class-centroid geometry. The consistent behavior observed across both image datasets further supports that the centroid-isometry objective promotes stable class-level geometry rather than transient centroid configurations.

\subsection{Training loss analysis:}
The latent class centroids in ICE are dynamically recomputed from the encoder outputs throughout optimization. Since the centroid-consistency and centroid-isometry objectives are optimized jointly, an important question is whether the optimization converges to a stable solution and how the available latent dimensionality influences the optimization process.

\begin{figure}[!ht]
    \centering
    \includegraphics[width=1.00\linewidth]{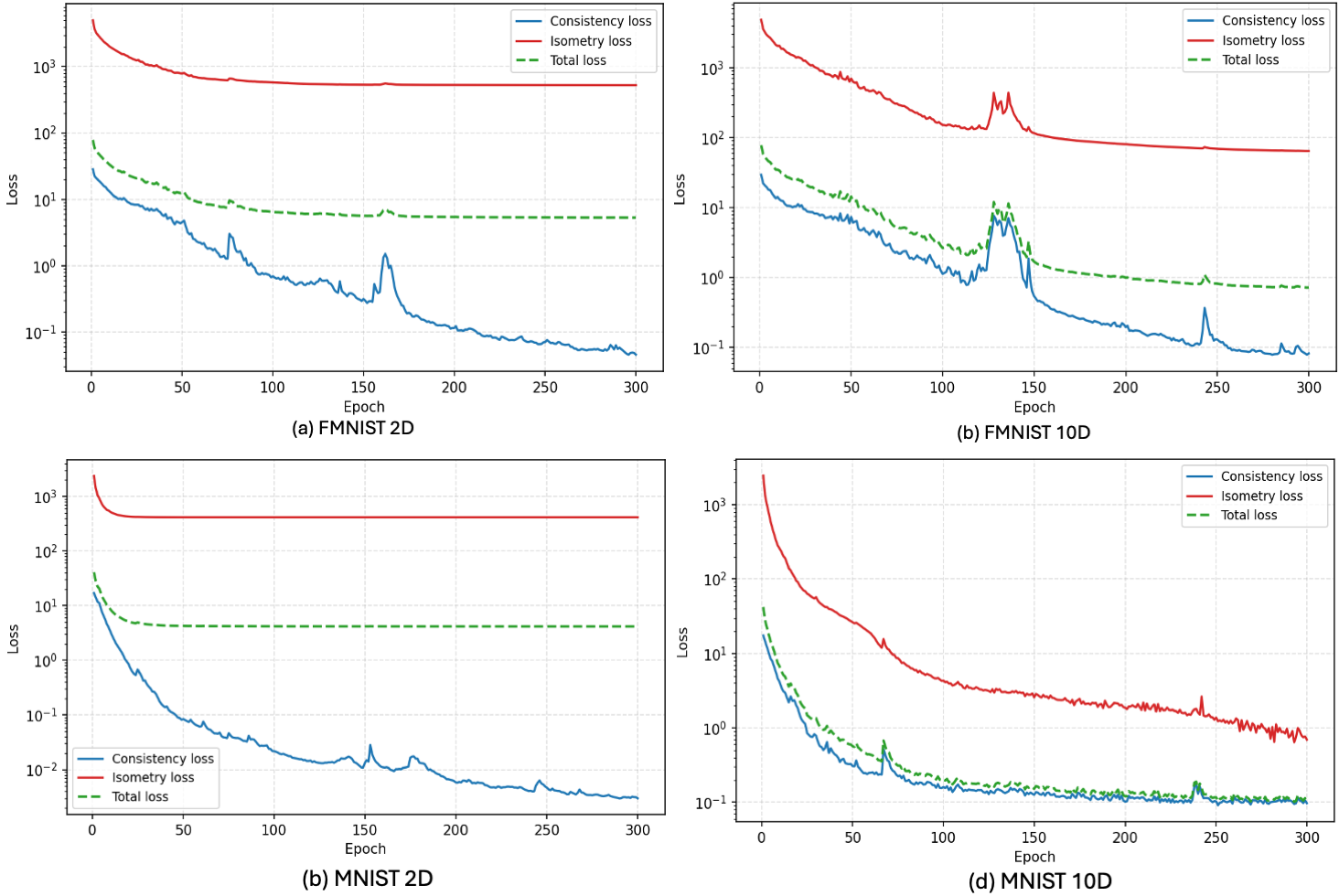}
    \caption{
Evolution of the centroid-consistency loss, centroid-isometry loss, and total objective during ICE optimization on FMNIST and MNIST using direct two-dimensional embeddings and class-dimensional embeddings ($p=M$). The total objective is computed as $\mathcal{L}_{\mathrm{cons}}+\lambda\mathcal{L}_{\mathrm{iso}}$ with $\lambda=0.01$, while the isometry loss is shown in its unweighted form. The plots show that the isometry loss quickly plateaus in the two-dimensional embedding but continues to decrease in the higher-dimensional latent space, consistent with the additional representational capacity available for preserving centroid geometry.
}
    \label{supp:fig:loss_convergence}
\end{figure}

Figure~\ref{supp:fig:loss_convergence} presents the evolution of the centroid-consistency loss, centroid-isometry loss, and the total objective for FMNIST and MNIST using both direct two-dimensional embeddings and class-dimensional embeddings ($p=M$). The total loss is computed as
\[
\mathcal{L}_{\mathrm{total}}
=
\mathcal{L}_{\mathrm{cons}}
+
\lambda
\mathcal{L}_{\mathrm{iso}},
\]
with $\lambda=0.01$. The centroid-isometry loss is plotted in its unweighted form to illustrate its optimization behavior.

Several observations can be made. First, the centroid-consistency loss decreases steadily in all experiments, indicating that samples progressively move toward their corresponding latent class centroids. Although small fluctuations are observed during training due to stochastic optimization, the overall trend is monotonic and converges to a stable value.

More importantly, the centroid-isometry loss exhibits markedly different behavior depending on the latent dimensionality. For the direct two-dimensional embedding (Figure~\ref{supp:fig:loss_convergence}(a) and (c)), the isometry loss decreases rapidly during the initial training epochs but soon reaches a plateau. This indicates that once the available degrees of freedom in the latent space are exhausted, the optimization can no longer substantially reduce the discrepancy between the latent and reference centroid distances. Consequently, subsequent optimization primarily improves within-class compactness while only marginally reducing the centroid-isometry objective.

In contrast, when the latent dimensionality equals the number of classes ($p=M$), the centroid-isometry loss continues to decrease throughout training (Figure~\ref{supp:fig:loss_convergence}(b) and (d)). The additional representational capacity allows the optimizer to progressively refine the global centroid configuration while simultaneously maintaining within-class compactness. This behavior is consistent across both FMNIST and MNIST and is reflected by the substantially lower final isometry loss compared to the corresponding two-dimensional embeddings.

These observations are consistent with the geometric analysis presented in Lemma~\ref{lemma:centroid_dim}. While a two-dimensional embedding provides insufficient degrees of freedom to accurately preserve all pairwise centroid relationships, the higher-dimensional latent space provides sufficient flexibility for the centroid-isometry objective to be optimized more effectively. Thus, the convergence behavior itself provides empirical evidence supporting the practical implications of the proposed latent-dimension selection strategy.

\section{Centroid Neighborhood Stability (CNS)}
\label{sec:cns}
This section introduces a new stability measure for class-level geometry, called \emph{Centroid Neighborhood Stability (CNS)}, a metric based on the agreement of nearest-neighbor relationships among class centroids across multiple independent training runs. Unlike stress, correlation, or classification accuracy, which evaluate the quality of a single embedding, CNS quantifies the \emph{reproducibility} of class-level neighborhood structure across repeated executions of the same model. Consequently, CNS complements geometry-preservation measures such as Kruskal Stress, and Procrustes Disparity by evaluating a different property, namely the reproducibility of class-level neighborhood structure across repeated runs. 

Let a DR model be trained independently for $R$ runs with different random initializations. For run $r \in \{1,\dots,R\}$, let $z_j^{(r)}$
denote the embedded centroid of class $j$, and let $\mathcal{N}_k^{(r)}(j)$ denote the set of $k$ nearest neighboring centroids of $z_j^{(r)}$ in the latent space, excluding $j$ itself. For a fixed reference class $j$, define the multiset, $\mathcal{M}_j = \bigcup_{r=1}^R \mathcal{N}_k^{(r)}(j)$, which contains $kR$ centroid indices counted with multiplicity. Let $\mathcal{U}_j$ denote the set of unique centroid indices appearing in $\mathcal{M}_j$. For each $u \in \mathcal{U}_j$, let $n(u)$ denote the number of times centroid $u$ appears in $\mathcal{M}_j$. By construction, $\sum_{u \in \mathcal{U}_j} n(u) = kR$. %
We therefore define the class-wise CNS score as:
\begin{equation}
\vspace{0.25 cm}
S_j = \frac{1}{kR^2} \sum_{u \in \mathcal{U}_j} n(u)^2.
\label{eq:class_stability}
\end{equation}
The overall stability score is obtained by averaging over all classes:
\begin{equation}
\vspace{0.25 cm}
\mathrm{CNS} = \frac{1}{M} \sum_{j=1}^M S_j.
\label{eq:overall_stability}
\end{equation}

Unlike pairwise distance measures, CNS depends only on neighborhood membership and is therefore insensitive to small perturbations that preserve the local class-neighborhood structure.

\noindent{\textbf{Interpretation:}}
The CNS score ($S_j$) measures the agreement of nearest-centroid selections across runs. If the same $k$ neighbors are consistently recovered for class $j$ in every run, then $n(u)=R$ for each of the $k$ neighbors and $S_j=1$. Conversely, if neighborhood assignments vary maximally across runs, the counts $n(u)$ are evenly dispersed and $S_j$ approaches its minimum value of $1/R$. A value of $S_j$ close to one indicates that the local neighborhood of class $j$ is highly reproducible across independent runs, whereas values approaching the theoretical lower bound indicate substantial variation in the neighboring classes recovered by different executions of the model. Intermediate values indicate partial reproducibility of class-neighborhood relationships.

\noindent{\textbf{Illustrative example:}}
Consider the experimental setting with $k=3$ nearest class neighbors and $R=5$ independent runs, yielding $kR=15$ neighbor selections per class.

\noindent{\textbf{Maximum value}} If the same three neighboring classes are recovered in every run, each neighbor appears $R=5$ times, giving\vspace{-0.25 cm}
$ \sum_{u \in \mathcal{U}_j} n(u)^2 = 5^2 + 5^2 + 5^2 = 75,
\qquad S_j = \frac{75}{3 \times 25} = 1$.
\vspace{0.1 cm}

\noindent{\textbf{Minimum value:}}
Since neighbor selections are drawn from the $M-1$ classes other than the reference class $j$, the number of distinct neighbors cannot exceed $M-1$. Let $T=kR$ denote the total number of neighbor selections across all runs and define
$L=\min(T,M-1)$, which is the maximum possible number of distinct neighbors in $\mathcal{M}_j$. The minimum value of $\sum_{u\in\mathcal U_j}n(u)^2$ is attained when the $T$ selections are distributed as evenly as possible across these $L$ neighbors. Writing
$T=aL+b,\qquad a=\Bigl\lfloor\frac{T}{L}\Bigr\rfloor,\qquad 0\le b<L,$
the most even allocation consists of $L-b$ neighbors appearing $a$ times and $b$ neighbors appearing $a+1$ times. Therefore, $S_j^{\min}=\frac{(L-b)a^2 + b(a+1)^2}{kR^2}.$ This minimum value corresponds to the most dispersed possible neighborhood assignments and therefore represents the least reproducible class-neighborhood configuration for the given values of $k$, $R$, and $M$.

\noindent{\textbf{Illustrative examples:}}
For the experimental setting used in this paper ($k=3$, $R=5$, hence $T=15$), the lower bound depends on the number of classes. If $M \ge 16$ (e.g., the Mouse Bladder dataset, where $M-1 = 15$), then $L = T = 15$, yielding $a=1$ and $b=0$. In this case, all neighbors can be distinct and$S_j^{\min} = \frac{15}{3 \times 25} = 0.2$. In contrast, for datasets with fewer classes such as USPS ($M=10$, so $M-1=9$), we have $L=9$, $a=1$, and $b=6$, leading to $S_j^{\min} = \frac{27}{3 \times 25} = 0.36$. These examples illustrate that the theoretical lower bound depends on the number of available classes as well as the choices of $k$ and $R$. Increasing $k$ shifts CNS from measuring immediate local neighborhoods toward broader class-neighborhood structure, whereas increasing $R$ improves the reliability of the stability estimate without changing the underlying neighborhood definition. Throughout this work, we fix $k=3$ to capture local class relationships and use $R=5$ as a computationally practical choice. A sensitivity analysis of CNS with respect to $k$ and $R$ is provided in Section~\ref{supp:sec:supp_cns_sensitivity}. Although the absolute CNS values vary with these parameters, the overall comparative conclusions remain stable across the tested configurations.

\noindent{\textbf{Invariance and scope:}}
CNS is invariant to global rigid transformations (translation, rotation, and reflection) because it depends only on relative centroid neighborhoods. Consequently, embeddings related solely by a rigid transformation receive identical CNS values. Unlike traditional embedding quality measures and Procrustes-based approaches \citep{iwata2026making}, which assess the quality or geometric similarity of embeddings, CNS measures the reproducibility of class-level neighborhood structure across independent runs. Thus, CNS complements rather than replaces geometry-preservation measures such as Kruskal Stress, Pearson Correlation, and Procrustes Disparity. A controlled synthetic experiment presented in Section~\ref{supp:sec:supp_cns_procrustes} demonstrates this distinction: identical geometric embeddings with permuted class identities yield zero Procrustes Disparity but substantially different CNS values, confirming that the two metrics capture complementary rather than redundant properties of embedding similarity.

\subsection{CNS versus Procrustes Disparity: A Controlled Experiment}
\label{supp:sec:supp_cns_procrustes}

To prove our hypothesis that CNS provides information beyond existing embedding similarity measures such as Procrustes Disparity. We designed a controlled synthetic experiment that isolates the fundamental difference between the two metrics.

\begin{figure}[!ht]    
    \includegraphics[width=15.0 cm, height=15.0 cm]{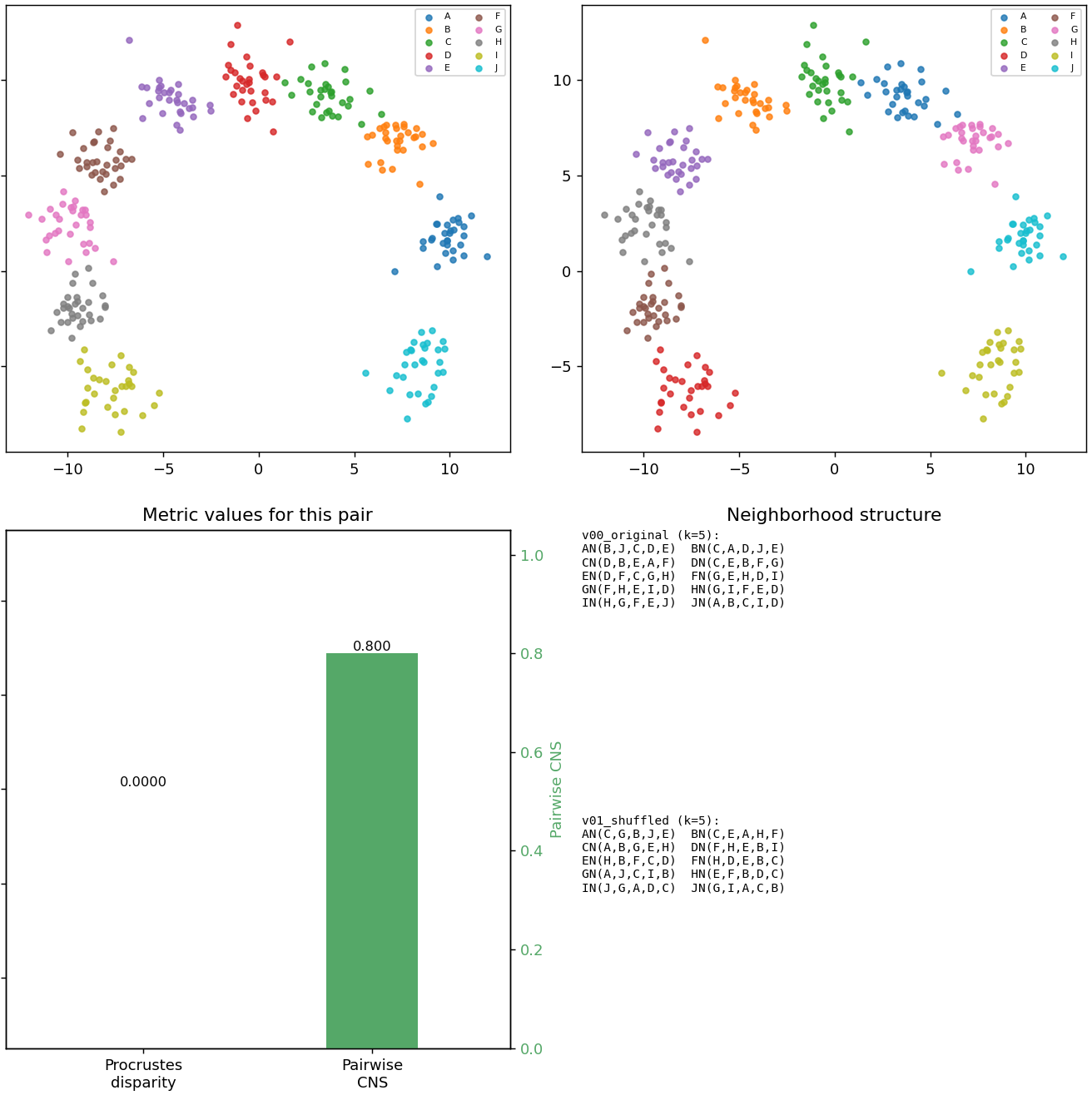}
    \caption{Representative comparison from the controlled synthetic experiment designed to distinguish Procrustes Disparity and CNS. Two configurations with identical embedding coordinates but different class-label assignments are compared. (a) Reference configuration. (b) Configuration obtained by permuting the labels assigned to entire clusters while keeping all sample coordinates fixed. (c) Corresponding Procrustes Disparity and CNS values. (d) $k=5$ nearest-centroid neighborhood graphs illustrating how class-label permutations alter class-neighborhood relationships despite identical embedding geometry. Procrustes Disparity remains zero because the geometric configuration is unchanged, whereas CNS captures the resulting change in class-level neighborhood topology.}
    \label{supp:fig:cns_procrustes_example}
\end{figure}

To obtain a comprehensive comparison, every pair of configurations was evaluated, resulting in
\[
\binom{10}{2}=45
\]

A two-dimensional synthetic dataset consisting of ten well-separated classes (A--J) was first generated, where the class centroids were arranged approximately on a circle. This reference embedding was then used to construct nine additional configurations. Rather than modifying the embedding coordinates, the class labels assigned to entire clusters were progressively permuted while keeping every sample at exactly the same location. Consequently, all ten configurations possessed identical geometric embeddings, differing only in the semantic assignment of class labels to the fixed clusters.

Figure~\ref{supp:fig:cns_procrustes_example} illustrates one representative comparison. The left two panels show two configurations with identical sample coordinates but different class-label assignments. The third panel reports the corresponding Procrustes Disparity and CNS values, while the fourth panel visualizes the $k=5$ nearest-centroid neighborhood graph for both configurations.

pairwise comparisons. For each pair, Procrustes Disparity was computed directly from the embedding coordinates using the same implementation employed throughout this work. CNS was computed by comparing the $k=5$ nearest-centroid neighborhoods between the two configurations.

Importantly, the coordinates of every embedded sample remained identical across all configurations; only the class labels assigned to entire clusters were permuted. Consequently, Procrustes Disparity remained zero (up to numerical precision) for every one of the 45 pairwise comparisons, indicating perfect geometric alignment.

In contrast, CNS exhibited substantial variation across the same set of comparisons, with a mean value of 0.784, a standard deviation of 0.031, and values ranging from 0.73 to 0.87. The observed variation reflects the extent to which each label permutation altered the neighborhood relationships among identified classes.

These results demonstrate that Procrustes Disparity depends solely on the embedding coordinates and is therefore inherently label-agnostic, measuring only geometric alignment. In contrast, CNS is label-aware and measures the consistency of class-neighborhood topology across embeddings. The two metrics therefore quantify complementary rather than redundant properties of embedding similarity.

This experiment intentionally decouples geometric structure from semantic class relationships. Since the geometry remains unchanged throughout the experiment, Procrustes Disparity correctly reports perfect alignment for every comparison. Conversely, because the neighborhood relationships among identified classes are systematically altered through class-label permutations, CNS responds proportionally to these changes. The experiment therefore provides direct evidence that CNS does not simply re-express geometric alignment measured by Procrustes Disparity, but instead quantifies a distinct aspect of embedding consistency based on class-level neighborhood topology. Finally, CNS compares unordered $k$-nearest-neighbor sets and therefore evaluates neighborhood membership rather than neighbor ordering, making it well suited for assessing the reproducibility of stochastic embedding algorithms.

\section{Experimental Setup}
\label{exp_setup}
This section describes the datasets, baseline methods, experimental protocol, and benchmarking results for the proposed ICE.
\subsection{Datasets}
We use a diverse set of publicly available datasets from multiple domains, summarized in Table~\ref{table:dataDescription}. The image datasets (MNIST, FMNIST, and USPS) provide well-established benchmarks for evaluating supervised dimensionality reduction, whereas the biological datasets are high-dimensional with a larger number of classes, offering a more challenging assessment of scalability, stability, and class-geometry preservation.

\begin{table}[!ht]
	\centering
	\caption{Descriptions of datasets used for benchmarking.}
	\vspace{0.1 cm}
	\begin{tabular} {|c|c|c|c|c|}	
		\hline
		Dataset & \#Features & \#Classes & \#Samples & Domain \\
		\hline
        USPS & 256 & 10 & 9298 & Image \\
        MNIST & 784 & 10 & 60000 & Image \\
        FMNIST & 784 & 10 & 60000 & Image \\
        10X PBMC & 16653 & 8 & 4271 & RNA-Seq \\
        Mouse Bladder Cells & 20670 & 16 & 2746 & RNA-Seq \\
        Worm Neuron Cells & 13488 & 10 & 4186 & RNA-Seq \\
		\hline
	\end{tabular}
	\vspace{0.25 cm}
	\label{table:dataDescription}
\end{table}
\subsection{Baselines}
To evaluate the proposed ICE model, we compare it with representative supervised dimensionality reduction methods, including the neural network-based approaches Centroid-Encoder (CE) \citep{ghosh2022supervised}, Super-Encoder (SE) \citep{kamal2022super}, and IVIS \citep{szubert2019structure}, together with the supervised manifold learning methods supervised UMAP \citep{mcinnes2018umap} and class-constrained t-SNE (csTSNE) \citep{meng2023class}. These baselines span centroid-based, similarity-learning, and neighborhood-preserving approaches, providing a comprehensive comparison with ICE.
\subsection{Evaluation Metrics}
We evaluate embedding quality using complementary measures of global geometry, local neighborhood structure, and stability. In addition to the proposed Centroid Neighborhood Stability (CNS) (Section~\ref{sec:cns}), which quantifies the reproducibility of class-level neighborhood structure across independent runs, we report several standard metrics. \textbf{Kruskal's Stress} \citep{kruskal1964} measures pairwise distance preservation, \textbf{Pearson Correlation} \citep{benesty2009pearson} quantifies the linear agreement between pairwise distances in the original and latent spaces, and \textbf{Procrustes Disparity} \citep{iwata2026making} measures the geometric similarity between the original and embedded centroid configurations after optimal alignment. Local neighborhood quality is assessed using 5-nearest neighbor (\textbf{5NN}) classification accuracy in the embedding space. Together, these metrics evaluate geometric fidelity, local neighborhood preservation, and embedding stability.

\subsection{Experimental Protocol}
The ICE model is trained using the Adam optimizer with a learning rate of $10^{-3}$ for 300 epochs. A common network architecture is used across the neural network-based baselines (CE, SE, and IVIS) to ensure a fair comparison, while the remaining methods follow their recommended hyperparameter settings. Following the procedure given in Algorithm~\ref{supp:alg:ice}, and using the network architectures and dataset-specific hyperparameters that are listed in Table~\ref{supp:table:ICEHyperparameters1}. A regularization parameter $\lambda$ is selected by five-fold cross-validation on the training portion of the first 80:20 train-test split and then fixed for all 15 independent runs. To analyze model behavior, we consider three ICE variants: linear, shallow (one hidden layer with 500 units), and deep (1000, 500, 125 units). Classification performance is evaluated using a 5-nearest neighbor (5NN) classifier with an 80:20 train-test split, and all reported results are averaged over $R=15$ independent runs. Following the original implementation, csTSNE is applied to PCA-reduced features only when optimization in the original feature space is computationally prohibitive; no PCA preprocessing is used for the remaining methods.

\noindent\textbf{Implementation details for reproducibility:}
\label{supp:sec:supp_config}
The following details are required to reproduce the results in Tables~\ref{tab:2D_Results} and~\ref{tab:M-D_Results}. Table~\ref{supp:table:ICEHyperparameters1} summarizes the network architectures, activation functions, optimization settings, training epochs, learning rates, mini-batch sizes, and the corresponding values of the regularization parameter $\lambda$ used for each dataset. Unless otherwise stated, ICE was trained using the configurations listed in this table. To isolate the effect of the proposed centroid-consistency and centroid-isometry objectives, no additional regularization techniques, such as batch normalization, dropout, or weight decay, were employed in any of the reported experiments.

\begin{table}[!ht]
	\centering
	\begin{adjustbox}{max width=\linewidth}
		\begin{tabular} {|c|c|c|c|c|c|}	%
		\hline
		Dataset & Network topology & Activation & Epoch & Learning Rate & Mini Batch\\
		
		\hline
		MNIST & $d \rightarrow1000\rightarrow500\rightarrow125\rightarrow 2$ & relu & 300 & 0.001 & 512\\
		USPS & $ d\rightarrow1000\rightarrow500\rightarrow125\rightarrow 2$ & relu & 300 & 0.001 & 512\\
		FMNIST & $ d\rightarrow1000\rightarrow500\rightarrow125\rightarrow 2$ & relu & 300 & 0.001 & 512\\
		PBMC & $ d\rightarrow1000\rightarrow500\rightarrow125\rightarrow 2$ & relu & 300 & 0.001 & 512\\
		Bladder & $ d\rightarrow1000\rightarrow500\rightarrow125\rightarrow 2$ & relu & 300 & 0.001 & 512\\
		Neuron & $ d\rightarrow1000\rightarrow500\rightarrow125\rightarrow 2$ & relu & 300 & 0.001 & 512\\
		\hline
	\end{tabular}
	\end{adjustbox}
	\caption{Network topology and hyperparameter settings used for ICE across different datasets. The input dimension $d$ corresponds to the feature dimension of each dataset and is therefore dataset-specific.}
	\label{supp:table:ICEHyperparameters1}
\end{table}

\begin{table}[!ht]
\small
\adjustbox{max width=\linewidth}{\begin{tabular}{c|cc|cc|cc|cc|cc|cc}
\hline
 & \multicolumn{2}{c|}{MNIST} 
 & \multicolumn{2}{c|}{USPS}
 & \multicolumn{2}{c|}{FMNIST}
 & \multicolumn{2}{c|}{PBMC}
 & \multicolumn{2}{c|}{Bladder}
 & \multicolumn{2}{c}{Neuron} \\
\cline{2-13}
\textbf{} & $p$ & $\lambda$ & $p$ & $\lambda$ & $p$ & $\lambda$ & $p$ & $\lambda$ & $p$ & $\lambda$ & $p$ & $\lambda$ \\
\hline
Low-dim & 2 & 0.001 & 2 & 0.001 & 2 & 0.001 & 2 & 0.001 & 2 & 0.001 & 2 & 0.001 \\
High-dim & 10 & 100 & 10 & 100 & 10 & 100 & 8 & 100 & 16 & 100 & 10 & 100 \\
\hline
\end{tabular}}
\caption{Dataset-specific values of $\lambda$ used for each embedding dimension in the experiments reported in Tables~\ref{tab:2D_Results} and~\ref{tab:M-D_Results}.}
\label{supp:table:ICEHyperparameters2}
\end{table}

\section{Model Behavior and Design Implications}
\label{sec:ICE_analysis}
This section investigates the behavior of proposed ICE and the effects of latent dimensionality and how to choose it, the regularization parameter $\lambda$, the contribution of each loss term (ablation), the sensitivity of CNS to its parameters, and the effect of PCA preprocessing.

\subsection{Effect of Latent Dimensionality on Centroid Geometry}
\label{subsec:stress_saturation}

This experiment investigates how latent dimensionality influences the preservation of class-centroid geometry in ICE and whether this behavior depends on encoder depth. According to Lemma~\ref{lemma:centroid_dim}, the centroid configuration of $M$ classes has intrinsic dimensionality at most $M-1$. We therefore expect improvements in centroid-geometry preservation to diminish once the latent dimension approaches this limit.

\noindent\textbf{Experimental setup:}
We first evaluate the deep ICE architecture (1000, 500, and 125 hidden units) while varying the latent dimension from $p=2$ to $26$. For each dimension, we compute Kruskal Stress and Pearson Correlation between pairwise class-centroid distances in the original and latent spaces. Results are averaged over five independent runs.

\begin{figure}[!ht]
    \centering
    \begin{subfigure}[b]{0.45\linewidth}
        \includegraphics[width=\linewidth]{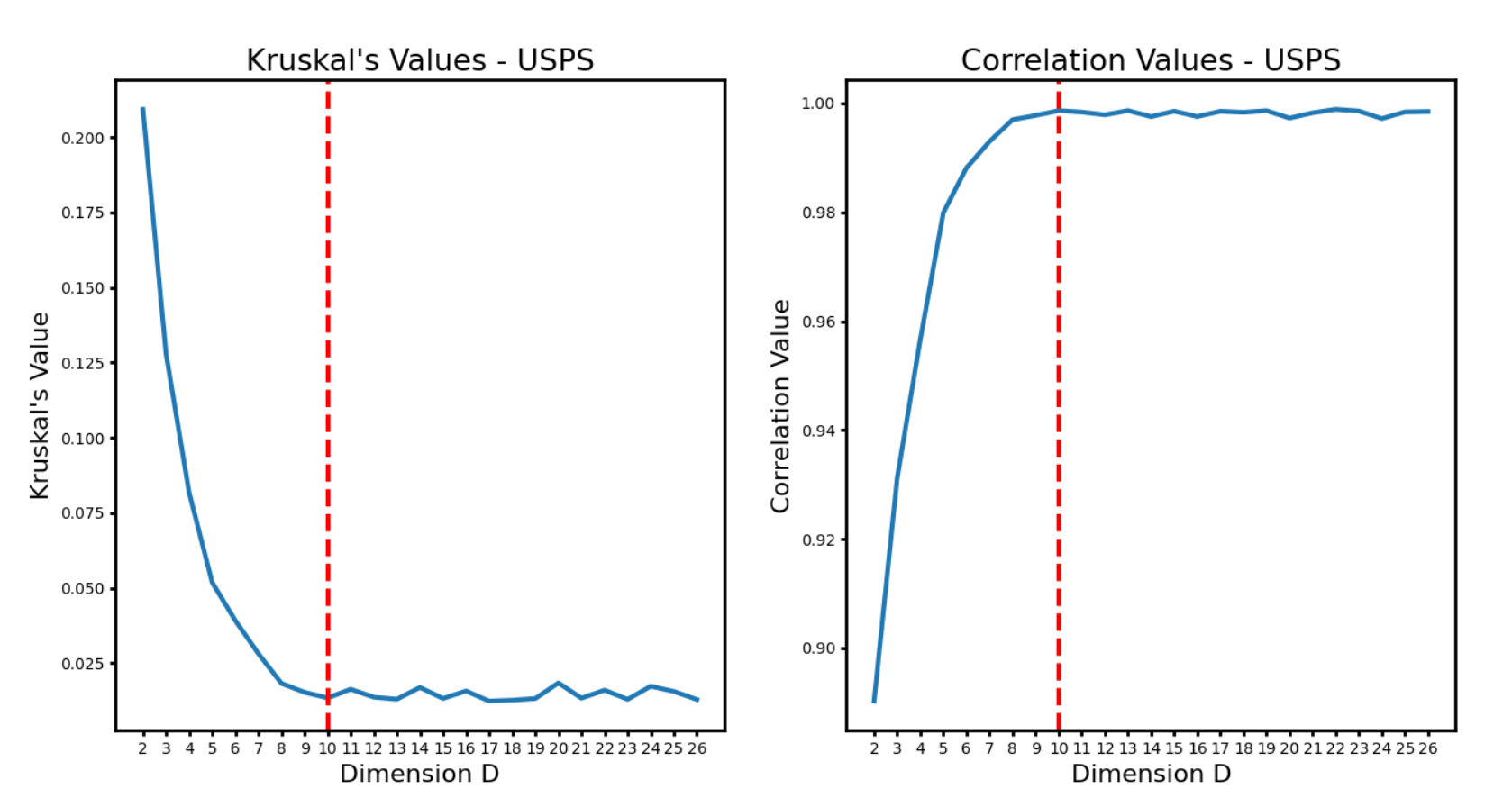}
        \caption{USPS}
        \label{fig:USPS_3HiddenLayer}
    \end{subfigure}
    \hfill
    \begin{subfigure}[b]{0.45\linewidth}
        \includegraphics[width=\linewidth]{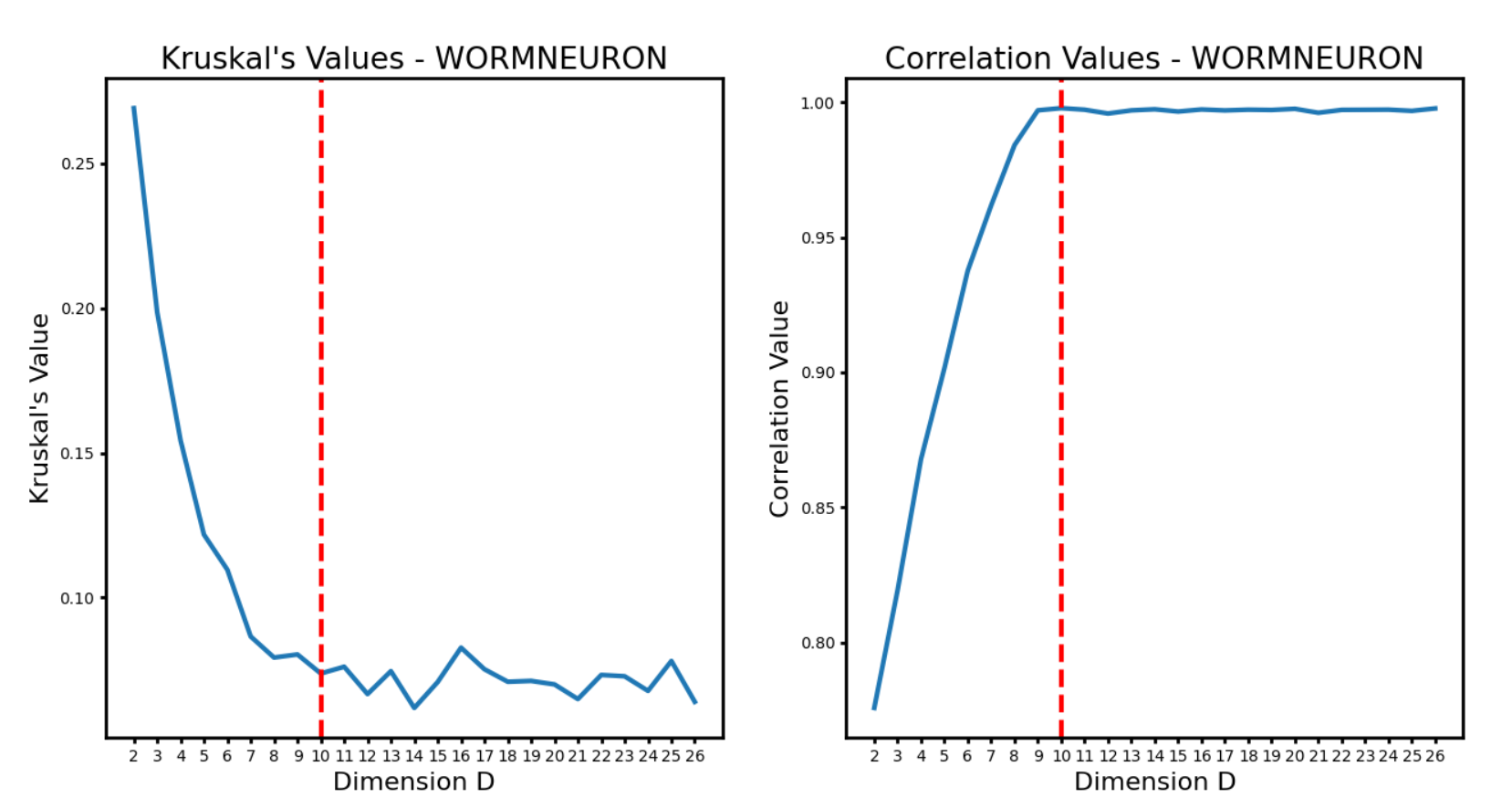}
        \caption{WORMNEURON}
        \label{fig:WORMNEURON_3HiddenLayers}
    \end{subfigure}

    \begin{subfigure}[b]{0.45\linewidth}
        \includegraphics[width=\linewidth]{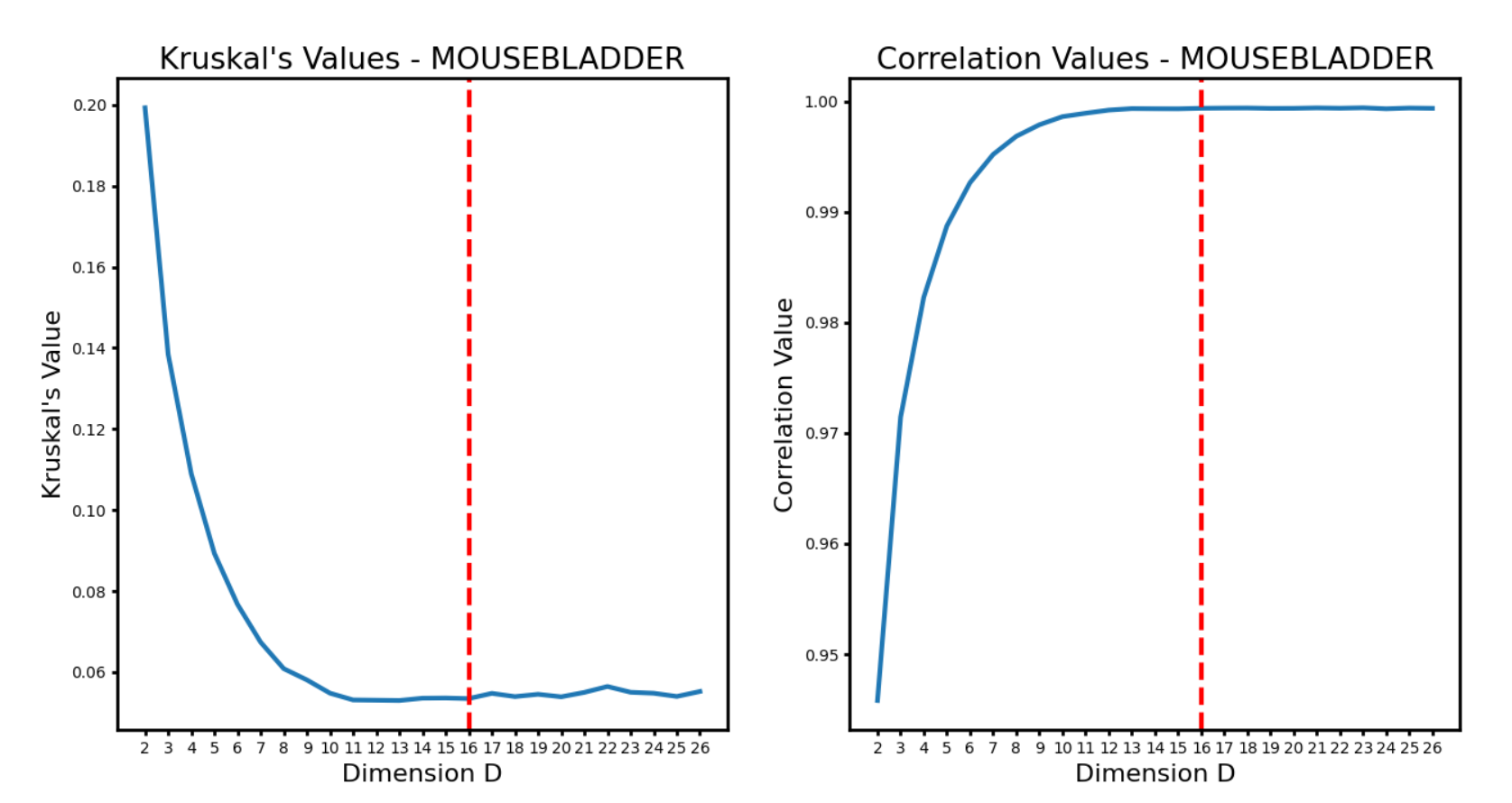}
        \caption{MOUSEBLADDER}
        \label{fig:MOUSEBLADDER_3HiddenLayer}
    \end{subfigure}
    \hfill
    \begin{subfigure}[b]{0.45\linewidth}
        \includegraphics[width=\linewidth]{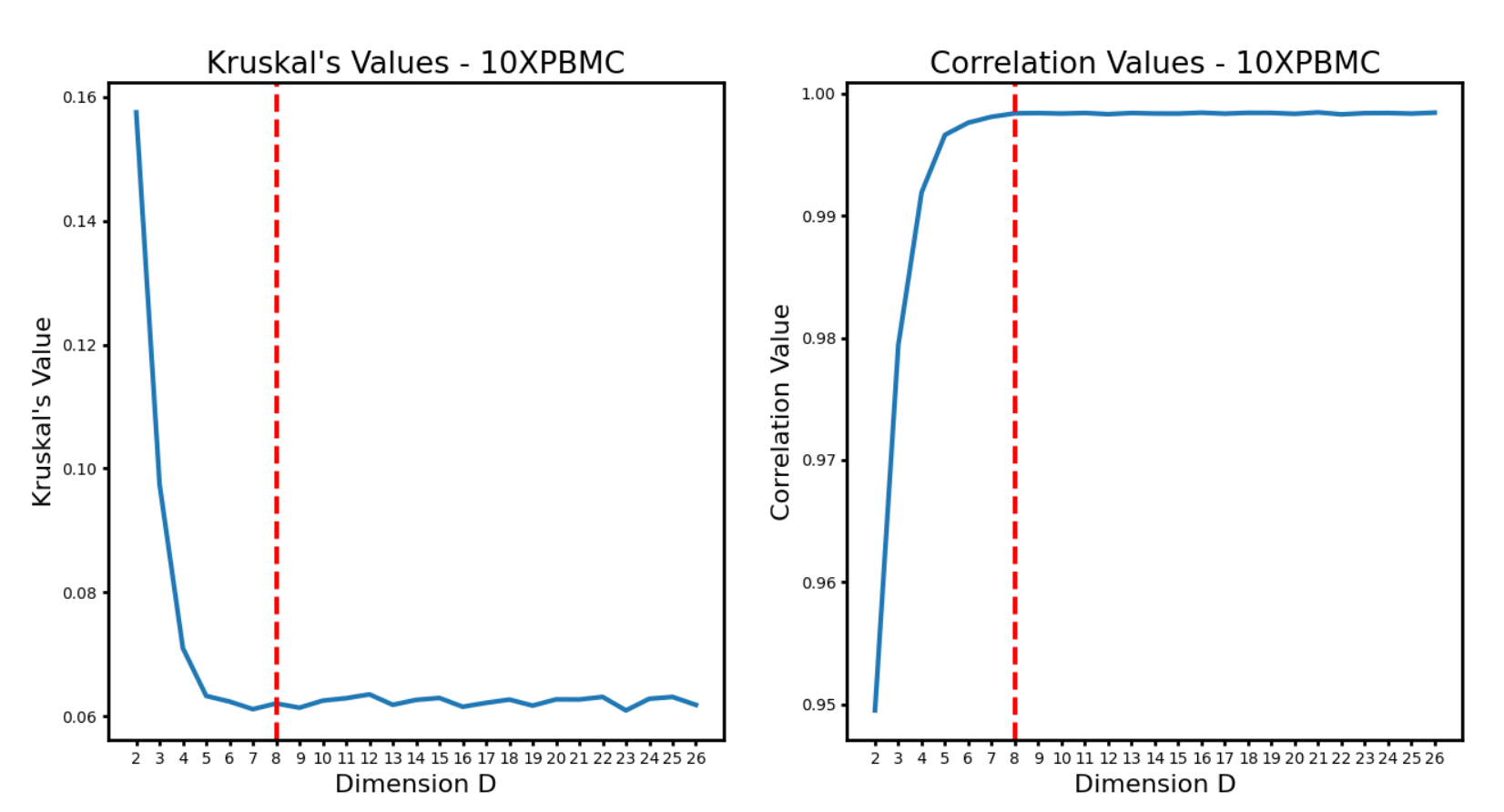}
        \caption{10XPBMC}
        \label{fig:10XPBMC_3HiddenLayer}
    \end{subfigure}

    \vspace{0.25 cm}
    \caption{Kruskal Stress and Pearson Correlation as functions of latent dimensionality for USPS, Mouse Bladder, Worm Neuron, and 10xPBMC using the deep ICE architecture (1000, 500, and 125 hidden units). Results are averaged over five independent runs. The red vertical line denotes the number of classes $M$.}
    \vspace{0.25 cm}
    \label{fig:kruskal_correlation_3HiddenLayers}
\end{figure}

\noindent\textbf{Results and interpretation:}
Figure~\ref{fig:kruskal_correlation_3HiddenLayers} shows a consistent trend across all datasets. As the latent dimension increases, Kruskal Stress decreases and Pearson Correlation increases, indicating improved preservation of class-centroid geometry. Both metrics rapidly saturate as the latent dimension approaches the number of classes, after which additional dimensions provide little improvement.

To determine whether this behavior depends on model capacity, we repeat the experiment using linear ICE (no hidden layer) and shallow ICE (one hidden layer with 500 neurons). The corresponding results are shown in Figures~\ref{supp:fig:kruskal_correlation_NoHiddenLayers} and~\ref{supp:fig:kruskal_correlation_OneHiddenLayer}. Both architectures exhibit the same saturation pattern observed for deep ICE.

\begin{figure}[!ht]
    \centering
    \begin{subfigure}[b]{0.45\linewidth}
        \includegraphics[width=\linewidth]{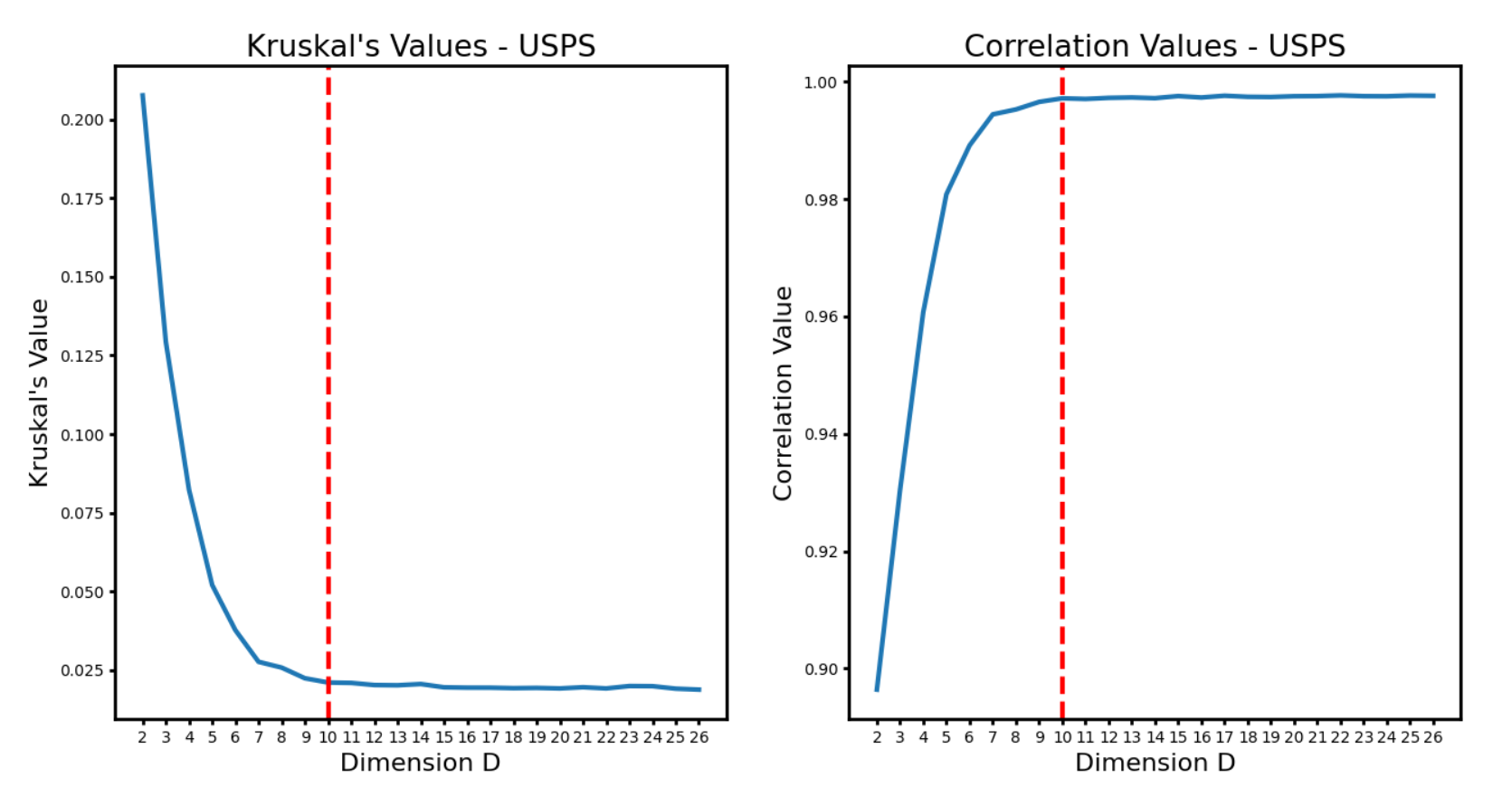}
        \caption{USPS}
        \label{supp:fig:USPS_NoHiddenLayers}
    \end{subfigure}
    \hfill
    \begin{subfigure}[b]{0.45\linewidth}
        \includegraphics[width=\linewidth]{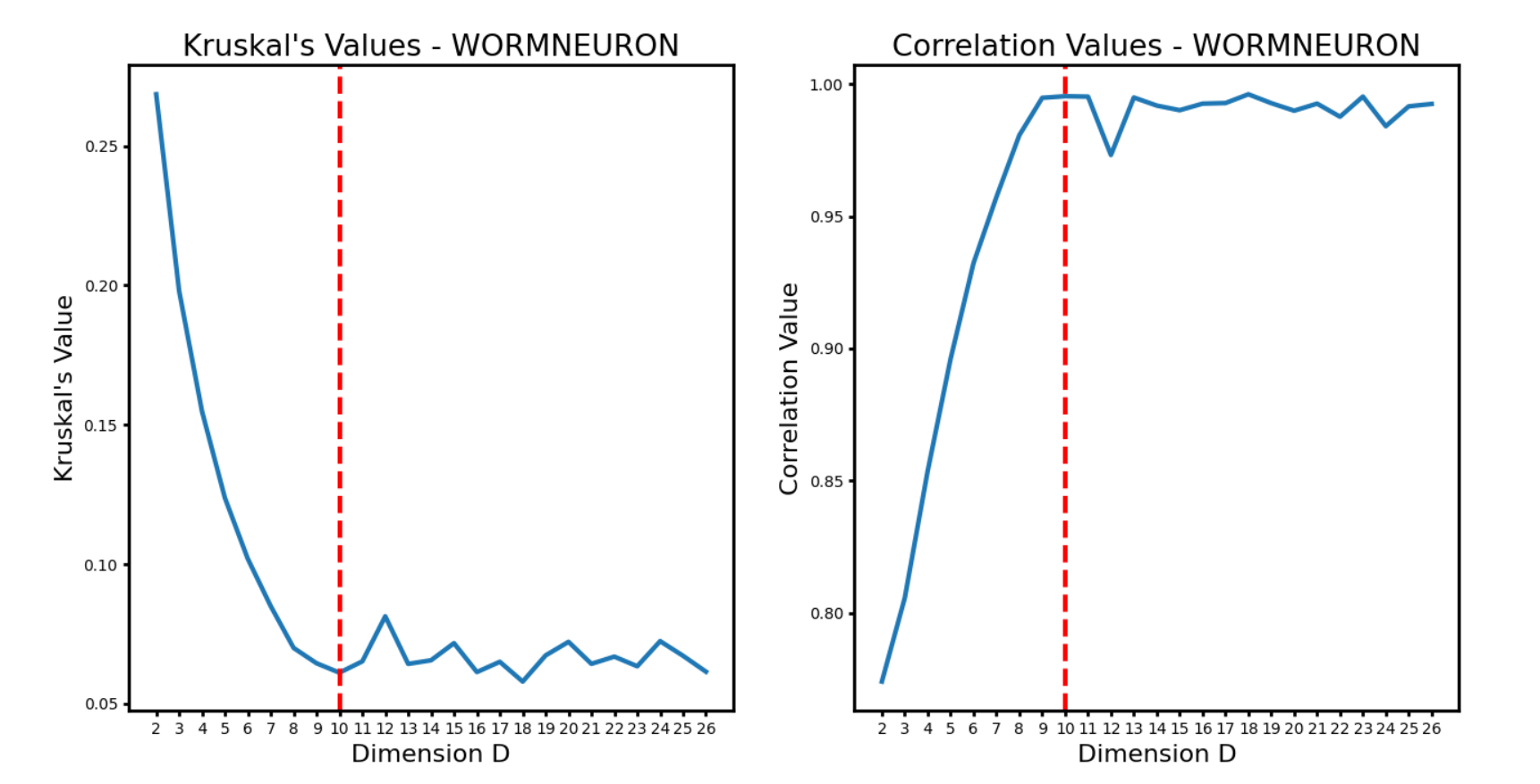}
        \caption{WORMNEURON}
        \label{supp:fig:WORMNEURON_NoHiddenLayers}
    \end{subfigure}

    \begin{subfigure}[b]{0.45\linewidth}
        \includegraphics[width=\linewidth]{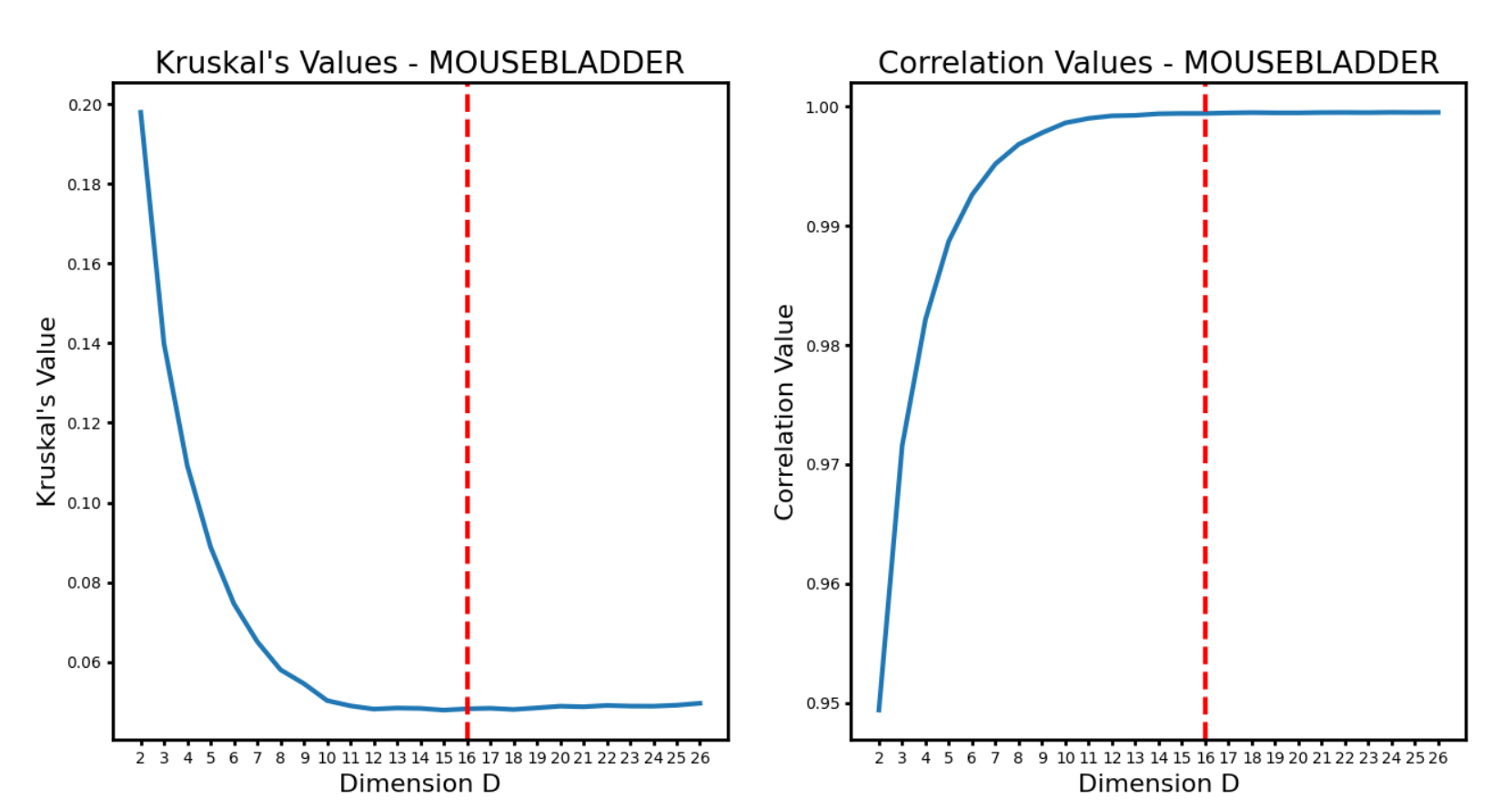}
        \caption{MOUSEBLADDER}
        \label{supp:fig:MOUSEBLADDER_NoHiddenLayers}
    \end{subfigure}
    \hfill
    \begin{subfigure}[b]{0.45\linewidth}
        \includegraphics[width=\linewidth]{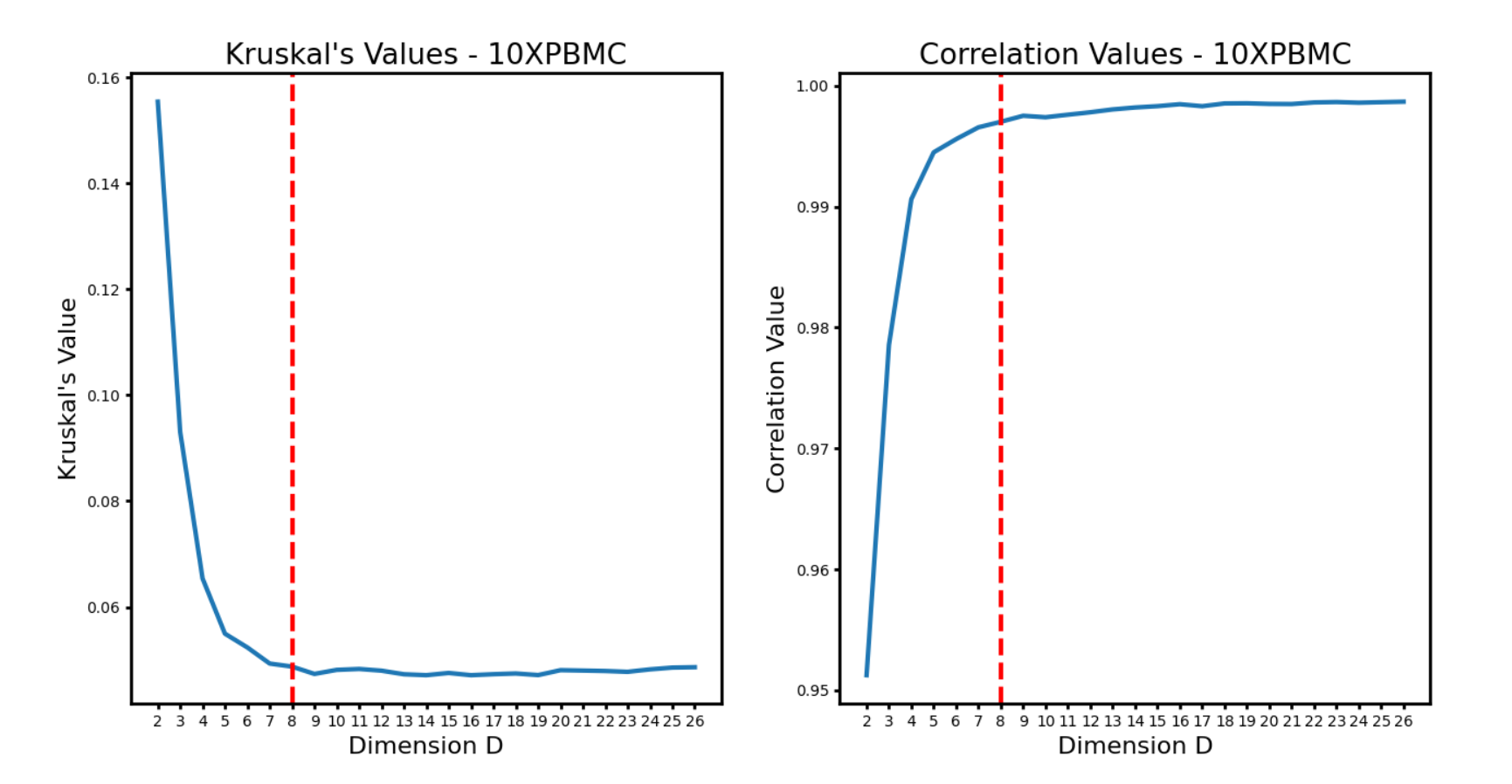}
        \caption{10XPBMC}
        \label{supp:fig:10XPBMC_NoHiddenLayers}
    \end{subfigure}

    \caption{Kruskal Stress and Pearson Correlation of class-centroid distance preservation for USPS, Mouse Bladder, Worm Neuron, and 10xPBMC using linear ICE (no hidden layers). The latent dimension is varied from 2 to 26, and results are averaged over five independent runs. The red vertical line indicates the number of classes $M$.}
    \label{supp:fig:kruskal_correlation_NoHiddenLayers}
\end{figure}

\begin{figure}[!ht]
    \centering
    \begin{subfigure}[b]{0.45\linewidth}
        \includegraphics[width=\linewidth]{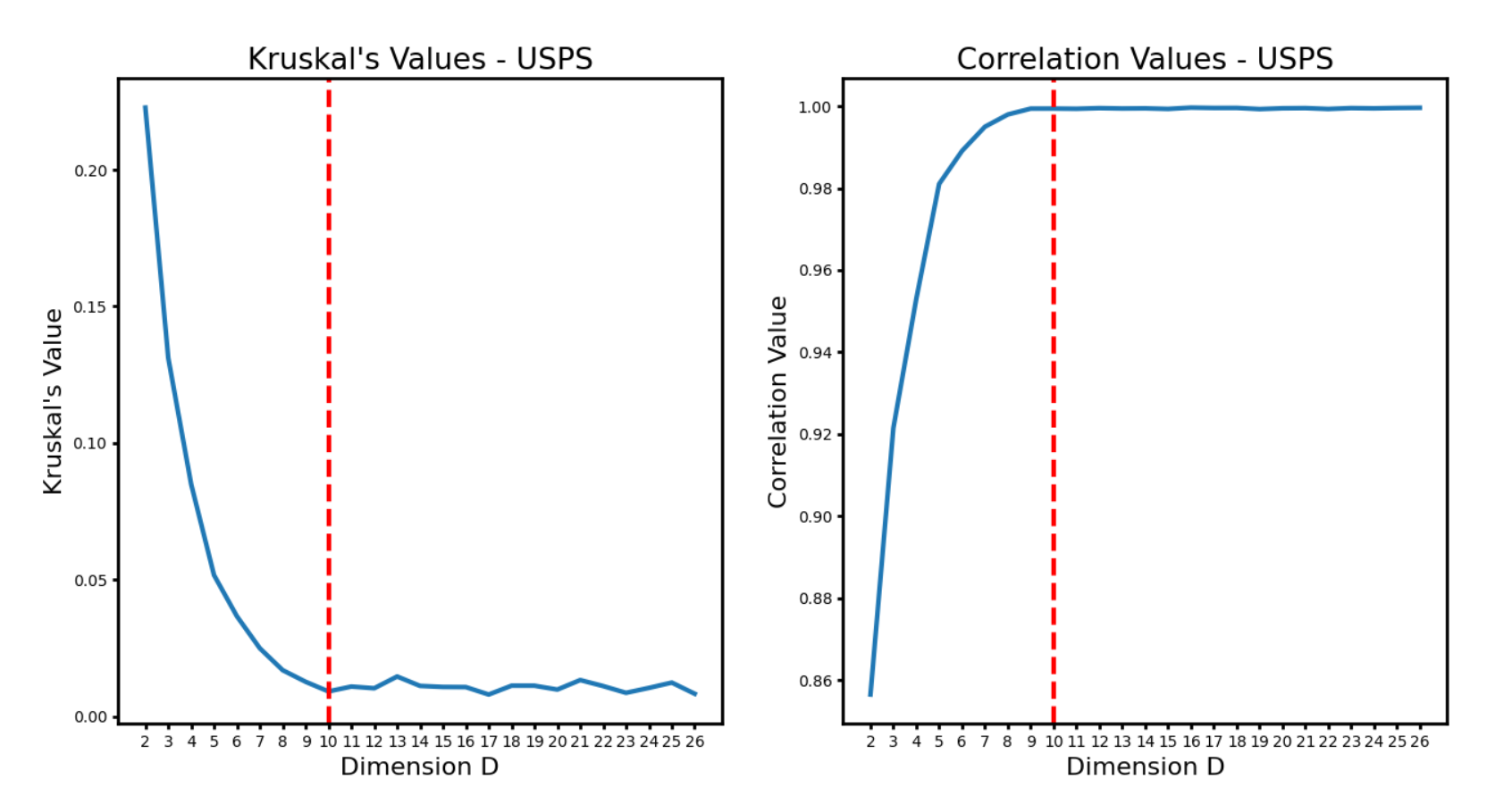}
        \caption{USPS}
        \label{supp:fig:USPS_OneHiddenLayer}
    \end{subfigure}
    \hfill
    \begin{subfigure}[b]{0.45\linewidth}
        \includegraphics[width=\linewidth]{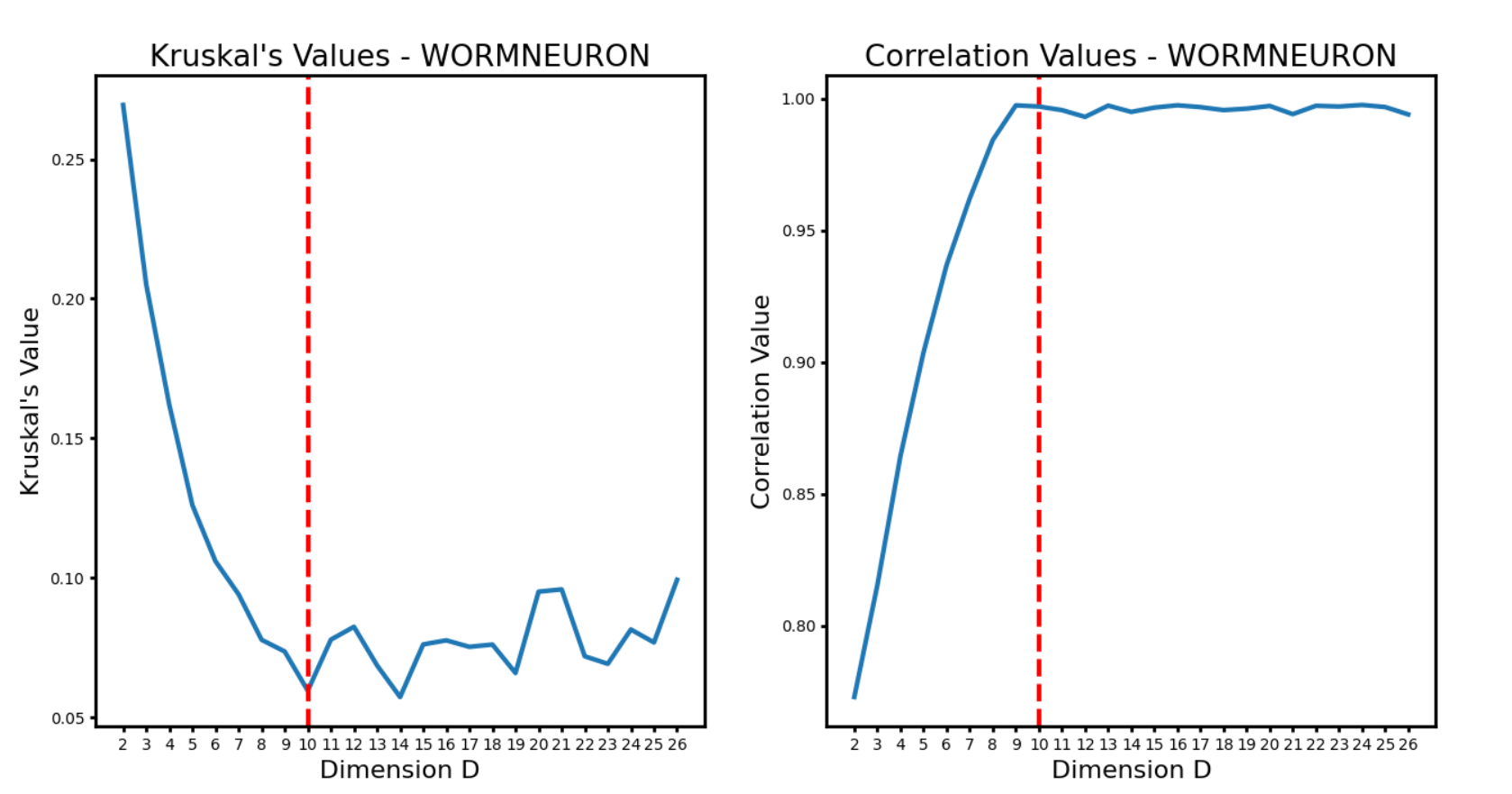}
        \caption{WORMNEURON}
        \label{supp:fig:WORMNEURON_1HiddenLayers}
    \end{subfigure}

    \begin{subfigure}[b]{0.45\linewidth}
        \includegraphics[width=\linewidth]{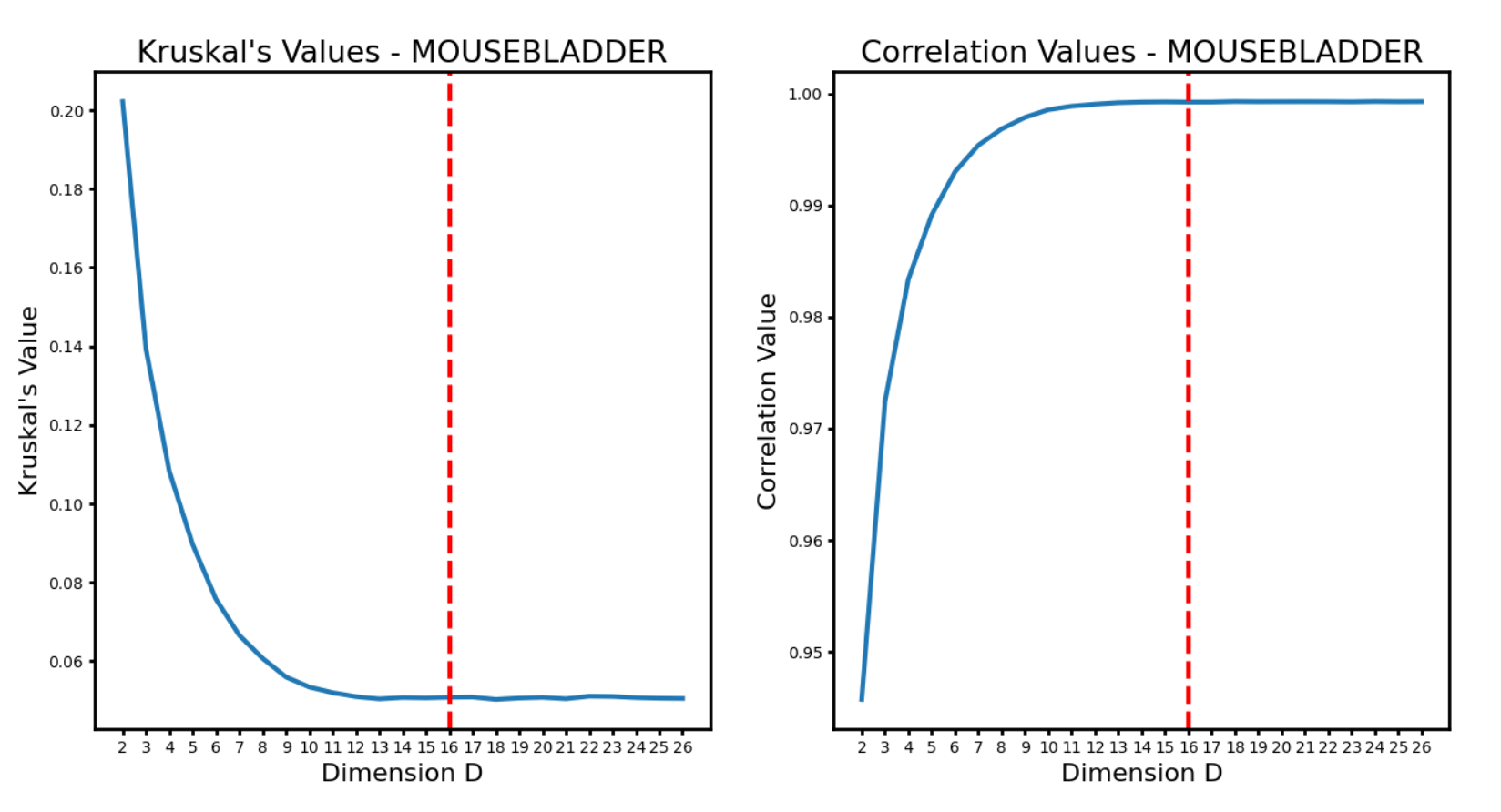}
        \caption{MOUSEBLADDER}
        \label{supp:fig:MOUSEBLADDER_OneHiddenLayer}
    \end{subfigure}
    \hfill
    \begin{subfigure}[b]{0.45\linewidth}
        \includegraphics[width=\linewidth]{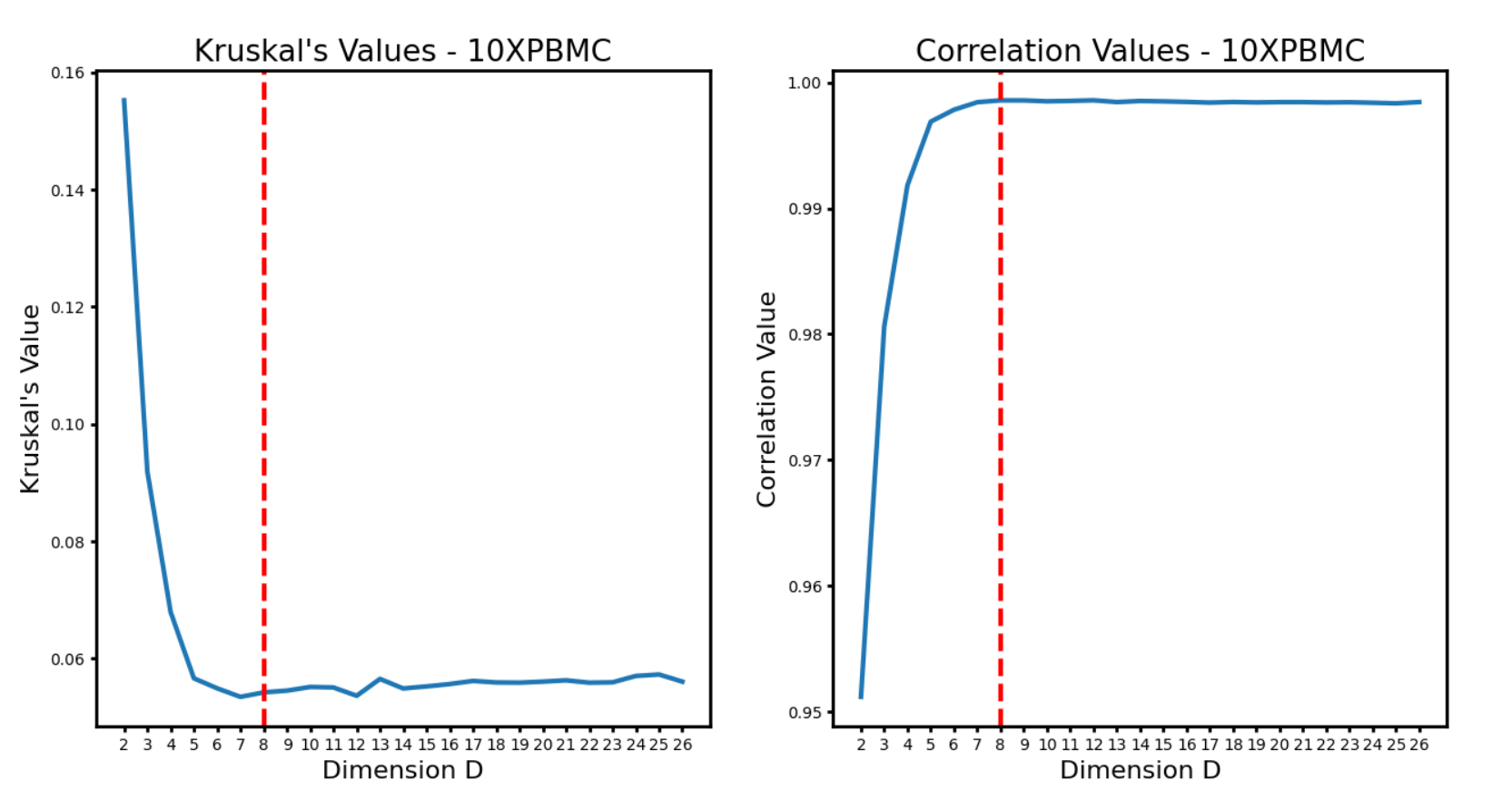}
        \caption{10XPBMC}
        \label{supp:fig:10XPBMC_OneHiddenLayer}
    \end{subfigure}

    \caption{Kruskal Stress and Pearson Correlation of class-centroid distance preservation for USPS, Mouse Bladder, Worm Neuron, and 10xPBMC using shallow ICE (one hidden layer with 500 neurons). The latent dimension ranges from 2 to 26, and results are averaged over five independent runs. The red vertical line denotes the number of classes $M$.}
    \label{supp:fig:kruskal_correlation_OneHiddenLayer}
\end{figure}

Taken together, the three architectures show that the saturation of centroid-distance preservation is largely independent of encoder depth. Increasing nonlinear capacity does not provide systematic improvement once the latent dimension approaches the intrinsic dimensionality of the centroid configuration. This observation is consistent with Lemma~\ref{lemma:centroid_dim}: the geometry of $M$ class centroids can be represented in at most $M-1$ dimensions. Thus, the observed saturation reflects a geometric limitation rather than insufficient encoder capacity.

From a practical standpoint, these results suggest that dimensions beyond $M-1$ provide little additional benefit for preserving class-centroid geometry. However, $M-1$ is an upper bound rather than a requirement; the effective centroid dimensionality may be substantially smaller. We examine this issue next and provide a data-driven procedure for selecting the latent dimension from the centroid eigenspectrum.

\subsection{Practical Selection of the Latent Dimension}
\label{sec:supp_dimension_selection}

Lemma~\ref{lemma:centroid_dim} establishes that the class-centroid configuration of a dataset containing $M$ classes can always be represented in at most $M-1$ dimensions. This result provides a theoretical upper bound on the latent dimensionality required to preserve the centroid geometry. However, it does not imply that all $M-1$ dimensions are necessary in practice. For many real-world datasets, particularly biological datasets, the effective geometric complexity of the class-centroid configuration is often substantially lower.

\begin{figure}[!ht]    
    \includegraphics[width=1.00\linewidth]{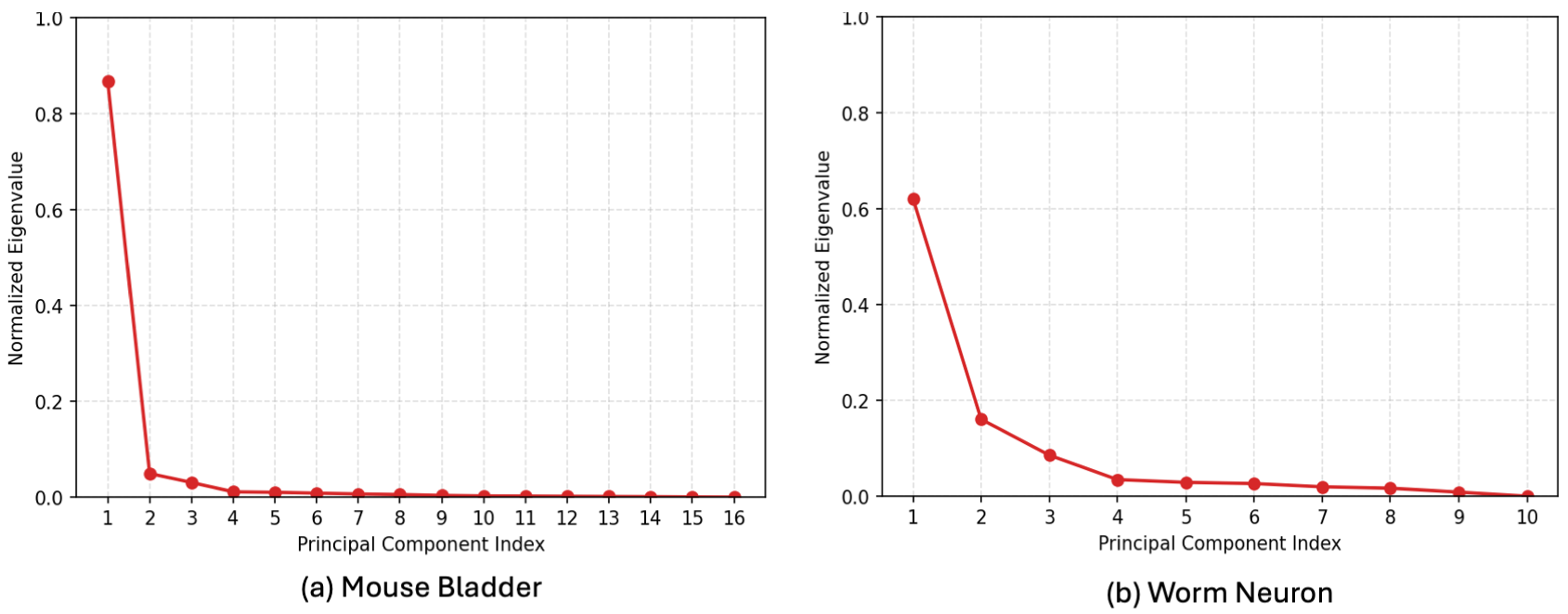}
    \caption{Normalized eigenspectra of the centered class-centroid matrix for (a) Mouse Bladder (16 cell types) and (b) Worm Neuron (10 cell types). The eigenvalues are shown in descending order after normalization by their total sum. Both datasets exhibit a rapidly decaying eigenspectrum, indicating that the dominant centroid geometry is captured by only a small number of principal directions. This suggests that the effective latent dimensionality required to preserve the class-centroid geometry can be substantially smaller than the theoretical upper bound of $M-1$. The eigenspectra were computed using Dual PCA to efficiently perform the eigendecomposition in the sample space, avoiding the construction of the high-dimensional feature covariance matrix.}
    \label{supp:fig:centroid_eigenspectrum}
\end{figure}

A practical strategy for selecting the latent dimensionality is to examine the eigenspectrum of the centered centroid matrix. Let, $\mathbf{C}_c = \mathbf{H}\mathbf{C}$, where $\mathbf{C}$ contains the class centroids in the original feature space and $\mathbf{H}$ is the centering matrix. Let, $\lambda_1 \ge \lambda_2 \ge \cdots \ge \lambda_{M-1} \ge 0$ denote the eigenvalues of $\mathbf{C}_c^\top\mathbf{C}_c$. The latent dimensionality may then be selected as the smallest integer $p^\star$ satisfying

\[
\frac{\sum_{i=1}^{p^\star}\lambda_i}
{\sum_{i=1}^{M-1}\lambda_i}
\ge \tau,
\]
where $\tau$ is a user-defined explained-variance threshold (e.g., $95\%$ or $98\%$). This criterion identifies the minimum number of latent dimensions required to preserve the dominant centroid geometry while avoiding unnecessary dimensions.

To illustrate this observation, we analyzed the eigenspectra of the class-centroid matrices for two biological datasets: Mouse Bladder (16 cell types) and Worm Neuron (10 cell types). Figure~\ref{supp:fig:centroid_eigenspectrum} shows the normalized cumulative explained variance for both datasets. Although the theoretical upper bounds are $15$ and $9$ dimensions, respectively, the centroid eigenspectra decay rapidly, indicating that only a small number of principal centroid directions explain most of the centroid variation. In particular, the Mouse Bladder dataset exhibits a highly dominant first centroid direction, while the Worm Neuron dataset requires only a few principal directions before the cumulative explained variance approaches saturation.

These observations suggest that the effective centroid dimensionality is often considerably smaller than the theoretical upper bound of $M-1$. Consequently, rather than simply setting the latent dimension to the number of classes, practitioners may estimate an appropriate latent dimensionality directly from the centroid eigenspectrum. This provides a simple, deterministic, and data-driven guideline for selecting the latent dimension while remaining consistent with the theoretical result established in Lemma~\ref{lemma:centroid_dim}.

\subsection{Effect of Regularization Parameter \texorpdfstring{$\lambda$}{lambda}}
\label{sec:lambda_main}
\begin{table}[!ht]
\centering
\caption{Effect of the regularization parameter $\lambda$ on centroid neighborhood stability (CNS), Kruskal Stress, and 5NN accuracy for MNIST in a two-dimensional embedding. Results are averaged over five independent runs.}
\vspace{0.1 cm}
\begin{tabular}{c c c c}
\hline
$\lambda$ & CNS & Kruskal Stress & 5NN Accuracy (\%) \\
\hline
0.01   & 0.8080 & $0.2243 \pm 0.0011$ & $97.27 \pm 0.05$ \\
0.1    & \textbf{0.8933} & $0.2258 \pm 0.0005$ & $96.94 \pm 0.09$ \\
1.0    & 0.8400 & $0.2253 \pm 0.0006$ & $96.39 \pm 0.20$ \\
10.0   & 0.7787 & $0.2268 \pm 0.0035$ & $96.48 \pm 0.08$ \\
100.0  & 0.7573 & $0.2378 \pm 0.0100$ & $95.93 \pm 0.12$ \\
\hline
\end{tabular}
\vspace{0.1 cm}
\label{tab:lambda_analysis}
\end{table}

\noindent Table~\ref{tab:lambda_analysis} summarizes the effect of the regularization parameter $\lambda$ on CNS, Kruskal Stress, and 5NN accuracy for MNIST in a two-dimensional embedding. Kruskal Stress remains nearly unchanged across different values of $\lambda$, indicating that the global centroid geometry is largely preserved. In contrast, CNS exhibits a non-monotonic trend, reaching its maximum for intermediate values of $\lambda$. Small values provide weak centroid-isometry regularization, resulting in greater run-to-run variability of the class-centroid configuration, whereas larger values improve geometric regularization but eventually overemphasize the isometry objective.

In a two-dimensional latent space, the centroid-isometry objective cannot generally be satisfied exactly because the centroid geometry of $M$ classes requires up to $M-1$ dimensions (Lemma~\ref{lemma:centroid_dim}). Consequently, excessively large values of $\lambda$ force the optimization toward a compromise solution, increasing the sensitivity of class neighborhoods to small perturbations and reducing CNS despite little change in global Stress.

Smaller values of $\lambda$ produce slightly higher 5NN accuracy by emphasizing the centroid-consistency objective and improving local sample neighborhoods. This illustrates that classification accuracy and CNS quantify complementary properties of the learned embedding.

\label{supp:sec:supp_lambda}
To evaluate whether this behavior generalizes across application domains, we repeat the analysis on the USPS and 10X PBMC datasets (Tables~\ref{supp:tab:lambda_usps} and~\ref{supp:tab:lambda_pbmc}). Following the same protocol, the latent dimension is fixed at two, while $\lambda$ is varied over $\{0.01, 0.1, 1, 10, 100\}$. For each setting, we report the proposed Centroid Neighborhood Stability (CNS), Kruskal Stress, and 5NN classification accuracy averaged over five independent runs.

\begin{table}[t]
\centering
\caption{Effect of the regularization parameter $\lambda$ on USPS. Results are averaged over five independent runs using an $80{:}20$ train-test split.}
\label{supp:tab:lambda_usps}
\begin{tabular}{c c c c}
\hline
$\lambda$ & CNS & Kruskal Stress & 5NN Accuracy (\%)\\
\hline
0.01  & 0.8747 & $0.2086 \pm 0.0035$ & $96.24 \pm 0.22$\\
0.1   & 0.8160 & $0.2164 \pm 0.0118$ & $96.15 \pm 0.31$\\
1.0   & \textbf{0.8933} & $\mathbf{0.2085 \pm 0.0036}$ & $96.09 \pm 0.25$\\
10.0  & 0.7893 & $0.2147 \pm 0.0075$ & \textbf{$96.80 \pm 0.14$}\\
100.0 & 0.8027 & $0.2137 \pm 0.0088$ & $95.28 \pm 0.40$\\
\hline
\end{tabular}
\end{table}

\begin{table}[t]
\centering
\caption{Effect of the regularization parameter $\lambda$ on the 10X PBMC dataset. Results are averaged over five independent runs using an $80{:}20$ train-test split.}
\label{supp:tab:lambda_pbmc}
\begin{tabular}{c c c c}
\hline
$\lambda$ & CNS & Kruskal Stress & 5NN Accuracy (\%)\\
\hline
0.01  & 0.8700 & $0.1693 \pm 0.0128$ & $82.61 \pm 1.42$\\
0.1   & 0.8967 & $0.1621 \pm 0.0065$ & $81.87 \pm 2.71$\\
1.0   & 0.9100 & $0.1655 \pm 0.0033$ & $80.98 \pm 1.27$\\
10.0  & \textbf{0.9133} & $0.1580 \pm 0.0065$ & \textbf{$83.24 \pm 1.42$}\\
100.0 & 0.9000 & $\mathbf{0.1515 \pm 0.0044}$ & $78.79 \pm 1.87$\\
\hline
\end{tabular}
\end{table}

\noindent\textbf{Results and interpretation:}

The sensitivity analysis on USPS and 10X PBMC exhibits trends similar to those observed for MNIST in Table~\ref{tab:lambda_analysis}. Across both datasets, ICE remains relatively stable over a broad range of $\lambda$ values, indicating that the method is not highly sensitive to the precise choice of the regularization parameter.

For USPS, the classification accuracy remains above $95\%$ for all values of $\lambda$, while both CNS and Kruskal Stress vary only modestly. The highest CNS is obtained at $\lambda=1$, whereas the best classification accuracy occurs at $\lambda=10$, illustrating that embedding stability and classification performance capture complementary properties of the learned representation.

A similar trend is observed on the 10X PBMC dataset. Moderate values of $\lambda$ ($1$--$10$) provide consistently high CNS together with low Kruskal Stress, while the highest classification accuracy is obtained at $\lambda=10$. Increasing $\lambda$ further to $100$ continues to improve centroid-distance preservation but leads to a reduction in classification accuracy, suggesting that excessive emphasis on the centroid-isometry objective may slightly compromise local sample organization.

Overall, these experiments demonstrate that the qualitative behavior of ICE with respect to $\lambda$ is consistent across image and biological datasets. Moderate values of $\lambda$ provide a favorable balance between the centroid-consistency and centroid-isometry objectives, yielding stable class-level geometry together with strong classification performance.

\noindent\textbf{Design implications:}
These results highlight that low centroid-distance Stress alone does not guarantee stable class neighborhoods. While Stress varies smoothly with $\lambda$, CNS is a discrete set-based measure that can change abruptly when small variations in centroid distances alter nearest-neighbor membership. Thus, $\lambda$ provides an explicit mechanism for balancing class-centroid geometry, classification accuracy, and neighborhood stability. When the latent dimensionality is sufficient ($p \gtrsim M-1$), ICE consistently achieves both low Stress and high CNS. In low-dimensional embeddings, however, $\lambda$ must be selected to balance geometric preservation and stability. These findings further demonstrate that Stress, classification accuracy, and CNS quantify complementary aspects of embedding quality.

\subsection{Ablation Study}
\label{subsec:ablation}
\label{supp:sec:supp_ablation}

The ICE objective consists of two components: the centroid-consistency loss $\mathcal{L}_{\mathrm{cons}}$, which promotes within-class compactness, and the centroid-isometry loss $\mathcal{L}_{\mathrm{iso}}$, which preserves the prescribed pairwise geometry among latent class centroids. To isolate the role of these components, we consider two complementary ablations:
(i) consistency-only ICE, obtained by setting $\lambda=0$ and removing the centroid-isometry term, and (ii) isometry-only ICE, obtained by removing the centroid-consistency term. 

\subsubsection{Consistency-Only ICE: Removing the Centroid-Isometry Term}

We first remove the centroid-isometry term by setting $\lambda=0$. Experiments
were performed on MNIST and FMNIST in both the two-dimensional ($p=2$) and
class-dimensional ($p=M=10$) settings.

\begin{table}[!ht]
\begin{tabular}{llcccc}
\hline
Dataset & Latent Dim. & Stress & Correlation & CNS & 5NN Accuracy (\%)\\
\hline
MNIST  & $2$  & $1.00 \pm 0.00$ & $-0.00 \pm 0.19$ & $0.46 \pm 0.01$ & $14.12 \pm 3.68$\\
MNIST  & $10$ & $1.00 \pm 0.00$ & $0.00 \pm 0.12$  & $0.48 \pm 0.01$ & $9.91 \pm 0.39$\\
FMNIST & $2$  & $1.00 \pm 0.00$ & $0.13 \pm 0.32$  & $0.50 \pm 0.02$ & $20.65 \pm 6.13$\\
FMNIST & $10$ & $1.00 \pm 0.00$ & NA                  & $0.77 \pm 0.04$ & $10.00 \pm 0.01$\\
\hline
\end{tabular}
\caption{Ablation results obtained by removing the centroid-isometry term
($\lambda=0$). Lower Kruskal Stress is better, whereas higher Correlation,
CNS, and 5NN accuracy are better.}
\label{supp:tab:ablation_cons_only}
\end{table}

Removing the isometry term produces a degenerate solution in all four configurations. Kruskal Stress reaches $1.00$, correlation becomes negligible or undefined, and classification accuracy approaches the chance level of $10\%$ for the two ten-class datasets. This behavior follows directly from the
structure of $\mathcal{L}_{\mathrm{cons}}$. Since the latent centroids are dynamically computed from the encoder outputs, a configuration in which all samples and all class centroids collapse to a common location can minimize the centroid-consistency objective while containing no meaningful class-level geometry.

For FMNIST with $p=10$, Pearson correlation is reported as NA because one run produced numerically identical latent class centroids. Consequently, the encoded pairwise centroid-distance vector had zero variance, for which Pearson correlation is undefined. The remaining runs also exhibited near-zero inter-centroid distances, confirming the same degenerative behavior.

These results indicate that the centroid-consistency objective alone is insufficient to determine the relative arrangement of the latent classes. The centroid-isometry term is therefore a key component of ICE for preventing trivial collapse and preserving the prescribed class-centroid geometry.

\subsubsection{Isometry-Only ICE: Removing the Centroid-Consistency Term}

We next perform the complementary ablation by removing
$\mathcal{L}_{\mathrm{cons}}$ and optimizing only the centroid-isometry objective. This experiment was conducted on MNIST, FMNIST, 10X PBMC, and Mouse Bladder using both $p=2$ and $p=M$. The coefficient of the isometry objective was fixed at $\lambda=1$.

\begin{table}[!ht]
\centering
\caption{Performance of isometry-only ICE after removing the
centroid-consistency term. Results are averaged over 15 independent runs.}
\label{supp:tab:ablation_iso_only}
\resizebox{\textwidth}{!}{
\begin{tabular}{llcccc}
\hline
Dataset & Latent Dim. & Kruskal Stress & Correlation & CNS & 5NN Accuracy (\%)\\
\hline
MNIST         & $2$  & $0.23 \pm 0.01$ & $0.83 \pm 0.03$ & $0.72 \pm 0.02$ & $94.26 \pm 0.32$\\
FMNIST        & $2$  & $0.15 \pm 0.02$ & $0.94 \pm 0.03$ & $0.95 \pm 0.02$ & $84.69 \pm 0.41$\\
10X PBMC      & $2$  & $0.16 \pm 0.01$ & $0.94 \pm 0.01$ & $0.91 \pm 0.03$ & $85.55 \pm 2.10$\\
Mouse Bladder & $2$  & $0.20 \pm 0.01$ & $0.94 \pm 0.00$ & $0.77 \pm 0.02$ & $65.71 \pm 1.69$\\
\hline
MNIST         & $M$  & $0.01 \pm 0.00$ & $1.00 \pm 0.00$ & $0.95 \pm 0.02$ & $97.39 \pm 0.10$\\
FMNIST        & $M$  & $0.02 \pm 0.01$ & $1.00 \pm 0.00$ & $0.99 \pm 0.01$ & $87.83 \pm 0.17$\\
10X PBMC      & $M$  & $0.06 \pm 0.01$ & $1.00 \pm 0.00$ & $0.97 \pm 0.01$ & $88.49 \pm 0.54$\\
Mouse Bladder & $M$  & $0.05 \pm 0.02$ & $1.00 \pm 0.00$ & $0.94 \pm 0.01$ & $80.28 \pm 1.15$\\
\hline
\end{tabular}
}
\end{table}

In contrast to the consistency-only ablation, the isometry-only model retains strong class-level geometric structure. In the class-dimensional setting, Kruskal Stress remains between $0.01$ and $0.06$, Pearson correlation is approximately $1.00$, and CNS ranges from $0.94$ to $0.99$ across all four datasets. These values are close to those obtained by the complete ICE model in Table~\ref{tab:M-D_Results}. The result indicates that the
centroid-isometry objective appears to be the primary mechanism responsible for preserving and stabilizing the class-centroid configuration.

Removing the centroid-consistency term has a more noticeable effect on local sample organization. For example, in the two-dimensional setting, MNIST accuracy decreases from $97.26\%$ for full ICE to $94.26\%$ for the isometry-only model, while FMNIST decreases from $88.95\%$ to $84.69\%$. The corresponding reductions are smaller in the class-dimensional setting,
where sufficient latent capacity allows the prescribed centroid geometry to be represented more faithfully. Similar behavior is observed on the biological datasets: geometry-preservation and CNS remain strong, while classification performance benefits from including the consistency term.

Importantly, $\mathcal{L}_{\mathrm{iso}}$ does not explicitly minimize the distance between individual samples and their corresponding latent centroid. Rather, each latent sample contributes to the dynamically computed class centroid, and the resulting gradient adjusts the encoder so that the pairwise centroid configuration approaches the prescribed reference geometry. The centroid-consistency term provides the complementary sample-level constraint by explicitly encouraging observations from the same class to remain close to their latent centroid.

\noindent\textbf{Overall interpretation:}
The two ablations reveal an asymmetric but complementary role of the two ICE objectives. The centroid-consistency term alone admits trivial collapse because it does not constrain the relative locations of different classes. In contrast, the centroid-isometry term establishes the global class-centroid geometry and produces highly stable embeddings even when used independently. Adding the
centroid-consistency term further improves within-class localization and local sample organization while retaining the geometry induced by the isometry objective. Thus, the complete ICE objective combines class-level geometric preservation with explicit within-class compactness rather than relying on either
property alone.

\subsection{Sensitivity of CNS to Its Parameters}
\label{supp:sec:supp_cns_sensitivity}

The proposed Centroid Neighborhood Stability (CNS) metric depends on two user-defined parameters: the neighborhood size $k$, which determines the number of nearest neighboring classes considered, and the number of independent runs $R$, over which neighborhood consistency is evaluated. Since these parameters influence the notion of neighborhood stability being measured, it is important to examine whether the comparative conclusions drawn using CNS remain consistent across reasonable parameter choices. We use $k=3$ as an intermediate neighborhood size and $R=5$ independent runs for all experiments. Here we investigate the sensitivity of CNS to varying $k$ and $R$ on the MNIST dataset using both two-dimensional and class-dimensional embeddings. For each $(k,R)$ configuration, Table~\ref{supp:tab:cns_sensitivity_2d} and Table~\ref{supp:tab:cns_sensitivity_cd} report the average CNS values with the relative ranking of the methods in $p=2$ and $p=M$ dimensional space.

\begin{table}[!ht]
\centering
\small
\begin{tabular}{c c l}
\hline
$k$ & $R$ & CNS ranking (higher is better) \\
\hline
1 & 3  & IVIS (0.8667) $>$ ICE (0.7867) $>$ SE (0.7778) $>$ CE (0.6444) $>$ SUMAP (0.6044) \\
1 & 5  & IVIS (0.8507) $>$ ICE (0.7600) $>$ SE (0.7387) $>$ CE (0.5680) $>$ SUMAP (0.5627) \\
1 & 15 & IVIS (0.8267) $>$ ICE (0.6853) $>$ SE (0.6836) $>$ SUMAP (0.4996) $>$ CE (0.4844) \\
3 & 3  & IVIS (0.9111) $>$ SE (0.8281) $>$ ICE (0.7793) $>$ CE (0.7689) $>$ SUMAP (0.7437) \\
3 & 5  & IVIS (0.8969) $>$ SE (0.7796) $>$ ICE (0.7422) $>$ SUMAP (0.7271) $>$ CE (0.7067) \\
3 & 15 & IVIS (0.8498) $>$ SE (0.7390) $>$ SUMAP (0.6942) $>$ ICE (0.6833) $>$ CE (0.6361) \\
5 & 3  & IVIS (0.9076) $>$ SE (0.8827) $>$ ICE (0.8791) $>$ CE (0.8427) $>$ SUMAP (0.8391) \\
5 & 5  & IVIS (0.8821) $>$ ICE (0.8619) $>$ SE (0.8549) $>$ SUMAP (0.8277) $>$ CE (0.8016) \\
5 & 15 & IVIS (0.8491) $>$ ICE (0.8231) $>$ SE (0.8229) $>$ SUMAP (0.8027) $>$ CE (0.7639) \\
\hline
\end{tabular}
\caption{Sensitivity of CNS to neighborhood size $k$ and number of runs $R$ for the MNIST two-dimensional embedding. Methods are ranked according to their average CNS for each $(k,R)$ configuration.}
\label{supp:tab:cns_sensitivity_2d}
\end{table}

\textbf{Results and interpretation:} Tables~\ref{supp:tab:cns_sensitivity_2d} and~\ref{supp:tab:cns_sensitivity_cd} examine the sensitivity of CNS to the neighborhood size $k$ and the number of independent runs $R$. Two consistent trends are observed across almost all methods. First, for a fixed neighborhood size $k$, the absolute CNS values gradually decrease as the number of runs $R$ increases. This behavior is expected because additional independent runs provide more opportunities for variation in the recovered neighborhood sets. Second, for a fixed 
value of $R$, CNS generally increases with increasing neighborhood size $k$, since larger neighborhoods are less sensitive to changes in individual nearest neighbors.

The parameter $k$ determines the scale at which neighborhood stability is evaluated: $k=1$ measures consistency of the nearest class relationship, whereas larger values assess increasingly broader class neighborhoods. Because CNS operates on unordered neighbor sets, it measures whether the same neighboring classes are repeatedly recovered, independent of their internal ranking. Consequently, changing $k$ changes the scale at which neighborhood stability is evaluated and may therefore alter both the absolute CNS values and the relative ordering of closely performing methods.

\begin{table}[t]
\centering
\small
\begin{tabular}{c c l}
\hline
$k$ & $R$ & CNS ranking (higher is better) \\
\hline
1 & 3  & ICE (0.9911) $>$ SE (0.9422) $>$ IVIS (0.9289) $>$ CE (0.7733) $>$ SUMAP (0.6222) \\
1 & 5  & ICE (0.9893) $>$ SE (0.9360) $>$ IVIS (0.9067) $>$ CE (0.7173) $>$ SUMAP (0.5547) \\
1 & 15 & ICE (0.9852) $>$ SE (0.9262) $>$ IVIS (0.8927) $>$ CE (0.6539) $>$ SUMAP (0.4793) \\
3 & 3  & ICE (0.9674) $>$ IVIS (0.9111) $>$ SE (0.8993) $>$ CE (0.8267) $>$ SUMAP (0.7748) \\
3 & 5  & ICE (0.9609) $>$ IVIS (0.8969) $>$ SE (0.8791) $>$ CE (0.7813) $>$ SUMAP (0.7271) \\
3 & 15 & ICE (0.9495) $>$ IVIS (0.8498) $>$ SE (0.8407) $>$ CE (0.7146) $>$ SUMAP (0.6942) \\
5 & 3  & ICE (0.9742) $>$ SE (0.9582) $>$ IVIS (0.9324) $>$ CE (0.8658) $>$ SUMAP (0.8604) \\
5 & 5  & ICE (0.9675) $>$ SE (0.9456) $>$ IVIS (0.9248) $>$ CE (0.8416) $>$ SUMAP (0.8363) \\
5 & 15 & ICE (0.9543) $>$ SE (0.9246) $>$ IVIS (0.8998) $>$ CE (0.7893) $>$ SUMAP (0.7883) \\
\hline
\end{tabular}
\caption{Sensitivity of CNS to neighborhood size $k$ and number of runs $R$ for the MNIST class-dimensional embedding ($p=M$). Methods are ranked according to their average CNS for each $(k,R)$ configuration.}
\label{supp:tab:cns_sensitivity_cd}
\end{table}

Increasing $R$ generally decreases the absolute CNS values across all methods because additional independent runs provide more opportunities for variation in neighborhood membership. Importantly, this effect is systematic rather than method-specific. In the two-dimensional setting, where the latent space is severely constrained, some changes in method ordering occur as $k$ and $R$ vary. This is expected because small changes in centroid positions can alter neighborhood membership when the complete class geometry cannot be faithfully represented.

In contrast, the class-dimensional setting exhibits substantially more consistent behavior. ICE achieves the highest CNS for every tested combination of $k \in \{1,3,5\}$ and 
$R \in \{3,5,15\}$, while the relative ordering of the remaining methods changes only modestly. These results indicate that when sufficient latent dimensionality is available, 
the class-neighborhood structure learned by ICE remains highly reproducible across both local and broader neighborhood definitions and across different numbers of independent 
runs.
We used $k=3$ as an intermediate neighborhood size that captures local class relationships without restricting CNS to only the single nearest neighbor. 
We use $R=5$ so that the 15 independent experimental runs can be partitioned into three non-overlapping groups, allowing both the mean and variability of CNS to be reported. 
These settings were fixed across all methods and datasets.

\subsection{Effect of PCA Preprocessing on High-Dimensional Biological Data}
\label{supp:sec:supp_pca_ice}

Because the reference geometry of ICE is computed from Euclidean centroid distances in the ambient space, a natural question is whether these distances remain reliable in the presence of noisy, redundant, or weakly informative features. To investigate this issue, we evaluated ICE after applying PCA to the three biological datasets. PCA was fitted using only the training samples, and the smallest number of principal components preserving at least $98\%$ of the training-set variance was retained. The test samples were then centered and projected using the training-derived mean and principal directions. ICE was subsequently trained using the same experimental protocol employed for the original-feature experiments. Table~\ref{supp:tab:pca_ice_biological} compares ICE with and without PCA preprocessing in both the direct two-dimensional setting ($p=2$) and the class-dimensional setting ($p=M$). All results are reported as mean $\pm$ standard deviation over 15 independent runs. CNS was computed from three non-overlapping groups of five runs, as described in the experimental protocol.

\begin{table}[!ht]

\adjustbox{max width=\linewidth}{\begin{tabular}{llcccc}
\hline
\textbf{Latent Dim.} &
\textbf{Dataset} &
\textbf{Input Representation} &
\textbf{Kruskal Stress} &
\textbf{CNS} &
\textbf{5NN Accuracy (\%)} \\
\hline

\multirow{6}{*}{$p=2$}
& \multirow{2}{*}{10X PBMC}
& Original features & $0.15 \pm 0.01$ & $\mathbf{0.94 \pm 0.04}$ & $85.70 \pm 1.27$ \\
& & PCA (98\%) & $\mathbf{0.14 \pm 0.00}$ & $0.92 \pm 0.06$ & $\mathbf{87.54 \pm 1.35}$ \\

& \multirow{2}{*}{Mouse Bladder}
& Original features & $0.20 \pm 0.01$ & $0.79 \pm 0.01$ & $67.03 \pm 1.69$ \\
& & PCA (98\%) & $\mathbf{0.09 \pm 0.00}$ & $\mathbf{0.91 \pm 0.01}$ & $\mathbf{79.81 \pm 1.10}$ \\

& \multirow{2}{*}{Worm Neuron}
& Original features & $0.27 \pm 0.00$ & $0.65 \pm 0.01$ & $69.86 \pm 2.73$ \\
& & PCA (98\%) & $\mathbf{0.16 \pm 0.00}$ & $\mathbf{0.73 \pm 0.04}$ & $\mathbf{75.53 \pm 2.90}$ \\
\hline

\multirow{6}{*}{$p=M$}
& \multirow{2}{*}{10X PBMC}
& Original features & $0.06 \pm 0.01$ & $\mathbf{0.99 \pm 0.01}$ & $89.27 \pm 0.45$ \\
& & PCA (98\%) & $\mathbf{0.03 \pm 0.00}$ & $0.98 \pm 0.00$ & $\mathbf{91.04 \pm 0.42}$ \\

& \multirow{2}{*}{Mouse Bladder}
& Original features & $0.07 \pm 0.01$ & $0.91 \pm 0.01$ & $78.12 \pm 0.95$ \\
& & PCA (98\%) & $\mathbf{0.03 \pm 0.00}$ & $\mathbf{0.98 \pm 0.00}$ & $\mathbf{83.86 \pm 0.67}$ \\

& \multirow{2}{*}{Worm Neuron}
& Original features & $0.07 \pm 0.01$ & $\mathbf{0.99 \pm 0.02}$ & $87.14 \pm 1.00$ \\
& & PCA (98\%) & $\mathbf{0.04 \pm 0.01}$ & $0.95 \pm 0.02$ & $\mathbf{93.79 \pm 1.02}$ \\
\hline
\end{tabular}}
\caption{Performance of ICE on high-dimensional biological datasets with and without PCA preprocessing. PCA retains at least $98\%$ of the variance estimated from the training data. Lower Kruskal Stress is better, whereas higher CNS and 5NN accuracy are better.}
\label{supp:tab:pca_ice_biological}
\end{table}

\noindent\textbf{Results and interpretation:}
PCA preprocessing improves classification performance across all three biological datasets and generally improves centroid-distance preservation. In the direct two-dimensional setting, the effect is particularly pronounced for Mouse Bladder and Worm Neuron. For Mouse Bladder, Kruskal Stress decreases from $0.20$ to $0.09$ and 5NN accuracy increases from $67.03\%$ to $79.81\%$. For Worm Neuron, Stress decreases from $0.27$ to $0.16$ and accuracy increases from $69.86\%$ to $75.53\%$. The improvement for 10X PBMC is more modest. These results suggest that removing redundant or weakly informative feature directions can provide a cleaner representation from which the reference centroid geometry is estimated, which is particularly beneficial when that geometry must be represented in a severely constrained two-dimensional latent space.

A different pattern emerges in the class-dimensional setting. PCA continues to improve 5NN accuracy across all three datasets, whereas the changes in Kruskal Stress and CNS are generally smaller because ICE already achieves low Stress and high CNS from the original features when sufficient latent dimensionality is available. For example, on Worm Neuron, accuracy increases from $87.14\%$ to $93.79\%$, while Stress changes from $0.07$ to $0.04$ and CNS remains high ($0.99$ versus $0.95$). Similarly, on 10X PBMC, accuracy increases from $89.27\%$ to $91.04\%$, while CNS remains nearly unchanged ($0.99$ versus $0.98$). Mouse Bladder shows a larger improvement in both geometry and classification, with Stress decreasing from $0.07$ to $0.03$, CNS increasing from $0.91$ to $0.98$, and accuracy increasing from $78.12\%$ to $83.86\%$.

Overall, these results show that PCA preprocessing can improve both the reference centroid geometry and sample-level discrimination, particularly for high-dimensional biological data. However, its effect depends on the available latent dimensionality. Under the severe constraint of $p=2$, improving the input representation can substantially benefit both geometric preservation and classification. At $p=M$, ICE already preserves class-centroid geometry with low Stress and high CNS on the original features, while PCA can still yield substantial gains in 5NN accuracy. This distinction further illustrates that classification performance and class-level stability capture complementary properties: 5NN accuracy evaluates local sample discrimination within an embedding, whereas CNS measures the reproducibility of class-neighborhood relationships across independent runs.

\section{Comparative Benchmark}
\label{sec:benchmark}
This section compares ICE with representative supervised dimensionality reduction methods in two-dimensional and $M$-dimensional spaces. Performance is evaluated using complementary measures of local structure and global geometry. Since the results for $M-1$ and $M$ dimensions are nearly identical, we report only the latter. We also present additional visualizations on FMNIST and 10XPBMC and compare the computational cost of ICE and CE.

\subsection{Results in Low Dimension (p = 2)}
Table~\ref{tab:2D_Results} summarizes performance in the two-dimensional embedding space, a highly constrained regime where class structure and geometry must be preserved under severe dimensional limitations.
\begin{table}[!ht]
\small
\caption{Results on 2D embedding across datasets. Numbers for csTSNE on MNIST and FMNIST are not reported due to computational limitations on full datasets.}
\vspace{0.1 cm}
\adjustbox{max width=\linewidth}{\begin{tabular}{llcccccc}
\toprule
\textbf{Metric} & \textbf{Model} & \multicolumn{6}{c}{\textbf{Datasets}} \\
\cmidrule(lr){3-8}
 &  & MNIST & FMNIST & USPS & PBMC & BLADDER & NEURON \\
\midrule

\multirow{6}{*}{CNS}
& CE  & 0.72$\pm$0.02 & 0.83$\pm$0.02 & 0.74$\pm$0.02 & 0.81$\pm$0.01 & \textbf{0.83}$\pm$\textbf{0.03} & 0.64$\pm$0.05 \\
& SE  & \textbf{0.90}$\pm$\textbf{0.07} & 0.86$\pm$0.08 & 0.84$\pm$0.11 & 0.86$\pm$0.00 & 0.69$\pm$0.09 & 0.54$\pm$0.06 \\
& ICE & 0.78$\pm$0.05 & \textbf{0.94}$\pm$\textbf{0.01} & \textbf{0.84}$\pm$\textbf{0.04} & 0.94$\pm$0.04 & 0.79$\pm$0.01 & 0.65$\pm$0.01 \\
& IVIS & 0.83$\pm$0.09 & 0.85$\pm$0.65 & 0.77$\pm$0.10 & 0.87$\pm$0.07 & 0.71$\pm$0.13 & 0.63$\pm$0.12 \\
& csTSNE & NA & NA & 0.68$\pm$0.02 & \textbf{0.95}$\pm$\textbf{0.01} & 0.76$\pm$0.03 & \textbf{0.73}$\pm$\textbf{0.04} \\
& SUMAP & 0.77$\pm$0.05 & 0.93$\pm$0.01 & 0.68$\pm$0.04 & 0.88$\pm$0.03 & 0.63$\pm$0.02 & 0.64$\pm$0.02 \\

\midrule

\multirow{6}{*}{Stress}
& CE  & 0.29$\pm$0.02 & 0.31$\pm$0.02 & 0.27$\pm$0.02 & 0.29$\pm$0.03 & 0.80$\pm$0.11 & 0.86$\pm$0.05 \\
& SE  & 0.83$\pm$0.01 & 0.72$\pm$0.03 & \textbf{0.19}$\pm$\textbf{0.03} & 0.89$\pm$0.01 & 0.95$\pm$0.00 & 0.90$\pm$0.01 \\
& ICE & \textbf{0.23}$\pm$\textbf{0.01} & \textbf{0.17}$\pm$\textbf{0.02} & 0.21$\pm$0.01 & \textbf{0.15}$\pm$\textbf{0.01} & \textbf{0.20}$\pm$\textbf{0.01} & \textbf{0.27}$\pm$\textbf{0.00} \\
& IVIS & 0.44$\pm$0.05 & 0.47$\pm$0.03 & 0.45$\pm$0.02 & 0.62$\pm$0.04 & 0.63$\pm$0.02 & 0.62$\pm$0.01 \\
& csTSNE & NA & NA & 2.12$\pm$0.06 & 0.92$\pm$0.00 & 0.92$\pm$0.00 & 0.65$\pm$0.03 \\
& SUMAP & 2.91$\pm$0.04 & 1.58$\pm$0.04 & 1.32$\pm$0.04 & 0.92$\pm$0.00 & 0.92$\pm$0.00 & 0.54$\pm$0.02 \\

\midrule

\multirow{6}{*}{Procrustes}
& CE  & 0.40$\pm$0.24 & 0.17$\pm$0.07 & \textbf{0.05}$\pm$\textbf{0.04} & 0.20$\pm$0.11 & 0.07$\pm$0.03 & 0.47$\pm$0.16 \\
& SE  & 0.36$\pm$0.15 & 0.54$\pm$0.32 & 0.29$\pm$0.16 & \textbf{0.04}$\pm$\textbf{0.01} & 0.17$\pm$0.07 & 0.35$\pm$0.06 \\
& ICE & 0.39$\pm$0.25 & \textbf{0.02}$\pm$\textbf{0.00} & 0.22$\pm$0.17 & 0.31$\pm$0.17 & 0.18$\pm$0.08 & 0.72$\pm$0.17 \\
& IVIS & \textbf{0.24}$\pm$\textbf{0.16} & 0.09$\pm$0.04 & 0.08$\pm$0.11 & 0.09$\pm$0.02 & \textbf{0.04}$\pm$\textbf{0.05} & \textbf{0.33}$\pm$\textbf{0.13} \\
& csTSNE & NA & NA & 0.12$\pm$0.08 & 0.22$\pm$0.12 & 0.15$\pm$ 0.07 & 0.63$\pm$0.12 \\
& SUMAP & 0.33$\pm$0.17 & 0.13$\pm$0.07 & 0.37$\pm$0.19 & 0.12$\pm$0.03 & 0.23$\pm$0.09 & 0.77$\pm$0.12 \\

\midrule

\multirow{6}{*}{Correlation}
& CE  & 0.64$\pm$0.05 & 0.81$\pm$0.03 & 0.70$\pm$0.07 & 0.70$\pm$0.09 & 0.31$\pm$0.12 & -0.16$\pm$0.08 \\
& SE  & 0.67$\pm$0.04 & 0.87$\pm$0.02 & 0.75$\pm$0.04 & 0.26$\pm$0.05 & 0.45$\pm$0.13 & -0.24$\pm$0.06 \\
& ICE & \textbf{0.85}$\pm$\textbf{0.02} & \textbf{0.93}$\pm$\textbf{0.03} & \textbf{0.89}$\pm$\textbf{0.02} & \textbf{0.95}$\pm$\textbf{0.00} & \textbf{0.94}$\pm$\textbf{0.01} & 0.78$\pm$0.01 \\
& IVIS & 0.32$\pm$0.08 & 0.42$\pm$0.03 & 0.39$\pm$0.03 & 0.08$\pm$0.04 & -0.02$\pm$0.04 & 0.09$\pm$0.02 \\
& csTSNE & NA & NA & 0.46$\pm$ 0.14 & 0.63$\pm$0.02 & 0.01$\pm$0.04 & 0.79$\pm$0.02 \\
& SUMAP & 0.52$\pm$0.06 & 0.71$\pm$0.04 & 0.57$\pm$0.06 & 0.46$\pm$0.08 & 0.33$\pm$0.08 & \textbf{0.82}$\pm$\textbf{0.06} \\

\midrule
\multirow{6}{*}{Accuracy}
& CE  & 96.26$\pm$0.14 & 88.02$\pm$0.43 & 96.42$\pm$0.23 & 86.14$\pm$ 1.13 & 70.16$\pm$2.05 & 49.16$\pm$8.65 \\
& SE  & 94.93$\pm$0.24 & 81.65$\pm$0.72 & 96.19$\pm$0.20 & \textbf{88.44}$\pm$\textbf{0.63} & 74.18$\pm$1.44 & \textbf{76.52}$\pm$\textbf{1.80} \\
& ICE & \textbf{97.26}$\pm$\textbf{0.06} & \textbf{88.95}$\pm$\textbf{0.61} & \textbf{96.65}$\pm$\textbf{0.20} & 85.70$\pm$1.27 & 67.03$\pm$ 1.69 & 69.86$\pm$2.73 \\
& IVIS & 68.77$\pm$2.82 & 67.20$\pm$6.50 & 76.33$\pm$2.29 & 74.93$\pm$3.05 & 62.01$\pm$4.28 & 64.71$\pm$1.64 \\
& csTSNE & NA & NA & 96.23$\pm$0.19 & 82.91$\pm$0.12 & \textbf{83.93}$\pm$\textbf{0.32} & 67.83$\pm$1.14 \\
& SUMAP & 94.37$\pm$0.14 & 83.71$\pm$0.19 & 92.01$\pm$0.42 & 80.89$\pm$0.53 & 78.97$\pm$1.09 & 66.72$\pm$1.20 \\

\bottomrule
\end{tabular}}
\label{tab:2D_Results}
\end{table}

\noindent\textbf{Centroid geometry preservation:}
ICE consistently achieves the lowest Kruskal Stress and highest Pearson Correlation across most datasets, indicating improved preservation of centroid-distance relationships in 2D. The newly added Procrustes Disparity further shows that ICE remains highly competitive in terms of overall centroid geometry, although the best Procrustes score varies across datasets. This advantage is particularly evident on PBMC, BLADDER, and NEURON, where maintaining global class geometry is more challenging due to the larger number of classes. Methods such as CE and SE, which do not explicitly model inter-class relationships, exhibit higher Stress and lower Correlation, reflecting distortion of centroid geometry. Similarly, manifold-based approaches (SUMAP and csTSNE), which prioritize local neighborhood structure, show less consistent global distance preservation.

\noindent\textbf{Stability (CNS):}
Across the six datasets, ICE achieves the highest CNS on FMNIST and USPS. On higher-dimensional biological datasets (PBMC, BLADDER, NEURON), ICE remains consistently close to the best method despite the increased number of classes (8, 16, and 10, respectively), where preserving neighborhood structure in 2D becomes inherently difficult. In these datasets, csTSNE performs best in two cases and CE in one, while ICE consistently ranks second. Notably, csTSNE operates on PCA-reduced data, which may improve stability. In contrast, ICE achieves competitive CNS without such preprocessing while enforcing centroid geometry.

\noindent\textbf{Local neighborhood structure (KNN accuracy):}
All methods achieve competitive KNN accuracy in 2D, with ICE attaining the highest accuracy on MNIST, FMNIST, and USPS. This indicates that ICE preserves local neighborhood structure while maintaining global geometry. On more complex biological datasets, methods such as SE and csTSNE show comparable or higher accuracy, suggesting that strong local structure can yield good classification performance even when global geometry is distorted.

\noindent\textbf{Discussion:}
The two-dimensional results highlight an inherent trade-off between preserving local neighborhood structure and maintaining the global arrangement of class centroids. Although ICE consistently preserves centroid geometry better than competing methods, the limited representational capacity of a two-dimensional latent space prevents any method from faithfully capturing the complete centroid configuration. Consequently, all approaches must compromise between local discrimination and global geometric preservation. This observation motivates the evaluation of higher-dimensional embeddings, where the intrinsic centroid geometry can be represented more faithfully.

\subsection{Results in Higher Dimension (p = M)}
Table~\ref{tab:M-D_Results} presents performance in the $M$-dimensional embedding space, where the latent dimension is sufficient to represent intrinsic centroid geometry and assess preservation of global structure and local neighborhoods.

\noindent\textbf{Centroid geometry preservation:}
ICE achieves consistently low Kruskal Stress, Pearson Correlation close to one, and the lowest Procrustes Disparity on five of the six datasets, ranking a close second on NEURON (0.07 versus 0.06 for csTSNE). Together, these complementary metrics indicate that the learned embeddings faithfully preserve both pairwise centroid distances and the overall centroid configuration. This consistency across image and biological datasets demonstrates that the centroid-isometry constraint successfully recovers the original class geometry when sufficient latent dimensionality is available. In contrast, competing methods exhibit higher stress, lower correlation, and larger Procrustes Disparity, suggesting that increasing the embedding dimension alone is insufficient to recover the underlying centroid geometry without explicitly modeling inter-centroid relationships.

\noindent\textbf{Stability (CNS):}
ICE attains the highest CNS values across most datasets, with scores approaching one, indicating highly stable class-level neighborhoods. This improvement over the 2D case reflects the role of sufficient dimensionality in stabilizing embeddings. While csTSNE performs competitively on some biological datasets, ICE remains consistently strong without requiring preprocessing.

\begin{table}[!ht]
\small
\caption{Results on M-dimensional embedding across datasets. Numbers for csTSNE on MNIST and FMNIST are not reported due to computational limitations on full datasets.}
\vspace{0.1 cm}
\adjustbox{max width=\linewidth}{\begin{tabular}{llcccccc}
\toprule
\textbf{Metric} & \textbf{Model} & \multicolumn{6}{c}{\textbf{Datasets}} \\
\cmidrule(lr){3-8}
 &  & MNIST & FMNIST & USPS & PBMC & BLADDER & NEURON \\
\midrule

\multirow{6}{*}{CNS}
& CE  & 0.86$\pm$0.04 & 0.88$\pm$0.04 & 0.84$\pm$0.03 & 0.89$\pm$0.03 & 0.79$\pm$0.02 & 0.68$\pm$0.11 \\
& SE  & 0.87$\pm$0.12 & 0.98$\pm$0.04 & 0.68$\pm$0.09 & 0.94$\pm$0.00 & 0.79$\pm$0.14 & 0.62$\pm$0.09 \\
& ICE & \textbf{0.98}$\pm$\textbf{0.01} & \textbf{0.99}$\pm$\textbf{0.01} & \textbf{1.00}$\pm$\textbf{0.00} & \textbf{0.99}$\pm$\textbf{0.01} & 0.91$\pm$0.01 & \textbf{0.99}$\pm$\textbf{0.02} \\
& IVIS & 0.83$\pm$0.09 & 0.86$\pm$0.10 & 0.77$\pm$0.10 & 0.87$\pm$0.07 & 0.72$\pm$0.13 & 0.63$\pm$0.12 \\
& csTSNE & NA & NA & 0.90$\pm$0.02 & 0.98$\pm$0.02 & \textbf{0.97}$\pm$\textbf{0.01} & 0.94$\pm$0.00 \\
& SUMAP & 0.71$\pm$0.02 & 0.93$\pm$0.01 & 0.73$\pm$0.03 & 0.93$\pm$0.01 & 0.84$\pm$0.01 & 0.87$\pm$0.00 \\

\midrule

\multirow{6}{*}{Stress}
& CE  & 0.21$\pm$0.02 & 0.19$\pm$0.05 & 0.23$\pm$0.05 & 0.27$\pm$0.03 & 0.85$\pm$0.08 & 0.98$\pm$0.13 \\
& SE  & 0.50$\pm$0.05 & 0.75$\pm$0.00 & 0.55$\pm$0.02 & 0.93$\pm$0.01 & 0.98$\pm$0.00 & 0.97$\pm$0.00 \\
& ICE & \textbf{0.01}$\pm$\textbf{0.00} & \textbf{0.01}$\pm$\textbf{0.00} & \textbf{0.01}$\pm$\textbf{0.00} & \textbf{0.06}$\pm$\textbf{0.01} & \textbf{0.07}$\pm$\textbf{0.01} & \textbf{0.07}$\pm$\textbf{0.01} \\
& IVIS & 0.47$\pm$0.07 & 0.47$\pm$0.03 & 0.45$\pm$0.02 & 0.62$\pm$0.04 & 0.63$\pm$0.03 & 0.62$\pm$0.01 \\
& csTSNE & 0.73$\pm$0.00 & 0.70$\pm$0.00 & 1.93$\pm$ 0.02 & 0.92$\pm$0.01 & 0.92$\pm$0.00 & 0.68$\pm$0.03 \\
& SUMAP & 2.35$\pm$0.02 & 1.19$\pm$0.01 & 1.07$\pm$0.02 & 0.94$\pm$0.00 & 0.94$\pm$0.00 & 0.64$\pm$0.01 \\

\midrule

\multirow{6}{*}{Procrustes}
& CE  & 0.07$\pm$0.01 & 0.28$\pm$0.05 & 0.09$\pm$0.02 & 0.29$\pm$0.09 & 0.09$\pm$0.01 & 0.17$\pm$0.03 \\
& SE  & 0.16$\pm$0.03 & 0.19$\pm$0.04 & 0.12$\pm$0.03 & 0.08$\pm$0.03 & 0.33$\pm$0.08 & 0.65$\pm$0.06 \\
& ICE & \textbf{0.02}$\pm$\textbf{0.01} & \textbf{0.02}$\pm$\textbf{0.00} & \textbf{0.02}$\pm$\textbf{0.01} & \textbf{0.02}$\pm$\textbf{0.00} & \textbf{0.01}$\pm$\textbf{0.00} & 0.07$\pm$0.01 \\
& IVIS & 0.12$\pm$0.09 & 0.16$\pm$0.12 & 0.09$\pm$0.13 & 0.14$\pm$0.05 & 0.20$\pm$0.11 & 0.48$\pm$0.18 \\
& csTSNE & NA & NA & 0.03$\pm$0.01 & 0.09$\pm$0.06 & 0.01$\pm$0.01 & 0.06$\pm$0.02 \\
& SUMAP & 0.05$\pm$0.01 & \textbf{0.02}$\pm$\textbf{0.00} & 0.15$\pm$0.03 & 0.03$\pm$0.01 & 0.04$\pm$0.01 & \textbf{0.06}$\pm$\textbf{0.01} \\

\midrule

\multirow{6}{*}{Correlation}
& CE  & 0.67$\pm$0.03 & 0.88$\pm$0.02 & 0.77$\pm$0.06 & 0.74$\pm$0.09 & 0.37$\pm$0.07 & $-$0.16$\pm$0.09 \\
& SE  & 0.81$\pm$0.02 & 0.89$\pm$0.01 & 0.69$\pm$0.06 & 0.57$\pm$0.06 & 0.75$\pm$0.09 & -0.35$\pm$0.03 \\
& ICE & \textbf{1.00}$\pm$\textbf{0.00} & \textbf{1.00}$\pm$\textbf{0.00} & \textbf{1.00}$\pm$\textbf{0.00} & \textbf{1.00}$\pm$\textbf{0.00} & \textbf{1.00}$\pm$\textbf{0.00} & \textbf{1.00}$\pm$\textbf{0.00} \\
& IVIS & 0.26$\pm$0.11 & 0.42$\pm$0.04 & 0.39$\pm$0.03 & 0.08$\pm$0.04 & -0.02$\pm$0.04 & 0.09$\pm$0.02 \\
& csTSNE & 0.26$\pm$0.00 & 0.45$\pm$0.00 & 0.81$\pm$0.04 & 0.66$\pm$0.01 & 0.06$\pm$0.00 & 0.92$\pm$0.01 \\
& SUMAP & $-$0.06$\pm$0.09 & 0.61$\pm$0.02 & 0.00$\pm$0.17 & 0.52$\pm$0.04 & 0.16$\pm$0.10 & 0.88$\pm$0.02 \\

\midrule

\multirow{6}{*}{Accuracy}
& CE  & 97.41$\pm$0.15 & 88.80$\pm$0.19 & 97.34$\pm$0.21 & \textbf{89.89}$\pm$\textbf{0.86} & \textbf{81.74}$\pm$\textbf{1.15} & 33.40$\pm$7.04 \\
& SE  & 90.20$\pm$0.78 & 88.65$\pm$0.12 & 94.62$\pm$0.48 & 81.85$\pm$1.48 & 59.35$\pm$1.98 & 64.73$\pm$2.08 \\
& ICE & \textbf{97.92}$\pm$\textbf{0.08} & \textbf{89.23}$\pm$\textbf{0.22} & \textbf{97.72}$\pm$\textbf{0.15} & 89.27$\pm$0.45 & 78.12$\pm$0.95 & \textbf{87.14}$\pm$\textbf{1.00} \\
& IVIS & 68.77$\pm$2.82 & 67.23$\pm$6.39 & 76.33$\pm$2.29 & 74.93$\pm$3.05 & 62.01$\pm$4.28 & 64.71$\pm$1.64 \\
& csTSNE & 87.33$\pm$0.00 & 78.56$\pm$0.03 & 95.59$\pm$0.04 & 82.42$\pm$0.07 & 81.63$\pm$0.06 & 75.11$\pm$0.47 \\
& SUMAP & 93.98$\pm$0.12 & 83.33$\pm$0.06 & 91.30$\pm$0.45 & 80.56$\pm$0.35 & 77.13$\pm$0.24 & 69.00$\pm$0.60 \\

\bottomrule
\end{tabular}}
\label{tab:M-D_Results}
\end{table}

\noindent\textbf{Local neighborhood structure (KNN accuracy):}
All methods show improved KNN accuracy compared to 2D, reflecting better local neighborhood preservation. ICE achieves the highest accuracy on MNIST, FMNIST, USPS, and NEURON, and remains competitive on PBMC and BLADDER, indicating that it preserves local structure while maintaining global geometry.

\noindent\textbf{On the role of CNS as a stability metric:}
The benchmark results illustrate the complementary roles of geometric preservation and embedding stability. Kruskal Stress, Pearson Correlation, and Procrustes Disparity quantify different aspects of agreement between the original and learned centroid geometries, whereas CNS evaluates whether class-level neighborhood relationships remain consistent across independent runs. This distinction is particularly evident in the 2D results, where ICE achieves strong Stress and Correlation but does not always attain the highest CNS. For example, csTSNE achieves higher CNS on some biological datasets despite weaker preservation of the original centroid geometry. Thus, accurate recovery of centroid geometry does not necessarily imply reproducible class neighborhoods across runs.

The comparison between the 2D and $M$-dimensional results further highlights the effect of representational capacity. In 2D, ICE is competitive in terms of Procrustes Disparity but does not consistently outperform the competing methods. At $p=M$, however, ICE achieves the lowest Procrustes Disparity on nearly every dataset while also attaining low Stress, high Correlation, and generally high CNS. This behavior is consistent with Lemma~\ref{lemma:centroid_dim}: when sufficient latent dimensionality is available, the centroid-isometry objective can more accurately represent the prescribed class-centroid geometry. CNS complements these geometric measures by additionally quantifying whether the resulting class-neighborhood structure is reproducible across independent runs.

\begin{figure}[!h]
    \centering
    \includegraphics[width=17.5 cm, height=16.5 cm]{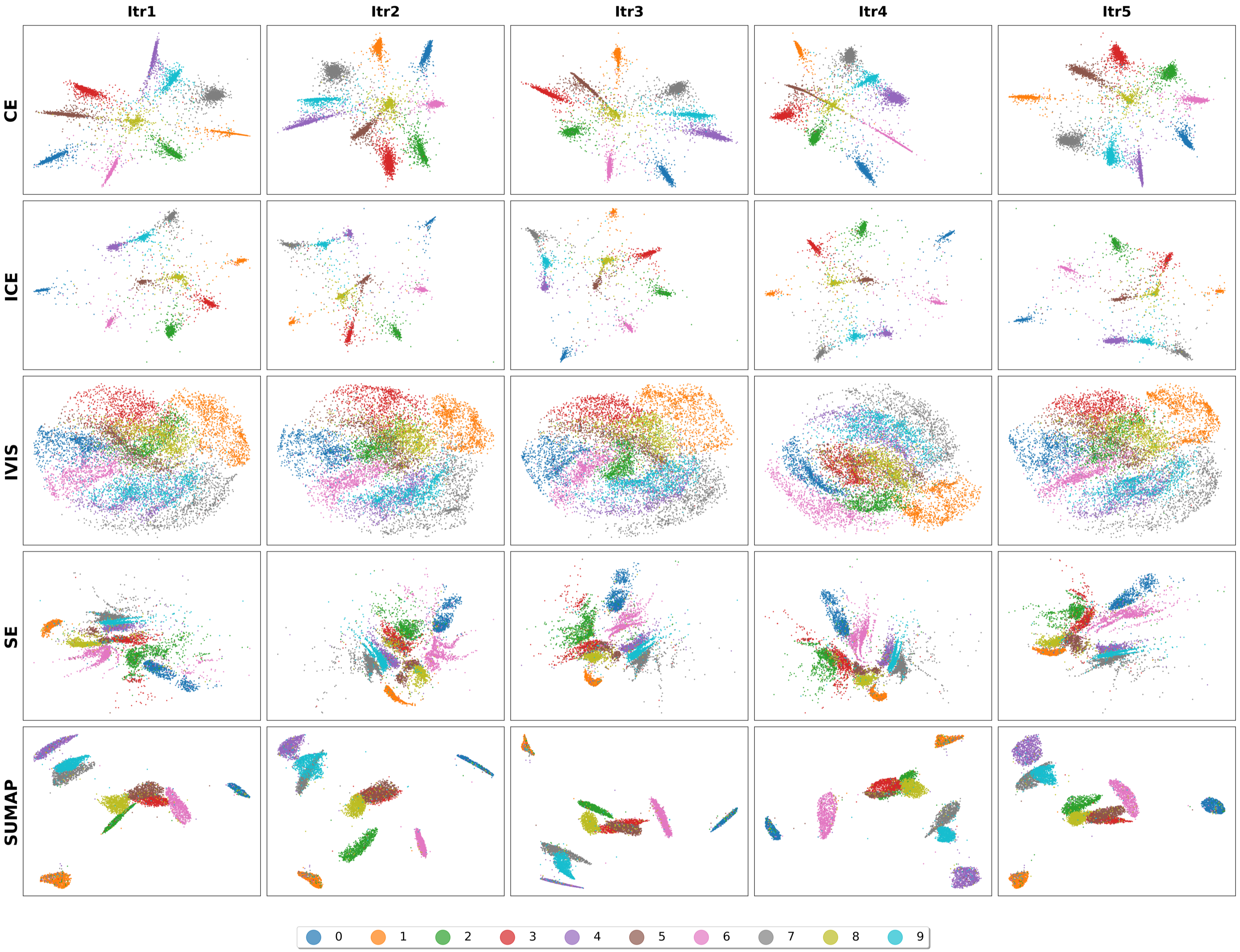}
    \caption{Two-dimensional scatter plots of MNIST test digits across five independent runs for each method.}
    \label{fig:embding_comparison_2D}
    \vspace{0.25 cm}
\end{figure}

\noindent\textbf{Visualization protocol:}
We report both direct two-dimensional embeddings and two-dimensional projections obtained via PCA from the corresponding $M$-dimensional representations. The direct embeddings reflect how each method operates under strict low-dimensional constraints, while the PCA-based projections provide a qualitative visualization of the learned higher-dimensional geometry. PCA was selected because it is deterministic, parameter-free, and does not require additional optimization or hyperparameter tuning, thereby avoiding the variability introduced by nonlinear visualization methods such as t-SNE or UMAP. The PCA projections are used solely for visualization and are not used for quantitative evaluation. All quantitative conclusions are based on the reported CNS, Kruskal Stress, Pearson correlation, kNN accuracy, and Procrustes Disparity. The inclusion of both views ensures that comparisons are not biased toward methods optimized specifically for two-dimensional layouts, but instead reflect the consistency of the learned representations.

Figure \ref{fig:embding_comparison_2D} presents the direct two-dimensional embeddings of MNIST test digits across five independent runs. CE, ICE, and SUMAP form visually distinct clusters, whereas IVIS fails to produce meaningful class separation. Although CE separates classes reasonably well, its class neighborhood structure varies across runs. For example, digits 0 (blue) and 1 (orange) are far apart in run 1 but appear adjacent in run 2; similarly, the neighborhood of digits such as 2 (green), 3 (red), and 5 (brown) changes across runs. SUMAP shows similar behavior: while it produces compact clusters, the relative arrangement of classes is unstable, with groups such as 2, 3, 5, and 6 merging in some runs and separating in others.

Compared to these methods, ICE produces more consistent embeddings across runs. The relative arrangement of classes is largely preserved, with visually similar digits such as 4, 7, and 9 consistently appearing as neighbors, and dissimilar digits such as 0 and 1 remaining well separated. Some variability remains in low-dimensional embeddings, particularly among digits such as 2, 3, and 5. Super-Encoder (SE) achieves the highest CNS in this setting, but its embeddings collapse class structure, failing to preserve meaningful inter-class geometry, as reflected by high stress and lower correlation. IVIS produces diffuse, near-spherical distributions with limited separation.

\begin{figure}[!ht]
    \centering
    \includegraphics[width=17.5 cm, height=16.5 cm]{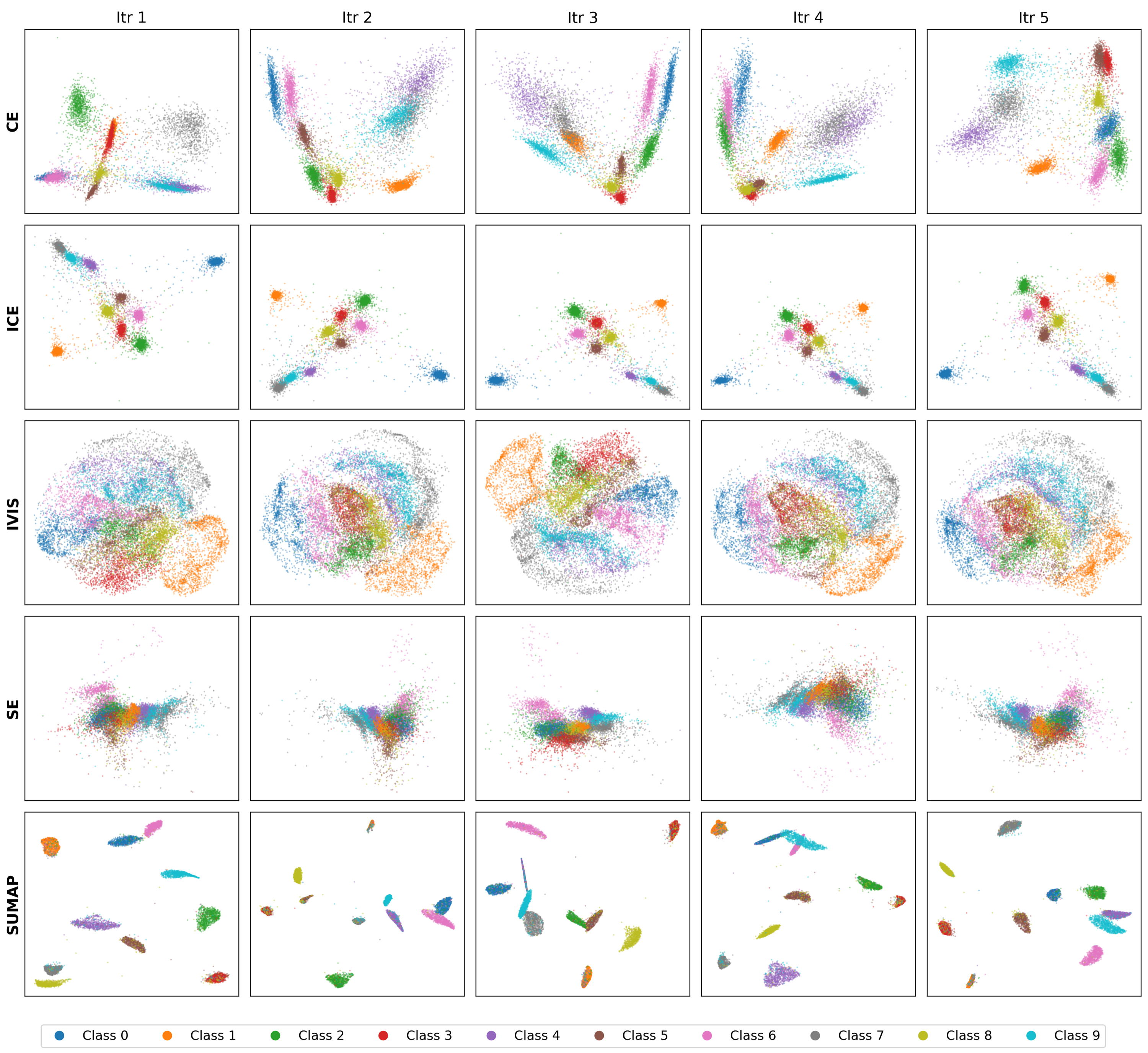}
    \caption{Two-dimensional scatter plots of MNIST test digits across five independent runs for each method. The 2D embeddings are obtained by applying PCA to the corresponding $M$-dimensional representations. }
    \vspace{0.25 cm}
    \label{fig:embding_comparison_M-D}
\end{figure}

Figure \ref{fig:embding_comparison_M-D} shows two-dimensional visualizations obtained via PCA from the $M$-dimensional embeddings. Here, the plots reflect how well each method preserves learned geometry rather than how it optimizes directly in 2D. CE exhibits increased overlap compared to its direct 2D embedding, indicating that centroid reconstruction alone does not ensure stable global geometry. SUMAP continues to form distinct clusters, but the relative arrangement varies significantly across runs. SE maintains similar CNS across runs, but its PCA-based embeddings show increased overlap between classes, indicating a collapse of inter-class geometry. IVIS remains diffuse and poorly separated.

In contrast, ICE produces remarkably stable embeddings across all runs while maintaining clear class separation. The relative arrangement of classes remains consistent, with dissimilar digits such as 0 and 1 always well separated and structurally similar digits such as 4, 7, and 9 consistently forming neighborhoods. Digits with similar geometric features (2, 3, 5, 6, and 8) form coherent clusters in the central region. Importantly, ICE shows improved stability when moving from direct 2D embeddings to PCA-based projections of the $M$-dimensional representations. This behavior is explained by Lemma~\ref{lemma:centroid_dim}, which states that centroid configurations can be faithfully represented only when the latent dimension is at least $M-1$. In the direct 2D case ($p=2$), the model operates in a space-constrained configuration, leading to distortions in centroid relationships and reduced stability. In contrast, the $M$-dimensional embedding provides sufficient degrees of freedom to realize centroid geometry, and the PCA projection preserves this structure.

The observed increase in CNS and consistency across runs is therefore a direct consequence of the underlying geometry. This highlights the interpretability of ICE and its ability to preserve both local neighborhood structure and global class geometry in a robust and theoretically grounded manner.

\subsection{Visualization of FMNIST}
\label{supp:sec:supp_fmnist}

To verify that the qualitative behavior observed on MNIST extends to other domains, we present the same two visualization protocols for FMNIST and 10XPBMC.

\begin{figure}[!ht]
    \centering
    \includegraphics[width=17.5 cm, height=16.5 cm]{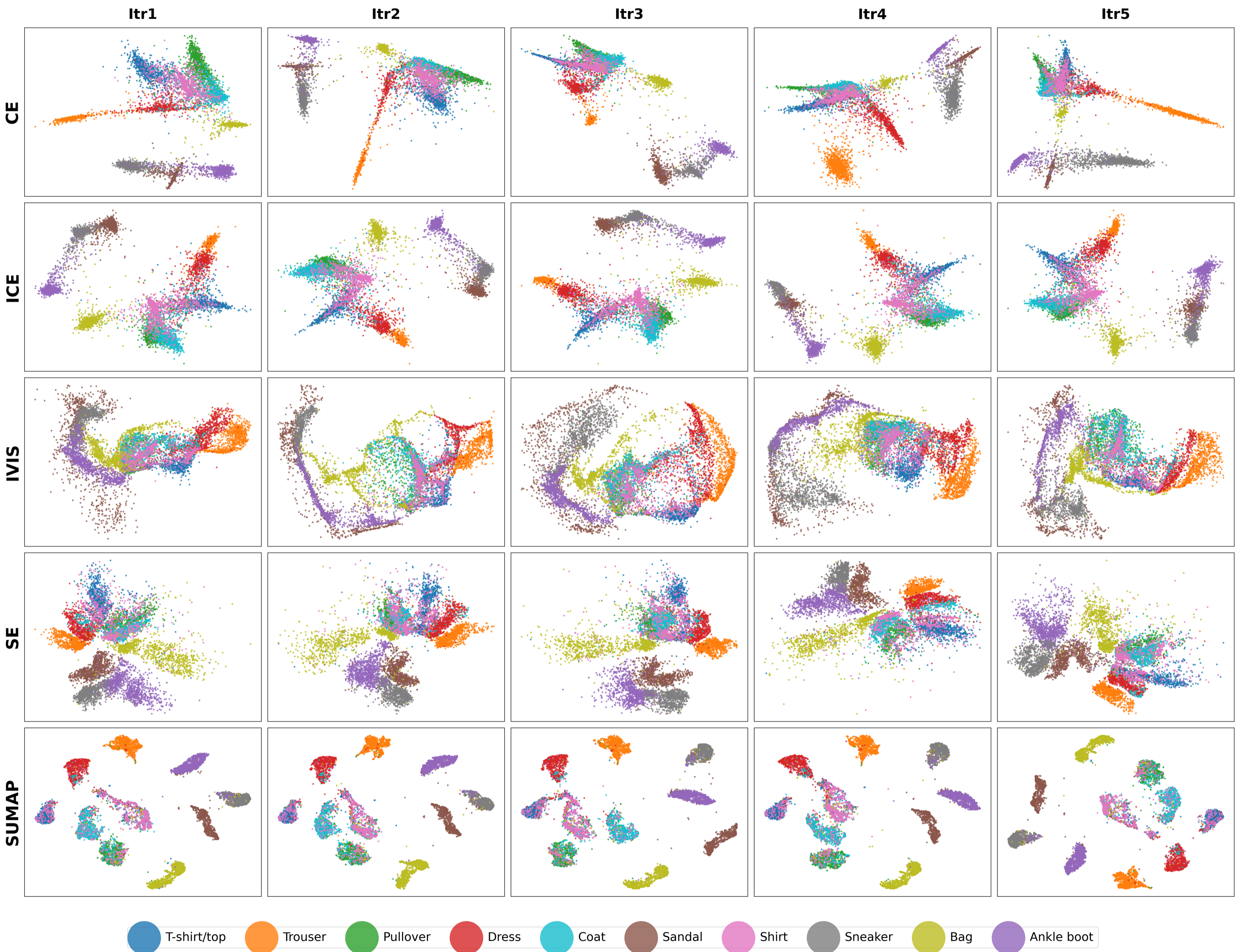}
    \caption{Two-dimensional scatter plots of FMNIST fashion item from test set across five independent runs for each method. }
    \label{supp:fig:embding_comparison_2D}
\end{figure}

We start with the direct two-dimensional layout of the fashion items of different models in Figure ~\ref{supp:fig:embding_comparison_2D}. For CE, the visualization of the \emph{trouser} class varies noticeably across runs. In most cases, the samples form a near-linear structure, with the exception of iteration 4 where the cluster becomes more elliptical. A similar behavior is observed for the \emph{dress} class. In addition, the relative positions among footwear classes are not consistent: while \emph{sandal} typically lies between \emph{ankle boot} and \emph{sneaker}, this relationship changes in iteration 3.

In contrast, the embeddings produced by ICE are highly consistent across runs. The three footwear classes (\emph{ankle boot}, \emph{sneaker}, and \emph{sandal}) are consistently grouped on one side of the embedding, while apparel classes (e.g., \emph{coat}, \emph{t-shirt}, \emph{shirt}) occupy the opposite side, with \emph{bag} positioned between these groups. Although CE occasionally exhibits a similar coarse structure, ICE maintains both consistent class shapes and stable relative positioning across runs.

For IVIS, the global structure varies substantially across runs, despite partial consistency in relative class positioning. In iteration 3, the embedding exhibits a circular structure, whereas in other runs it appears more diffuse. Significant overlap is observed among classes, particularly for \emph{bag}, which lies close to multiple categories despite having a distinct shape. Super-Encoder (SE) separates footwear from apparel to some extent, but the \emph{bag} class remains poorly separated. SUMAP tends to distribute classes more uniformly, without a clear separation between footwear and apparel as observed in ICE and CE. Moreover, class neighborhoods vary across runs; for example, \emph{trouser} and \emph{sneaker} exhibit changing nearest neighbors.

\begin{figure}[!ht]
    \centering
    \includegraphics[width=17.5 cm, height=16.5 cm]{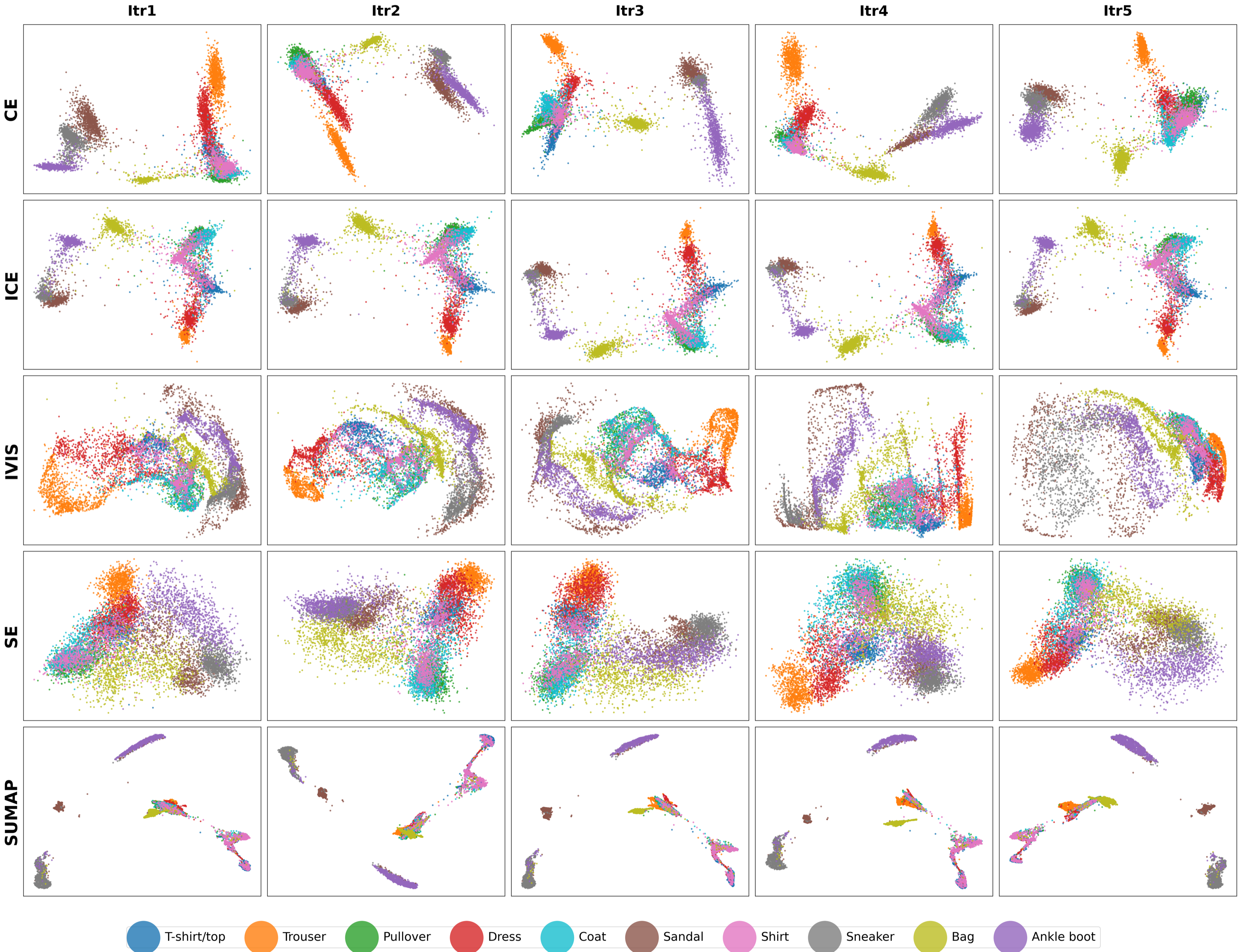}
    \caption{Two-dimensional scatter plots of FMNIST fashion item from test set across five independent runs for each method. The 2D embeddings are obtained by applying PCA to the corresponding $M$-dimensional representations. }
    \label{supp:fig:embding_comparison_M-D_2D}
\end{figure}

When examining the two-dimensional embeddings obtained via PCA from the common $M$-dimensional space in Figure ~\ref{supp:fig:embding_comparison_M-D_2D} , ICE better preserves both global and local structure across runs. Although certain classes (e.g., \emph{pullover}) are not consistently well separated, the overall arrangement remains stable. In contrast, SUMAP shows the largest variation in both global layout and local neighborhoods, followed by SE, IVIS, and CE. These observations suggest that these methods do not consistently preserve class geometry in higher-dimensional representations.

\subsection{Visualization of  10XPBMC Embeddings}
\label{supp:sec:supp_pbmc}

\begin{figure}[!ht]
    \centering
    \includegraphics[width=17.5 cm, height=16.5 cm]{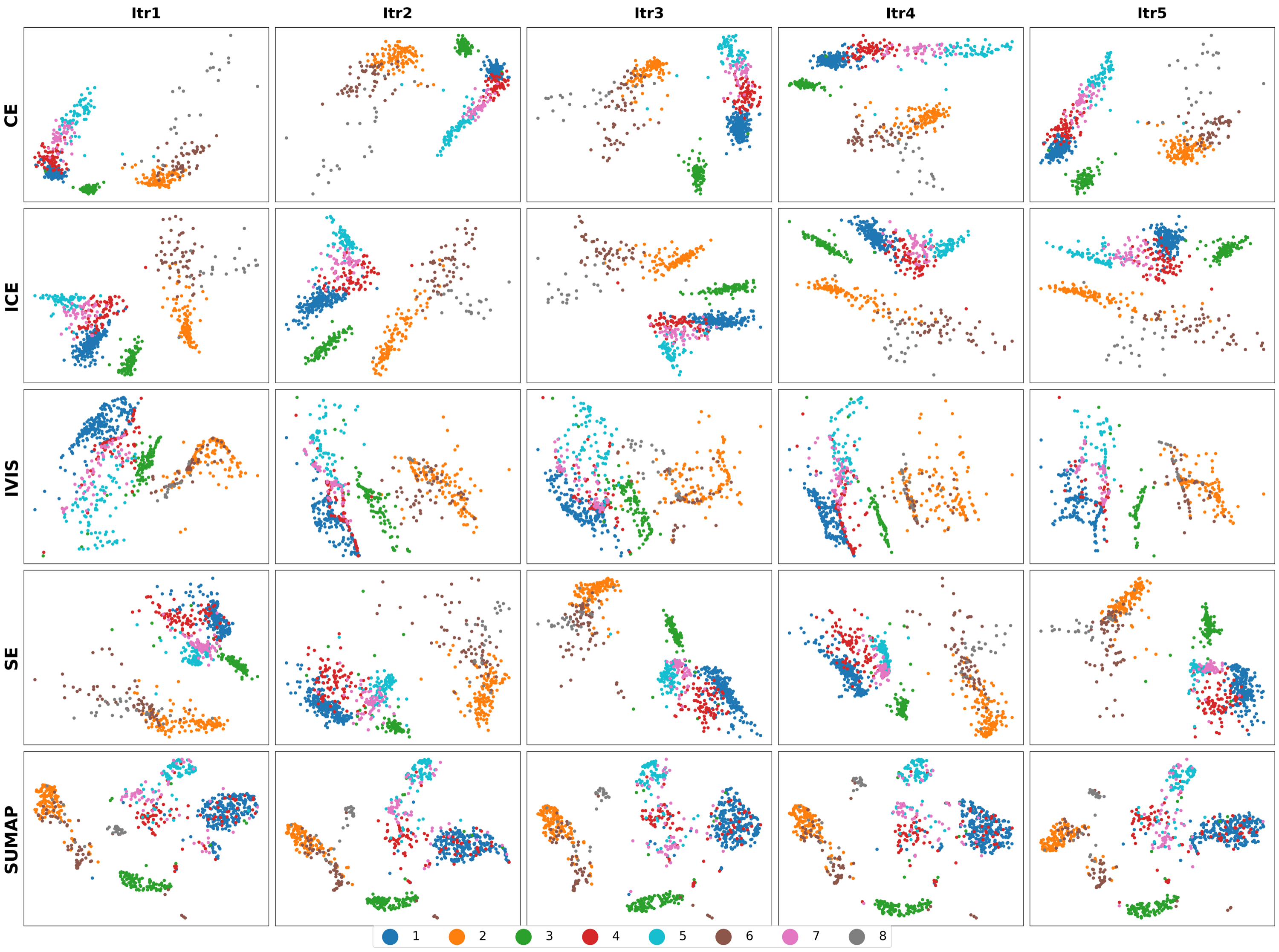}
    \caption{Two-dimensional scatter plots of 10XPBMC test samples across five independent runs for each method. }
    \label{supp:fig:10XPBMC_embding_comparison_2D}
\end{figure}

In Figures~\ref{supp:fig:10XPBMC_embding_comparison_2D} and~\ref{supp:fig:10XPBMC_embding_comparison_M-D_2D}, we present the visualization of the real-world biological dataset 10XPBMC. This dataset contains eight classes, with each sample residing in a high-dimensional space $\mathbb{R}^{16653}$, making it challenging to preserve class relationships in low-dimensional embeddings. We compare both direct two-dimensional embeddings (Figure~\ref{supp:fig:10XPBMC_embding_comparison_2D}) and projections obtained via PCA from the $M$-dimensional space (Figure~\ref{supp:fig:10XPBMC_embding_comparison_M-D_2D}). In most cases, seven out of the eight classes are distinguishable, with classes indexed 6 (brown) and 8 (gray) showing noticeable overlap.

\begin{figure}[!ht]
    \centering
    \includegraphics[width=17.5 cm, height=16.5 cm]{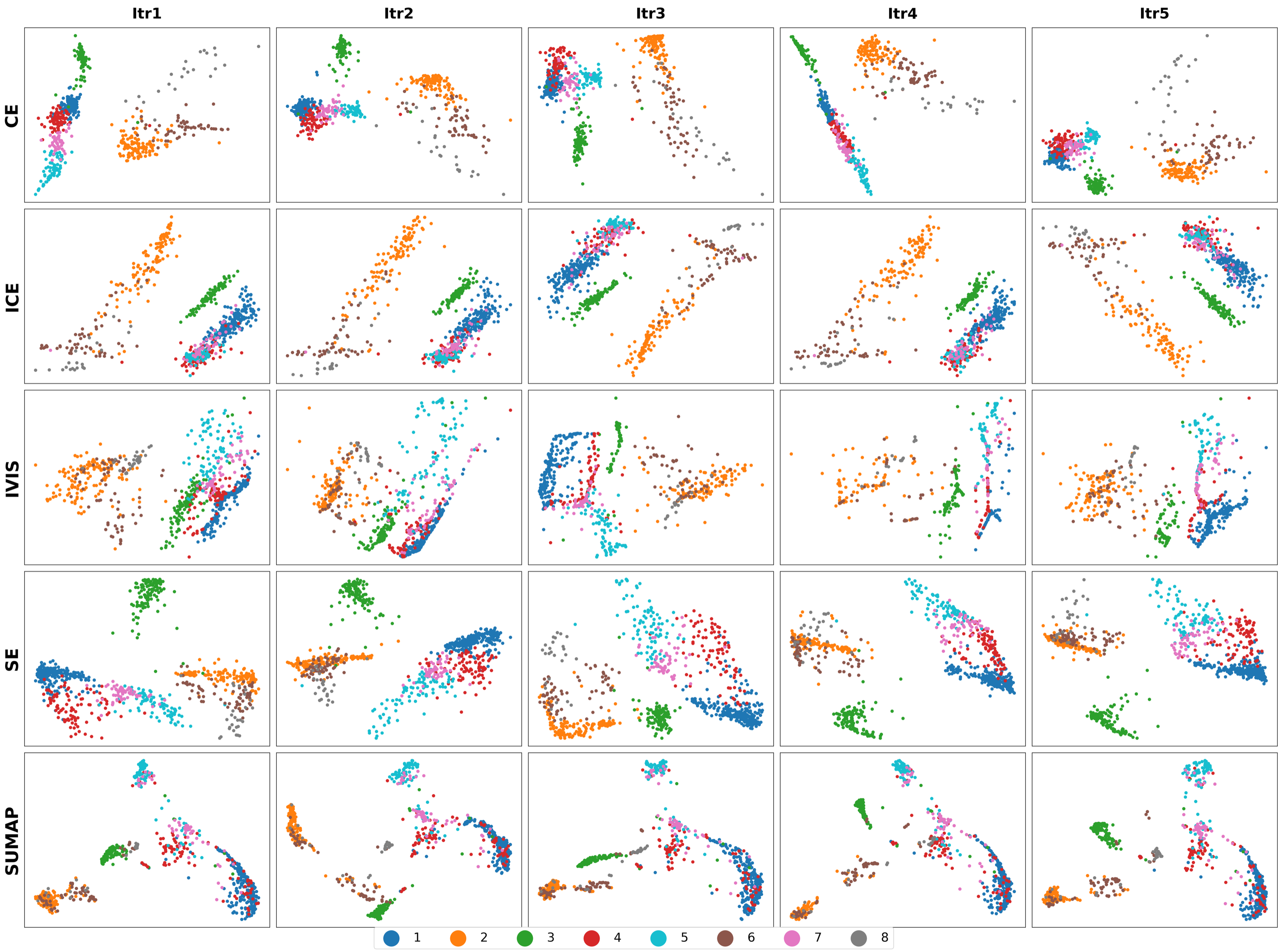}
    \caption{Two-dimensional scatter plots of 10XPBMC test  set across five independent runs for each method. The 2D embeddings are obtained by applying PCA to the corresponding $M$-dimensional representations. }
    \label{supp:fig:10XPBMC_embding_comparison_M-D_2D}
\end{figure}

SUMAP exhibits mixing between class 1 (blue) and class 7 (pink), while IVIS shows clear variability in its global structure across runs. Super-Encoder (SE) produces improved separation compared to its behavior on FMNIST, though some class overlap remains. In contrast, ICE maintains a consistent global arrangement across runs and shows strong agreement between direct 2D embeddings and PCA-based projections from the $M$-dimensional space. In the ICE embedding projected from the $M$-dimensional space, classes 4 (red), 7 (pink), and 5 (light blue) exhibit noticeable mixing, despite being well separated in the direct two-dimensional case. Despite this, the ICE embeddings remain stable and consistent across runs.

\subsection{Computational Efficiency}
\label{supp:sec:supp_efficiency}

The proposed ICE framework removes the decoder required by the original Centroid-Encoder (CE) and therefore employs a substantially simpler encoder-only architecture. To quantify the computational impact of this design, we compared the optimized implementation of ICE with CE on the MNIST dataset and on synthetic datasets of increasing size. The synthetic datasets consist of class-separated Gaussian blobs with 784 features and 10 classes, matching the dimensionality and number of classes of MNIST, with dataset sizes ranging from 5K to 300K samples. All experiments were performed under identical hardware and software environments.

Training efficiency was evaluated using the average training time per epoch, total training time, peak GPU memory allocation, and peak CPU resident memory (RSS). Peak GPU memory was measured using CUDA's built-in peak memory counters by resetting the statistics immediately before training and recording the maximum allocated memory after training. Peak CPU RSS was measured using a background process that sampled the resident memory of the training process every 50 ms throughout optimization.

\begin{table}[!ht]

\adjustbox{max width=\linewidth}{\begin{tabular}{lcccccc}
\hline
Dataset &
Model &
Parameters &
Time/Epoch (s) &
Total Training (s) &
Peak GPU (MB) &
Peak CPU RSS (MB) \\
\hline
MNIST (70K)
& ICE & 1.35M & 1.605 & 16.1 & 279 & 2372 \\
& CE  & 2.70M & 1.560 & 23.4 & 89 & 2668 \\
\hline
Synthetic (5K)
& ICE & 1.35M & 0.240 & 2.4 & 51 & 1976 \\
& CE  & 2.70M & 0.174 & 2.6 & 89 & 1691 \\
\hline
Synthetic (20K)
& ICE & 1.35M & 0.549 & 5.5 & 79 & 2066 \\
& CE  & 2.70M & 0.495 & 7.4 & 89 & 1914 \\
\hline
Synthetic (60K)
& ICE & 1.35M & 1.388 & 13.9 & 236 & 2310 \\
& CE  & 2.70M & 1.347 & 20.2 & 89 & 2517 \\
\hline
Synthetic (150K)
& ICE & 1.35M & 3.276 & 32.8 & 541 & 2853 \\
& CE  & 2.70M & 3.275 & 49.1 & 89 & 3862 \\
\hline
Synthetic (300K)
& ICE & 1.35M & 6.463 & 64.6 & 1037 & 3760 \\
& CE  & 2.70M & 6.499 & 97.5 & 89 & 6118 \\
\hline
\end{tabular}}
\caption{Computational comparison between the optimized implementation of ICE and Centroid-Encoder (CE). Peak GPU memory is measured using CUDA's peak allocation counters during training, while Peak CPU RSS denotes the maximum resident memory occupied by the training process.}
\label{supp:tab:computational_cost}
\end{table}

\noindent\textbf{Discussion:}
The optimized implementation of ICE contains approximately half the number of trainable parameters of CE (1.35M versus 2.70M) because it eliminates the decoder network and directly learns the latent representation. Consequently, ICE consistently reduces the overall training time across all datasets. For example, on the largest synthetic dataset (300K samples), the total training time decreases from 97.5\,s for CE to 64.6\,s for ICE, corresponding to approximately a 34\% reduction. Similar improvements are observed across the remaining benchmark datasets.

The CPU memory usage also favors ICE for larger datasets. Although CE requires slightly less CPU memory for the smallest synthetic datasets, its memory usage increases more rapidly as the dataset size grows. At 300K samples, the peak CPU RSS decreases from 6118\,MB for CE to 3760\,MB for ICE. This behavior is primarily due to CE constructing an additional full-size reconstruction target in CPU memory together with its encoder-decoder architecture, whereas ICE stores only the original training data and employs an encoder-only network.

The GPU memory measurements exhibit a different trend, with ICE requiring more GPU memory than CE on the larger datasets. This behavior is implementation-specific rather than algorithmic. Before mini-batch optimization begins, ICE computes the ambient-space class centroids required by the centroid-isometry objective by temporarily transferring the entire training set to the GPU. Consequently, the peak GPU memory increases with dataset size, even though the subsequent optimization is performed using mini-batches of size 512. In contrast, CE computes the corresponding ambient-space centroids on the CPU, and only mini-batches are transferred to the GPU during training, resulting in an almost constant GPU memory footprint across all dataset sizes. This one-time centroid computation is independent of the proposed optimization improvements and can be removed by computing the ambient-space centroids on the CPU, analogous to the current CE implementation.

Overall, these experiments demonstrate that the encoder-only formulation of ICE substantially reduces computational complexity while preserving the modeling capability of the original framework. The optimized implementation consistently provides faster training with lower CPU memory requirements on large datasets, whereas the observed increase in GPU memory is attributable to a one-time implementation choice for ambient-space centroid computation rather than to the ICE learning objective itself.

\section{Conclusion and Future Work}
\label{conc_future_work}
In this paper, we proposed the Isometric Centroid-Encoder (ICE), a supervised dimensionality reduction framework that combines centroid consistency with explicit preservation of pairwise class-centroid geometry. By introducing a centroid-isometry objective, ICE learns embeddings that preserve the global arrangement of classes while maintaining  local class localization.

A central contribution of this work is the theoretical and empirical characterization of class-centroid geometry in supervised embeddings. We showed that the intrinsic dimensionality required to represent the centroid configuration of $M$ classes is at most $M-1$, explaining why direct two-dimensional embeddings inevitably distort class geometry and exhibit greater run-to-run variability. Extensive analyses, including optimization dynamics, sensitivity studies, computational efficiency, and PCA-based preprocessing, further support the proposed formulation and provide practical guidelines for applying ICE.

To quantify embedding reproducibility, we introduced the Centroid Neighborhood Stability (CNS) metric, which measures the consistency of class-level neighborhood relationships across independent runs. Together with Kruskal Stress, Pearson Correlation, and Procrustes Disparity, CNS provides a complementary perspective on embedding quality by explicitly evaluating stability rather than geometric preservation alone. Our experiments demonstrate that these measures capture different aspects of embedding behavior and should be considered jointly when evaluating supervised dimensionality reduction methods.

The experimental results further show that, although two-dimensional embeddings remain valuable for visualization, faithfully preserving the complete class geometry generally requires a latent dimensionality close to $M-1$. When sufficient latent dimensionality is available, ICE consistently produces stable embeddings that accurately preserve the prescribed centroid geometry across both image and biological datasets.

Future work will investigate relaxed centroid-isometry objectives for extremely low-dimensional embeddings, uncertainty-aware formulations that explicitly model centroid variability, and extensions incorporating sparsity and domain-specific class-distance measures. More broadly, we believe that explicitly modeling geometry together with embedding stability provides a promising direction for developing more interpretable and reliable supervised dimensionality reduction methods.

\vskip 0.2in
\bibliography{mybibfile}

\begin{thebibliography}{39}
\providecommand{\natexlab}[1]{#1}
\providecommand{\url}[1]{\texttt{#1}}
\expandafter\ifx\csname urlstyle\endcsname\relax
  \providecommand{\doi}[1]{doi: #1}\else
  \providecommand{\doi}{doi: \begingroup \urlstyle{rm}\Url}\fi

\bibitem[Artemiou et~al.(2021)Artemiou, Dong, and Shin]{artemiou2021real}
Andreas Artemiou, Yuexiao Dong, and Seung~Jun Shin.
\newblock Real-time sufficient dimension reduction through principal least
  squares support vector machines.
\newblock \emph{Pattern Recognition}, 112:\penalty0 107768, 2021.

\bibitem[Bair et~al.(2006)Bair, Hastie, Paul, and Tibshirani]{bair2006}
Eric Bair, Trevor Hastie, Debashis Paul, and Robert Tibshirani.
\newblock Prediction by supervised principal components.
\newblock \emph{Journal of the American Statistical Association}, 101\penalty0
  (473):\penalty0 119--137, 2006.
\newblock \doi{10.1198/016214505000000628}.
\newblock URL \url{https://doi.org/10.1198/016214505000000628}.

\bibitem[Barshan et~al.(2011)Barshan, Ghodsi, Azimifar, and
  Zolghadri~Jahromi]{Barshan:2011:SPC:1950989.1951174}
Elnaz Barshan, Ali Ghodsi, Zohreh Azimifar, and Mansoor Zolghadri~Jahromi.
\newblock Supervised principal component analysis: Visualization,
  classification and regression on subspaces and submanifolds.
\newblock \emph{Pattern Recogn.}, 44\penalty0 (7):\penalty0 1357--1371, July
  2011.
\newblock ISSN 0031-3203.
\newblock \doi{10.1016/j.patcog.2010.12.015}.
\newblock URL \url{http://dx.doi.org/10.1016/j.patcog.2010.12.015}.

\bibitem[Benesty et~al.(2009)Benesty, Chen, Huang, and
  Cohen]{benesty2009pearson}
Jacob Benesty, Jingdong Chen, Yiteng Huang, and Israel Cohen.
\newblock Pearson correlation coefficient.
\newblock In \emph{Noise reduction in speech processing}, pages 1--4. Springer,
  2009.

\bibitem[B{\"o}hm et~al.(2022)B{\"o}hm, Berens, and Kobak]{bohm2022attraction}
Jan~Niklas B{\"o}hm, Philipp Berens, and Dmitry Kobak.
\newblock Attraction-repulsion spectrum in neighbor embeddings.
\newblock \emph{Journal of Machine Learning Research}, 23\penalty0
  (95):\penalty0 1--32, 2022.

\bibitem[Boutros et~al.(2022)Boutros, Damer, Kirchbuchner, and
  Kuijper]{boutros2022self}
Fadi Boutros, Naser Damer, Florian Kirchbuchner, and Arjan Kuijper.
\newblock Self-restrained triplet loss for accurate masked face recognition.
\newblock \emph{Pattern Recognition}, 124:\penalty0 108473, 2022.

\bibitem[Cai et~al.(2025)Cai, Yang, Zhu, and Nie]{cai2025multi}
Yingjie Cai, Hui Yang, Jianyong Zhu, and Feiping Nie.
\newblock Multi-subspace graph clustering joint dimensionality reduction and
  feature selection.
\newblock \emph{Pattern Recognition}, page 112557, 2025.

\bibitem[Chen et~al.(2025)Chen, Jiang, Jiao, Li, Xu, Wang, and
  Wang]{chen2025fourier}
Qiupu Chen, Hongkui Jiang, Lin Jiao, Zhou Li, Taosheng Xu, Xue Wang, and Rujing
  Wang.
\newblock Fourier-enhanced semi-supervised proxy learning for
  ultra-fine-grained novel class discovery.
\newblock \emph{Pattern Recognition}, page 112369, 2025.

\bibitem[Cheng et~al.(2012)Cheng, Cheng, and Guo]{cheng2012}
Jian Cheng, Can Cheng, and Yi-nan Guo.
\newblock Supervised isomap based on pairwise constraints.
\newblock In \emph{Proceedings of the 19th International Conference on Neural
  Information Processing - Volume Part I}, ICONIP'12, pages 447--454, Berlin,
  Heidelberg, 2012. Springer-Verlag.
\newblock ISBN 978-3-642-34474-9.
\newblock \doi{10.1007/978-3-642-34475-6_54}.
\newblock URL \url{http://dx.doi.org/10.1007/978-3-642-34475-6_54}.

\bibitem[Duan et~al.(2025)Duan, Jiao, Kang, Yang, and Zhou]{duan2025deep}
Chenguang Duan, Yuling Jiao, Lican Kang, Jerry~Zhijian Yang, and Fusheng Zhou.
\newblock Deep contrastive representation learning for supervised tasks.
\newblock \emph{Pattern Recognition}, 161:\penalty0 111309, 2025.

\bibitem[Fisher(1936)]{fisher36lda}
R.~A. Fisher.
\newblock The use of multiple measurements in taxonomic problems.
\newblock \emph{Annals of Eugenics}, 7\penalty0 (7):\penalty0 179--188, 1936.

\bibitem[Ghosh and Kirby(2022)]{ghosh2022supervised}
Tomojit Ghosh and Michael Kirby.
\newblock Supervised dimensionality reduction and visualization using
  centroid-encoder.
\newblock \emph{Journal of Machine Learning Research}, 23\penalty0
  (20):\penalty0 1--34, 2022.

\bibitem[Ghosh and Kirby(2024)]{ghosh2024linear}
Tomojit Ghosh and Michael Kirby.
\newblock Linear centroid encoder for supervised principal component analysis.
\newblock \emph{Pattern Recognition}, 155:\penalty0 110634, 2024.

\bibitem[Globerson and Roweis(2006)]{globerson2006metric}
Amir Globerson and Sam~T Roweis.
\newblock Metric learning by collapsing classes.
\newblock In \emph{Advances in neural information processing systems}, pages
  451--458, 2006.

\bibitem[Goldberger et~al.(2004)Goldberger, Roweis, Hinton, and
  Salakhutdinov]{Goldberger2004NCA}
Jacob Goldberger, Sam Roweis, Geoff Hinton, and Ruslan Salakhutdinov.
\newblock Neighbourhood components analysis.
\newblock In \emph{Proceedings of the 17th International Conference on Neural
  Information Processing Systems}, NIPS'04, pages 513--520, Cambridge, MA, USA,
  2004. MIT Press.
\newblock URL \url{http://dl.acm.org/citation.cfm?id=2976040.2976105}.

\bibitem[G{\"u}rsoy et~al.(2023)G{\"u}rsoy, Haddad, and
  Bothorel]{gursoy2023alignment}
Furkan G{\"u}rsoy, Mounir Haddad, and C{\'e}cile Bothorel.
\newblock Alignment and stability of embeddings: measurement and inference
  improvement.
\newblock \emph{Neurocomputing}, 553:\penalty0 126517, 2023.

\bibitem[He et~al.(2021)He, Wu, Zhao, Zhang, and Tan]{he2021representative}
Zaixing He, Mengtian Wu, Xinyue Zhao, Shuyou Zhang, and Jianrong Tan.
\newblock Representative null space lda for discriminative dimensionality
  reduction.
\newblock \emph{Pattern Recognition}, 111:\penalty0 107664, 2021.

\bibitem[Iwata et~al.(2026)Iwata, Nontani, and Mimura]{iwata2026making}
Kazunori Iwata, Yuki Nontani, and Kazushi Mimura.
\newblock Making clustering methods workable for shapes using the ordinary
  procrustes sum of squares.
\newblock \emph{Pattern Recognition}, 169:\penalty0 111878, 2026.

\bibitem[Kamal and Bae(2022)]{kamal2022super}
Imam~Mustafa Kamal and Hyerim Bae.
\newblock Super-encoder with cooperative autoencoder networks.
\newblock \emph{Pattern Recognition}, 126:\penalty0 108562, 2022.

\bibitem[Kamyab and Azimifar(2024)]{kamyab2024deep}
Shima Kamyab and Zohreh Azimifar.
\newblock Deep-mds framework for recovering the 3d shape of 2d landmarks from a
  single image.
\newblock \emph{Journal of Visual Communication and Image Representation},
  98:\penalty0 104032, 2024.

\bibitem[Kobak and Linderman(2021)]{kobak2021initialization}
Dmitry Kobak and George~C Linderman.
\newblock Initialization is critical for preserving global data structure in
  both t-sne and umap.
\newblock \emph{Nature biotechnology}, 39\penalty0 (2):\penalty0 156--157,
  2021.

\bibitem[Kruskal(1964)]{kruskal1964}
Joseph~B Kruskal.
\newblock Nonmetric multidimensional scaling: a numerical method.
\newblock \emph{Psychometrika}, 29\penalty0 (2):\penalty0 115--129, 1964.

\bibitem[Le et~al.(2018)Le, Patterson, and White]{le2018supervised}
Lei Le, Andrew Patterson, and Martha White.
\newblock Supervised autoencoders: Improving generalization performance with
  unsupervised regularizers.
\newblock \emph{Advances in neural information processing systems},
  31:\penalty0 107--117, 2018.

\bibitem[Makhzani et~al.(2015)Makhzani, Shlens, Jaitly, Goodfellow, and
  Frey]{makhzani2015}
Alireza Makhzani, Jonathon Shlens, Navdeep Jaitly, Ian Goodfellow, and Brendan
  Frey.
\newblock Adversarial autoencoders.
\newblock \emph{arXiv preprint arXiv:1511.05644}, 2015.

\bibitem[McInnes et~al.(2018)McInnes, Healy, and Melville]{mcinnes2018umap}
Leland McInnes, John Healy, and James Melville.
\newblock Umap: Uniform manifold approximation and projection for dimension
  reduction.
\newblock \emph{arXiv preprint arXiv:1802.03426}, 2018.

\bibitem[Meng et~al.(2023)Meng, van~den Elzen, Pezzotti, and
  Vilanova]{meng2023class}
Linhao Meng, Stef van~den Elzen, Nicola Pezzotti, and Anna Vilanova.
\newblock Class-constrained t-sne: combining data features and class
  probabilities.
\newblock \emph{IEEE Transactions on Visualization and Computer Graphics},
  30\penalty0 (1):\penalty0 164--174, 2023.

\bibitem[Min et~al.(2010)Min, Maaten, Yuan, Bonner, and Zhang]{min2010deep}
Martin~R Min, Laurens Maaten, Zineng Yuan, Anthony~J Bonner, and Zhaolei Zhang.
\newblock Deep supervised t-distributed embedding.
\newblock In \emph{Proceedings of the 27th International Conference on Machine
  Learning (ICML-10)}, pages 791--798, 2010.

\bibitem[Raducanu and Dornaika(2012)]{raducanu2012}
B.~Raducanu and F.~Dornaika.
\newblock A supervised non-linear dimensionality reduction approach for
  manifold learning.
\newblock \emph{Pattern Recogn.}, 45\penalty0 (6):\penalty0 2432--2444, June
  2012.
\newblock ISSN 0031-3203.
\newblock \doi{10.1016/j.patcog.2011.12.006}.
\newblock URL \url{http://dx.doi.org/10.1016/j.patcog.2011.12.006}.

\bibitem[Ritchie et~al.(2019)Ritchie, Scott, Balzano, Kessler, and
  Sripada]{ritchie2019supervised}
Alexander Ritchie, Clayton Scott, Laura Balzano, Daniel Kessler, and Chandra~S
  Sripada.
\newblock Supervised principal component analysis via manifold optimization.
\newblock In \emph{2019 IEEE Data Science Workshop (DSW)}, pages 6--10. IEEE,
  2019.

\bibitem[Salakhutdinov and Hinton(2007)]{salakhutdinov2007learning}
Ruslan Salakhutdinov and Geoff Hinton.
\newblock Learning a nonlinear embedding by preserving class neighbourhood
  structure.
\newblock In \emph{Artificial Intelligence and Statistics}, pages 412--419,
  2007.

\bibitem[Szubert et~al.(2019)Szubert, Cole, Monaco, and
  Drozdov]{szubert2019structure}
Benjamin Szubert, Jennifer~E Cole, Claudia Monaco, and Ignat Drozdov.
\newblock Structure-preserving visualisation of high dimensional single-cell
  datasets.
\newblock \emph{Scientific reports}, 9\penalty0 (1):\penalty0 8914, 2019.

\bibitem[Torgerson(1952)]{torgerson1952}
W.~S. Torgerson.
\newblock Multidimensional scaling: I. theory and method.
\newblock \emph{Psychometrika}, 17:\penalty0 401--419, 1952.

\bibitem[van~der Maaten(2009)]{DBLP:journals/jmlr/Maaten09}
Laurens van~der Maaten.
\newblock Learning a parametric embedding by preserving local structure.
\newblock In \emph{{AISTATS}}, volume~5 of \emph{{JMLR} Proceedings}, pages
  384--391. JMLR.org, 2009.

\bibitem[van~der Maaten and Hinton(2008)]{tSNE}
Laurens van~der Maaten and Geoffrey Hinton.
\newblock Visualizing data using {t-SNE}.
\newblock \emph{Journal of Machine Learning Research}, 9:\penalty0 2579--2605,
  2008.
\newblock URL \url{http://www.jmlr.org/papers/v9/vandermaaten08a.html}.

\bibitem[Wei et~al.(2022)Wei, Shen, Sun, Gao, and Ren]{wei2022neighborhood}
Dong Wei, Xiaobo Shen, Quansen Sun, Xizhan Gao, and Zhenwen Ren.
\newblock Neighborhood preserving embedding on grassmann manifold for image-set
  analysis.
\newblock \emph{Pattern Recognition}, 122:\penalty0 108335, 2022.

\bibitem[Witten and Tibshirani(2011)]{witten2011penalized}
Daniela~M Witten and Robert Tibshirani.
\newblock Penalized classification using fisher's linear discriminant.
\newblock \emph{Journal of the Royal Statistical Society: Series B (Statistical
  Methodology)}, 73\penalty0 (5):\penalty0 753--772, 2011.

\bibitem[Wong and Zhao(2012)]{wong2012supervised}
Wai~Keung Wong and HT~Zhao.
\newblock Supervised optimal locality preserving projection.
\newblock \emph{Pattern Recognition}, 45\penalty0 (1):\penalty0 186--197, 2012.

\bibitem[Zahir et~al.(2026)Zahir, Jbilou, and Ratnani]{zahir2026novel}
Alaeddine Zahir, Khalide Jbilou, and Ahmed Ratnani.
\newblock Novel dimensionality reduction approaches for higher-order
  multidimensional data.
\newblock \emph{Pattern Recognition}, 169:\penalty0 111938, 2026.

\bibitem[Zhou et~al.(2023)Zhou, Zhang, Zeng, Zhang, and Fang]{zhou2023latent}
Jianhang Zhou, Qi~Zhang, Shaoning Zeng, Bob Zhang, and Leyuan Fang.
\newblock Latent linear discriminant analysis for feature extraction via
  isometric structural learning.
\newblock \emph{Pattern Recognition}, page 110218, 2023.

\end{thebibliography}

\end{document}